\DeclareMathOperator*{\argmax}{argmax}
\DeclareMathOperator*{\argmin}{argmin}
\newcommand{\figdir}{./figures}
\newtheorem{thm}{Theorem}[section]
\newtheorem{defn}{Definition}[section]
\newcommand{\transpose}[1]{{#1}^{\trans}}
\newcommand{\real}{\mathbb{R}}
\newcommand{\trans}{\top}
\newcommand{\energy}{\ensuremath{\mathcal E}}
\newcommand{\keywordIndex}[1]{#1\index{#1}}
\newcommand{\keyword}[1]{\keywordIndex{#1}}
\newcommand{\keywordDef}[1]{{\bf #1}\index{#1|textbf}}
\newcommand{\entropyrv}{\ensuremath{R}}
\newcommand{\entropyrvv}{\ensuremath{r}}
\newcommand{\submodlongonly}[1]{}
\newcommand{\submodshortonly}[1]{#1}
\newcommand{\submodlongshortalt}[2]{#2}
\newcommand{\submodprob}[1]{\ensuremath{\mathsf{P}\left[#1\right]}}
\newcommand{\submodfun}{\ensuremath{f}}
\newcommand{\submodconcavefun}{\ensuremath{\phi}}
\newcommand{\submodhessianmatrix}{\ensuremath{M}}
\newcommand{\submodrealvectorx}{\ensuremath{x}}
\newcommand{\submodrealvectory}{\ensuremath{y}}
\newcommand{\submodrelaxrealvectordim}{\ensuremath{n}}
\newcommand{\submodrealvectordim}{\ensuremath{d}}
\newcommand{\submodjoin}{\ensuremath{\vee}}
\newcommand{\submodmeet}{\ensuremath{\wedge}}
\newcommand{\submodfgain}[3]{{\ensuremath #1(#2|#3)}}
\newcommand{\submodconvexfun}{\ensuremath{\psi}}
\newcommand{\submodaltfun}{\ensuremath{g}}
\newcommand{\submodaltaltfun}{\ensuremath{h}}
\newcommand{\submodgroundset}{\ensuremath{V}}
\newcommand{\submodgroundsetsize}{\ensuremath{n}}
\newcommand{\submodel}{\ensuremath{v}}
\newcommand{\submodell}{\ensuremath{w}}
\newcommand{\submodelll}{\ensuremath{u}}
\newcommand{\submodellll}{\ensuremath{x}}
\newcommand{\submodelI}{\ensuremath{i}}
\newcommand{\submodelJ}{\ensuremath{j}}
\newcommand{\submodcnstr}{\ensuremath{\mathcal C}}
\newcommand{\submodcardcnstr}{\ensuremath{k}}
\newcommand{\submodknapnstr}{\ensuremath{b}}
\newcommand{\submodmodfun}{\ensuremath{m}}
\newcommand{\submodmodaltfun}{\ensuremath{x}}
\newcommand{\submodmodaltaltfun}{\ensuremath{y}}
\newcommand{\submodmodfunconst}{\ensuremath{c}}
\newcommand{\submodfuncweight}{\ensuremath{\omega}}
\newcommand{\submodmodrealvalue}{\ensuremath{z}}
\newcommand{\submodaltgroundset}{\ensuremath{E}}
\newcommand{\submodcurvature}{\ensuremath{\kappa}}
\newcommand{\submodsetX}{\ensuremath{X}}
\newcommand{\submodsetY}{\ensuremath{Y}}
\newcommand{\submodsetZ}{\ensuremath{Z}}
\newcommand{\submodaltsetX}{\ensuremath{A}}
\newcommand{\submodaltelx}{\ensuremath{a}}
\newcommand{\submodaltsetY}{\ensuremath{B}}
\newcommand{\submodaltsetZ}{\ensuremath{C}}
\newcommand{\submodaltelz}{\ensuremath{c}}
\newcommand{\submodaltsetZZ}{\ensuremath{D}}
\newcommand{\submodsetchain}{\ensuremath{S}}
\newcommand{\submodlexweight}{\ensuremath{\lambda}}
\newcommand{\submodfeatureset}{\ensuremath{U}}
\newcommand{\submodfeatureel}{\ensuremath{u}}
\newcommand{\submodscscscskthreshold}{\ensuremath{\alpha}}
\newcommand{\Set}{{\ensuremath S}\xspace} %
\newcommand{\SetI}{{\ensuremath R}\xspace} %
\newcommand{\SetS}{{\ensuremath S}\xspace} %
\newcommand{\SetT}{{\ensuremath T}\xspace} %
\newcommand{\SetC}{{\ensuremath C}\xspace} %
\newcommand{\submodvv}{{\ensuremath v}\xspace} %
\newcommand{\submodvvx}{{\ensuremath x}\xspace} %
\newcommand{\submodvvy}{{\ensuremath y}\xspace} %
\newcommand{\setplussingle}[2]{\ensuremath{{#1} \cup \{#2\}}}
\newcommand{\setminussingle}[2]{\ensuremath{{#1} \setminus \{#2\}}}
\newcommand{\eqnlabel}[1]{\label{eqn:#1}}
\newcommand{\submodELa}{\ensuremath{a}}
\newcommand{\submodELb}{\ensuremath{b}}
\newcommand{\submodELc}{\ensuremath{c}}
\newcommand{\submodELd}{\ensuremath{d}}
\newcommand{\submodELe}{\ensuremath{e}}
\newcommand{\submodELf}{\ensuremath{f}}
\newcommand{\submodbernoullirv}{\ensuremath{B}}
\newcommand{\submodgraph}{\ensuremath{\mathcal G}}
\newcommand{\submodedgesgroundset}{\submodaltgroundset}
\newcommand{\submodedgessubset}{\submodaltsetX}
\newcommand{\submodedgeel}{\ensuremath{e}}
\newcommand{\submodedgeela}{\ensuremath{a}}
\newcommand{\submodedgeelb}{\ensuremath{b}}
\newcommand{\submodedgeweight}{\ensuremath{w}}
\newcommand{\submodgraphcuthyperparameter}{\ensuremath{\lambda}}
\newcommand{\lovasz}{Lov\'asz}
\newcommand{\submodmatroidindsetI}{\ensuremath{I}}
\newcommand{\submodmatroidsetofsets}{\ensuremath{\mathcal I}}
\newcommand{\submodpartmatroidnumblocks}{\ensuremath{m}}
\newcommand{\submodpartmatroidlimit}{\ensuremath{\ell}}
\newcommand{\submodDPPmatrix}{\ensuremath{\mathbf M}}
\newcommand{\submodcharv}{\ensuremath{\mathbf 1}}
\newcommand{\submodbasepoly}{\ensuremath{B}}
\newcommand{\submodccl}[1]{{\ensuremath \check #1}}
\newcommand{\submodlex}[1]{{\ensuremath \breve #1}}
\newcommand{\submodorder}{{\ensuremath \sigma}}
\newcommand{\submodvecx}{{\ensuremath x}}
\newcommand{\submodvecy}{{\ensuremath y}}
\newcommand{\smcoffee}{{\ensuremath c}}
\newcommand{\smlemon}{{\ensuremath l}}
\newcommand{\smmilk}{{\ensuremath m}}
\newcommand{\smtea}{{\ensuremath t}}
\newcommand{\submodneighborenergyconsant}{{\ensuremath \beta}}
\newcommand{\submodexternalenergyconsant}{{\ensuremath \gamma}}
\renewcommand{\submodlongonly}[1]{#1}
\renewcommand{\submodshortonly}[1]{}
\renewcommand{\submodlongshortalt}[2]{#1}
\title{Submodularity\\ In Machine Learning and Artificial Intelligence}
\author[1,2]{Jeffrey A. Bilmes}
\affil[1]{Department of Electrical and Computer Engineering, University of Washington, Seattle, 98195}
\affil[2]{Department of Computer Science and Engineering, University of Washington, Seattle, 98195}
\date{\today}
\begin{document}

\maketitle

\begin{abstract}
In this manuscript, we offer a gentle review of submodularity and supermodularity and their properties. We offer a plethora of submodular definitions; a full description of a number of example submodular functions and their generalizations; example discrete constraints; a discussion of basic algorithms for maximization, minimization, and other operations; a brief overview of continuous submodular extensions; and some historical applications.  We then turn to how submodularity is useful in machine learning and artificial intelligence. This includes summarization, and we offer a complete account of the differences between and commonalities amongst sketching, coresets, extractive and abstractive summarization in NLP, data distillation and condensation, and data subset selection and feature selection. We discuss a variety of ways to produce a submodular function useful for machine learning, including heuristic hand-crafting, learning or approximately learning a submodular function or aspects thereof, and some advantages of the use of a submodular function as a coreset producer. We discuss submodular combinatorial information functions, and how submodularity is useful for clustering, data partitioning, parallel machine learning, active and semi-supervised learning, probabilistic modeling, and structured norms and loss functions.\footnote{This manuscript is a greatly extended version of a section on submodularity written for ``Probabilistic Machine Learning: Advanced Topics'', by Kevin Murphy, MIT Press, 2023.}
\end{abstract}

\tableofcontents

\submodshortonly{

\section{Submodular optimization}
\label{sec:submodular}
\label{sec:submodularGreedy}

\coauthor{This section was written by Jeff Bilmes.}

}

\submodlongonly{
 \section{Introduction}

Many, if not most, problems in machine learning involve some form of
optimization. The problem of ``learning'' itself can be seen simply as
a problem of optimizing a specified objective over a set of parameters
where the objective is also parameterized by data.  The typical
example is $\min_{\theta} J_\theta({\mathcal D})$ where $\theta$ is a
vector of continuous valued parameters and ${\mathcal D}$ is a sampled
(training) dataset. \submodlongonly{As is widely known, convex optimization
is an important mathematical strategy used in machine learning theory
and by many machine learning applications. Many seemingly disparate
optimization problems such as nonlinear classification
\cite{Schoelkopf02}, clustering \cite{lashkari07convex}, and
dimensionality reduction \cite{weinberger05nonlinear} can be cast as
convex programs. When minimizing a convex loss function, we can be
assured that efficiently finding an optimal solution, even for large
problems, is feasible. Convex optimization is a structural property of
the objective in the space of inherently continuous optimization
problems.}

There are many problems in machine learning, however, that are
inherently discrete, where optimization must occur not over a
continuous but rather a discrete parameter space. In this case,
the objective provides a value over a countable or finite (although
often exponentially large) number of possibilities.  For example,
given a set of features that potentially could be used as the input to
a classifier, how should one choose a useful subset? If there are $m$
features, there are $2^m$ possible subsets, far too many to consider
them all in an exhaustive search. \submodlongonly{This is the standard feature
selection problem in machine learning for which there are many
possible solutions.} Rather than select features, we might wish to
select a subset of the training data. The reasons for doing so abound:
the training data might be too large, or it might be redundant, and
hence it will be unnecessarily costly to use all of it. One option is
to select a random subset, but this can miss corner cases. Perhaps
there is a better way to select a subset that is good in some way,
i.e., that is guaranteed to faithfully represent all of the
data. Another example is human labeling and annotating a
training dataset, a process that is often a costly, time-consuming,
tedious, and error-prone endeavor. Rather than labeling all of the
data, one can perhaps choose a good subset on which having the labels
would be just as good as having the labels on the whole. \submodlongonly{Batch active
learning is an example of this.}

As yet another example: how should one choose the structure, e.g., the
width and depth, of a deep neural network?  Again, this is an
important problem and can be seen as a generalization of feature
selection since we are considering not only all possible subsets of
the input feature set, but all sets of possible hidden units in each
layer of a DNN, and all possible number of layers, a countably
infinite space to consider, in theory.  Despite the difficulty
presented by the combinatorial exposition of possibilities,
researchers have endeavored to provide useful solutions, examples
being all of the work on recent AutoML and NAS methods.  The lottery ticket
hypothesis, stating that any neural network can be approximated
sufficiently well by selecting the right subnetwork (i.e., a subset of
hidden units and weights) out of a much larger random network, is
enticing, but how could one master the finite but still combinatorial
explosion of possible subnetworks?

To offer still another example of discrete optimization in ML,
consider the case when one wishes to compute MAP inference in a
probabilistic model: $\argmax_{y} p(y|x)$ where $y$ is a multivariate
integer-valued variable. Performing such inference naively is clearly,
in the size of $y$, an exponential cost prospect.  How can one do this
efficiently?  This problem is essentially the same as the semantic
image segmentation problem described in other chapters of the book,
where $x$ is a set of pixels and $y$ is a set of integer labels one
for each pixel. One way to tame the complexity of this problem is to
assume that certain factorization properties are true of $p(y|x)$
which allows the use of the distributed law, and this is one of the
purposes of graphical models (i.e., to describe what factorization
properties are active). This allows exact or (more often) approximate
solutions to be feasibly and practically computed. However, what if
the factorization properties assumed of $p(y|x)$ that enable tractable
exact or approximate computation are inaccurate or undesirable? Does
this leave us with no recourse to proceed?

Lastly, while it is known that continuous representations of discrete
phenomena (such as words, sentences, or even knowledge representation
and logic clauses in the domain of NLP) are useful and there has
been much recent research on this topic, at some point we must
eventually make a final crisp decision. For example, in a speech
recognition system, we need a final single word string hypothesis
corresponding to what was spoken. In an autonomous vehicle, we need at
some point to decide whether to stop, go, turn left, right, or to
continue forward. In financial investing, we need to decide to buy,
sell, or do nothing.  These are all discrete decisions that any
machine learning system must eventually deal with.

Regardless of the approach taken to solve these problems, we see how
they all require a form of discrete optimization for their solution.
Many other machine learning problems are also inherently and
fundamentally discrete.  It is thus useful to study machine learning
structures that themselves are inherently discrete. It is also
important for these structures to be easily applicable to many
different problems, and also that are amenable to efficient
combinatorial optimization strategies, even when the solution space
may be exponential in the problem size. Submodular functions fit this
bill well.
}

\submodshortonly{This section provides a brief overview
  of submodularity in machine learning.\footnote{A greatly extended version of the
    material in this section may be found
    at~\cite{bilmes-submod-and-ml-2022}.}}
Submodularity has an extremely simple definition. However, the
``simplest things are often the most complicated to understand
fully''~\cite{samuelson1974complementarity}, and while submodularity
has been studied extensively over the years, it continues
to yield new and surprising insights and properties, some of which are
extremely relevant to data science, machine learning, and artificial
intelligence.
A submodular function operates on subsets of some finite
\textit{ground set}, $\submodgroundset$. Finding a guaranteed good
subset of $\submodgroundset$ would ordinarily require an amount of
computation exponential in the size of $\submodgroundset$.  Submodular
functions, however, have certain properties that make optimization
either tractable or approximable where otherwise neither would be
possible.  The properties are quite natural, however, so submodular
functions are both flexible and widely applicable to real
problems. Submodularity involves an intuitive and natural diminishing
returns property, stating that adding an element to a smaller set
helps more than adding it to a larger set. Like convexity,
submodularity allows one to efficiently find provably optimal or
near-optimal solutions. In contrast to convexity, however, where
little regarding maximization is guaranteed, submodular functions can
be both minimized and (approximately) maximized.  Submodular
maximization and minimization, however, require very different
algorithmic solutions and have quite different applications.  It is
sometimes said that submodular functions are a discrete form of
convexity. This is not quite true, as submodular functions are like
both convex and concave functions, but also have properties that are
similar simultaneously to both convex and concave functions at the
same time, but then some properties of submodularity are neither like
convexity nor like concavity. Convexity and concavity, for example, can be
conveyed even as univariate functions.  This is impossible for
submodularity, as submodular functions are defined based only on the
response of the function to changes amongst different variables in a
multidimensional discrete space\submodlongonly{ (this is discussed a bit
more in the context of DR-submodularity below)}.

\submodlongonly{
There is a growing interest in submodularity in machine learning, as
witnessed by the increasing number of such articles seen in recent
years at machine learning conferences, and some of these will be
surveyed in Section~\ref{sec:appl-subm-mach}. Before we do, we start
the next section by describing submodularity in as simple a way as
possible.
}

\submodlongshortalt{
  \section{Intuition, Examples, and Background}
}{
  \subsection{Intuition, Examples, and Background}
}

Let us define a \emph{set function}
$\submodfun: 2^\submodgroundset \to \real$ as one that assigns a value
to every subset of $\submodgroundset$. The notation
$2^\submodgroundset$ is the power set of $\submodgroundset$, and has
size $2^{|\submodgroundset|}$ which means that $\submodfun$ lives in
space $\real^{2^\submodgroundsetsize}$ --- i.e., since there are
$2^\submodgroundsetsize$ possible subsets of $\submodgroundset$,
$\submodfun$ can return $2^\submodgroundsetsize$ distinct values.  We
use the notation $\submodsetX + \submodel$ as shorthand for
$\submodsetX \cup \{ \submodel \}$.  Also, the value of an element in
a given context is so widely used a concept, we have a special
notation for it --- the incremental value \emph{gain} of $\submodel$
in the context if $\submodsetX$ is defined as
$\submodfun(\submodel|\submodsetX) = \submodfun(\submodsetX +
\submodel) - \submodfun(\submodsetX)$. Thus, while
$\submodfun(\submodel)$ is the value of element $\submodel$,
$\submodfun(\submodel|\submodsetX)$ is the value of element
$\submodel$ if you already have $\submodsetX$.  We also define the
gain of set $\submodsetX$ in the context of $\submodsetY$ as
$\submodfun(\submodsetX|\submodsetY) = \submodfun(\submodsetX \cup
\submodsetY) - \submodfun(\submodsetY)$.

\submodlongonly{
\subsubsection{The Value of Friendship}

To introduce submodularity, we begin with the value of friendship.
Let's say that $\submodfun(\submodsetX)$ measures the value of a set
$\submodsetX$ of friends and $\submodel$ is a friend that you do not yet have.
Then $\submodfun(\submodsetX + \submodel)$ is the value once you have
gained friend $\submodel$ and
$\submodfun(\submodel|\submodsetX) = \submodfun(\submodsetX+\submodel)
- \submodfun(\submodsetX)$ is the incremental value of gaining friend
$\submodel$ if you already have the set of friends
$\submodsetX$. Submodularity says that the incremental value of any
friend $\submodel$ decreases the more friends you acquire. So if you
increase your friend set from $\submodsetX$ to
$\submodsetY \supset \submodsetX$, then the incremental value of
friend $\submodel$ cannot increase. If you have a friend Taylor, then
Taylor would be much more valuable to you as your first friend than
they would be if you already had Emerson, Tatum, Bellamy, Leon, and
Tatum as friends.  In general, submodular functions are those that
satisfy the property of diminishing returns. This means that the
incremental ``value'' or ``gain'' of $\submodel$ decreases as the
context in which $\submodel$ is considered grows from $\submodsetX$ to
$\submodsetY$.
}

\subsubsection{Coffee, Lemon, Milk, Tea}
\label{sec:coffee-lemon-milk}

\submodlongonly{As our next example, we}\submodshortonly{As a simple
  example,} will explore the manner in which the value of everyday
items may interact and combine, namely \keyword{coffee, lemon, milk,
  and tea}.  Consider the value relationships amongst the four items
coffee ($\smcoffee$), lemon ($\smlemon$), milk ($\smmilk$), and tea
($\smtea$) as shown in Figure~\ref{fig:coffee_tea_lemon_milk}. Suppose
you just woke up, and there is a function
$\submodfun : 2^\submodgroundset \to \real$ that provides the average
valuation for any subset of the items in $\submodgroundset$ where
$\submodgroundset = \{\smcoffee, \smlemon,\smmilk,\smtea\}$. You can
think of this function as giving the average price a typical person
would be willing to pay for any subset of items. Since nothing should
cost nothing, we would expect that $\submodfun(\emptyset) = 0$.
Clearly, one needs either coffee or tea in the morning, so
$\submodfun(\smcoffee) > 0$ and $\submodfun(\smtea) > 0$, and coffee
is usually more expensive than tea, so that
$\submodfun(\smcoffee) > \submodfun(\smtea)$ pound for pound. Also
more items cost more, so that, for example,
$0 < \submodfun(\smcoffee) < \submodfun(\smcoffee,\smmilk) <
\submodfun(\smcoffee,\smmilk,\smtea) <
\submodfun(\smcoffee,\smlemon,\smmilk,\smtea)$. Thus, the function
$\submodfun$ is strictly \emph{monotone}, or
$\submodfun(\submodsetX) < \submodfun(\submodsetY)$ whenever
$\submodsetX \subset \submodsetY$.

\begin{figure}
\centering
\includegraphics[page=2,width=0.95\textwidth]{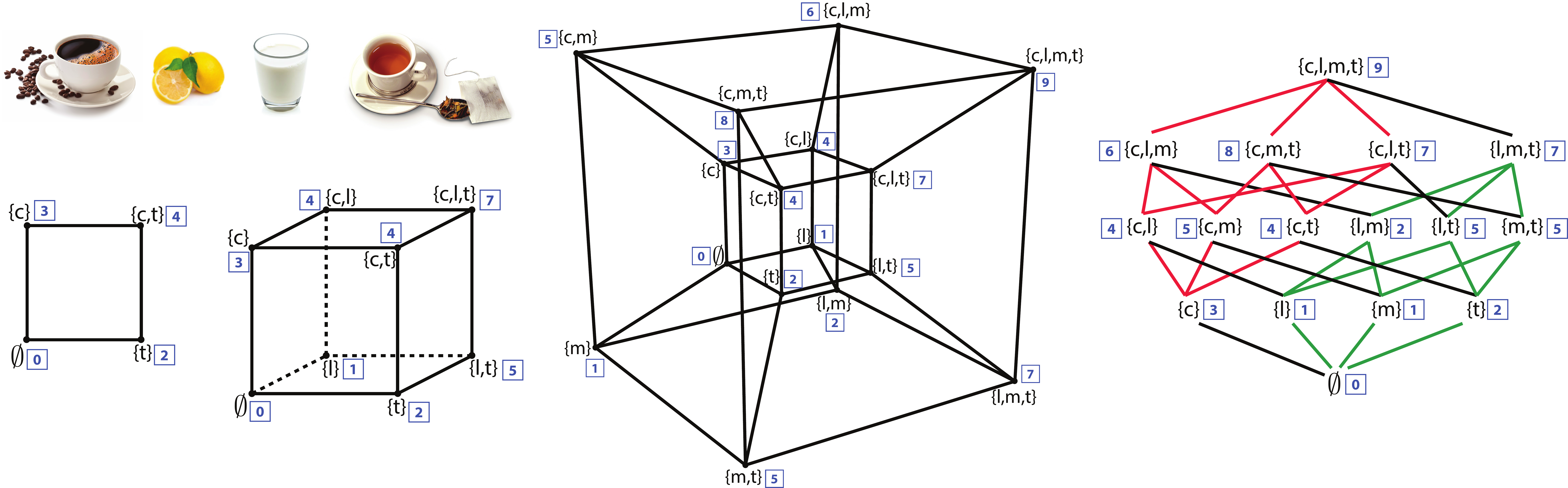}
\caption{
  The value relationships between coffee (\smcoffee), lemon (\smlemon),
  milk (\smmilk), and tea (\smtea).
  On the left, we first see a simple square showing the relationships
  between coffee and tea and see that they are substitutive (or submodular).
  In this, and all of the shapes, the vertex label set is indicated in curly braces
  and the value at that vertex is a blue integer in a box.
  We next see a three-dimensional cube that adds lemon to the coffee and tea set.
  We see that tea and lemon are complementary (supermodular), but
  coffee and lemon are additive (modular, or independent). We
  next see a four-dimensional hypercube (tesseract) showing all of the
  value relationships described in the text. The four-dimensional hypercube
  is also shown as a lattice (on the right) showing the same relationships
  as well as two (red and green, also shown in the tesseract) of the eight three-dimensional cubes contained within.
}
\label{fig:coffee_tea_lemon_milk}
\end{figure}

The next thing we note is that coffee and tea may substitute for each
other -- they both have the same effect, waking you up.  \submodlongonly{Once you have
tea you do not need as much coffee, and vice versa.}  They are
mutually redundant, and they decrease each other's value since once you have
had a cup of coffee, a cup of tea is less necessary and less
desirable. Thus,
$\submodfun(\smcoffee,\smtea) < \submodfun(\smcoffee) +
\submodfun(\smtea)$, which is known as a \emph{subadditive}
relationship, the whole is less than the sum of the parts. On the
other hand, some items complement each other. For example, milk and
coffee are better combined together than when both are considered in
isolation, or
$\submodfun(\smmilk,\smcoffee) > \submodfun(\smmilk) +
\submodfun(\smcoffee)$, a \emph{superadditive} relationship, the whole
is more than the sum of the parts. \submodlongonly{ A similar superadditive relationship
exists between milk and tea
($\submodfun(\smmilk,\smtea) > \submodfun(\smmilk) +
\submodfun(\smtea)$) as well as lemon and tea
($\submodfun(\smlemon,\smtea) > \submodfun(\smlemon) +
\submodfun(\smtea)$).} A few of the items do not affect each other's
price. For example, lemon and milk cost the same together as apart,
so
$\submodfun(\smlemon,\smmilk) = \submodfun(\smlemon) +
\submodfun(\smmilk)$, an \emph{additive} or \emph{modular}
relationship --- such a relationship is perhaps midway between a
subadditive and a superadditive relationship and can be seen as a
form of independence.  \submodlongonly{ A similar thing is true for lemon and coffee
$\submodfun(\smlemon,\smcoffee) = \submodfun(\smlemon) +
\submodfun(\smcoffee)$.}

Things become more interesting when we consider three or more items
together. For example, once you have tea, lemon becomes less valuable
when you acquire milk since there might be those that prefer milk to
lemon in their tea.  Similarly, milk becomes less valuable once you
have acquired lemon since there are those who prefer lemon in their
tea to milk. So, once you have tea, lemon and milk are substitutive,
you would never use both as the lemon would only curdle the milk.
These are \emph{submodular} relationships,
$\submodfun(\smlemon|\smmilk,\smtea) < \submodfun(\smlemon|\smtea)$
and $\submodfun(\smmilk|\smlemon,\smtea) < \submodfun(\smmilk|\smtea)$
each of which implies that
$\submodfun(\smlemon,\smtea) + \submodfun(\smmilk,\smtea) >
\submodfun(\smlemon,\smmilk,\smtea) + \submodfun(\smtea)$.  The value
of lemon (respectively milk) with tea decreases in the larger context
of having milk (respectively lemon) with tea, typical of submodular
relationships.

\submodlongonly{
A
submodular relationship also exists over coffee, milk, and tea.  Once
we have milk, then the presence of tea decreases the value of coffee
again since coffee and tea are substitutes perhaps even more so with
milk. This means that
$\submodfun(\smcoffee|\smtea,\smmilk) < \submodfun(\smcoffee|\smmilk)$
and
$\submodfun(\smtea|\smcoffee,\smmilk) < \submodfun(\smtea|\smmilk)$,
implying
$\submodfun(\smmilk,\smtea) + \submodfun(\smmilk,\smcoffee) >
\submodfun(\smmilk,\smcoffee,\smtea) + \submodfun(\smmilk)$.
}

\submodlongonly{ Another
submodular relationship exists over coffee, lemon, and tea. Once you
have lemon, then the presence of tea again reduces the value of
coffee. Lemon alone does not affect the price of coffee, but tea and
lemon perhaps renders coffee undesirable. The relationships are
$\submodfun(\smcoffee|\smtea,\smlemon) <
\submodfun(\smcoffee|\smlemon)$ and
$\submodfun(\smtea|\smcoffee,\smlemon) < \submodfun(\smtea|\smlemon)$,
implying
$\submodfun(\smlemon,\smtea) + \submodfun(\smcoffee,\smlemon) >
\submodfun(\smcoffee,\smlemon,\smtea) + \submodfun(\smlemon)$.
}

Not all of the items are in a submodular relationship, as sometimes
the presence of an item can increase the value of another item.
For example, once you have milk, then tea becomes still more valuable
when you also acquire lemon, since tea with the choice of either lemon
or milk is more valuable than tea with the option only of milk.
Similarly, once you have milk, lemon becomes more valuable when you
acquire tea, since lemon with milk alone is not nearly as valuable as
lemon with tea, even if milk is at hand. This means that
$\submodfun(\smtea|\smlemon,\smmilk) > \submodfun(\smtea|\smmilk)$ and
$\submodfun(\smlemon|\smtea,\smmilk) > \submodfun(\smlemon|\smmilk)$
implying
$\submodfun(\smlemon,\smmilk) + \submodfun(\smmilk,\smtea) <
\submodfun(\smlemon,\smmilk,\smtea) + \submodfun(\smmilk)$.  These are
known as \emph{supermodular} relationships, where the value increases as the
context increases.

\submodlongonly{
Another supermodular relationship exists over
lemon, milk, and tea. The value of milk increases if you have both tea
and lemon compared to if you have only lemon, and the value of tea
increases if you have both milk and lemon over if you just have lemon,
or
$\submodfun(\smmilk|\smlemon,\smtea) > \submodfun(\smmilk|\smlemon)$
and
$\submodfun(\smlemon|\smmilk,\smtea) > \submodfun(\smtea|\smlemon)$
implying
$\submodfun(\smlemon,\smtea) + \submodfun(\smlemon,\smmilk) <
\submodfun(\smlemon,\smmilk,\smtea) + \submodfun(\smlemon)$.  Other
supermodular relationships exist over lemon, milk, and coffee
($\submodfun(\smcoffee,\smlemon) + \submodfun(\smlemon,\smmilk) <
\submodfun(\smcoffee,\smlemon,\smmilk) + \submodfun(\smlemon)$), over
coffee, milk, and tea
($\submodfun(\smcoffee,\smtea) + \submodfun(\smcoffee,\smmilk) <
\submodfun(\smcoffee,\smmilk,\smtea) + \submodfun(\smcoffee)$ and
$\submodfun(\smcoffee,\smtea) + \submodfun(\smmilk,\smtea) <
\submodfun(\smcoffee,\smmilk,\smtea) + \submodfun(\smcoffee)$), and
also over coffee, lemon, and tea
($\submodfun(\smcoffee,\smtea) + \submodfun(\smcoffee,\smlemon) <
\submodfun(\smcoffee,\smlemon,\smtea) + \submodfun(\smcoffee)$).
}

\submodlongonly {

Some of the triples have a relationship that lies in between
submodularity and supermodularity. For example, if you start with
milk, then the value of lemon should be no different than if you had
started with both milk and coffee, since lemon is of no more use with
coffee and milk as it is with just milk. This means that
$\submodfun(\smlemon|\smcoffee,\smmilk) =
\submodfun(\smlemon|\smmilk)$. Similarity, lemon does not improve
coffee's value over having just milk, or
$\submodfun(\smcoffee|\smlemon,\smmilk) =
\submodfun(\smcoffee|\smmilk)$. These both imply
$\submodfun(\smcoffee,\smmilk) + \submodfun(\smlemon,\smmilk) =
\submodfun(\smcoffee,\smlemon,\smmilk) + \submodfun(\smmilk)$ which is
known as a \emph{modular} (or \emph{additive}) relationship. These are
actually conditionally modular relationships since we are measuring
the value relationships between lemon and coffee in the presence of
milk.

In the current case, as mentioned above, lemon and coffee are
already additive when considered alone. In other cases, items that are
additive alone, e.g., milk and lemon, might have interactive values in
a context, e.g., milk and lemon when considered in the context of tea
where the relationship becomes substitutive or submodular.
This is a particular type of v-structure that is representable
with Bayesian networks, depicted as $\smmilk \rightarrow \smtea \leftarrow \smlemon$,
where $\smmilk$ and $\smlemon$ are unconditionally
independent but conditionally dependent. Here, however,
the dependence need not be statistical and instead
can be combinatorial.
Another conditionally
modular relationship is coffee and lemon in the context of tea, where
we would expect that
$\submodfun(\smcoffee,\smtea) + \submodfun(\smlemon,\smtea) =
\submodfun(\smcoffee,\smlemon,\smtea) + \submodfun(\smtea)$.

}

We have asked for a set of relationships amongst various subsets of
the four items
$\submodgroundset = \{\smcoffee, \smlemon,\smmilk,\smtea\}$, Is there
a function that offers a value to each
$\submodsetX \subseteq \submodgroundset$ that satisfies all of the
above relationships?  Figure~\ref{fig:coffee_tea_lemon_milk} in fact
shows such a function.  On the left, we see a two-dimensional square
whose vertices indicate the values over subsets of
$\{ \smcoffee, \smtea \}$ and we can quickly verify that the sum of
the blue boxes on north-west (corresponding to
$\submodfun(\{\smcoffee\})$) and south-east corners (corresponding to
$\submodfun(\{\smtea\})$) is greater than the sum of the north-east
and south-west corners, expressing the required submodular
relationship.  Next on the right is a three-dimensional cube that adds
the relationship with lemon. Now we have six squares, and we see that
the values at each of the vertices all satisfy the above requirements
--- we verify this by considering the valuations at the four corners
of every one of the six faces of the cube.  Since
$|\submodgroundset|=4$, we need a four-dimensional hypercube to show
all values, and this may be shown in two ways. It is first shown as a
tesseract, a well-known three-dimensional projection of a
four-dimensional hypercube. In the figure, all vertices are labeled
both with subsets of $\submodgroundset$ as well as the function value
$\submodfun(\submodsetX)$ as the blue number in a
box.\submodlongonly{ The inner three-dimensional cube in the tesseract
  is the same as the cube second from left, and the outer
  three-dimensional cube in the tesseract includes the interactions
  with milk. Overall, there are eight three-dimensional cubes within
  the tesseract, each with six faces. } The figure on the right shows a
\emph{lattice} version of the four-dimensional hypercube, where
corresponding three-dimensional cubes are shown in green and
red. \submodlongonly{It can be verified that the values at the
  vertices, even though they consist only of integers from 0 through
  9, satisfy all of the aforementioned subadditive, superadditive,
  submodular, supermodular, and modular relationships amongst subsets
  of items.}

We thus see that a set function is defined for all subsets of a ground
set, and that they correspond to valuations at all vertices of the
hypercube. For the particular function over valuations of subsets of
coffee, lemon, milk, and tea, we have seen submodular, supermodular,
and modular relationships all in one function. Therefore, the overall
function $\submodfun$ defined in
Figure~\ref{fig:coffee_tea_lemon_milk} is neither submodular,
supermodular, nor modular. For combinatorial auctions, there is often
a desire to have a diversity of such manners of
relationships~\cite{lehmann2006combinatorial} --- representation of
these relationships can be handled by a difference of submodular
functions~\cite{narasimhan2005-subsup,iyer2012-min-diff-sub-funcs} or
a sum of a submodular and supermodular
function~\cite{bai-bp-functions-2018} (further described below).
\submodlongonly{ Nevertheless, it is useful to demonstrate submodular
  and supermodular relationships amongst particular subsets of
  everyday items.}  In machine learning, however, most of the time we
are interested in functions that are submodular (or modular, or
supermodular) everywhere\submodlongonly{, over all subsets.  The next example will
describe a famous example of such a pure submodular function}.

\submodlongonly{
\subsubsection{Entropy}
\label{sec:entropy}

Our next example of submodularity uses a concept that is likely
familiar to many machine learning scientists, namely entropy, which
measures the uncertainty or the information in a set of random
variables.  Suppose we are given a set of $\submodgroundsetsize$
discrete random variables $\entropyrv_1$, $\entropyrv_2$, \dots,
$\entropyrv_n$.  The set $\submodgroundset = \{1,2,\dots,n\}$ is an
index set, and any subset $\submodaltsetX \subseteq \submodgroundset$
of integers can be used to index a subset of random variables. Thus,
if
$\submodaltsetX = \{i_1, i_2, \dots, i_k\} \subseteq
\submodgroundset$, then
$\entropyrv_A = \{ \entropyrv_{i_1}, \entropyrv_{i_2}, \dots,
\entropyrv_{i_k} \}$.  Given a probability distribution $p$ over
$\entropyrv_\submodgroundset$, the entropy of a set $\submodaltsetX$
of random variables is defined as
$H(\entropyrv_\submodaltsetX) = - \sum_{x_\submodaltsetX}
p(\entropyrvv_\submodaltsetX) \log p(\entropyrvv_\submodaltsetX)$.  We
note that the number of terms in the summation is exponential in the
set size $|\submodaltsetX|$. There are a number of properties that are
known to be true of the entropy function, and they were established by
Shannon back in 1948. The first is that the entropy is
non-negative, i.e., $H(\entropyrv_\submodaltsetX) \geq 0$ for any
$\submodaltsetX$. The second
is monotonicity, i.e., for sets
$\submodaltsetX \subseteq \submodaltsetY$, we have that
$H(\entropyrv_\submodaltsetX) \leq H(\entropyrv_\submodaltsetY)$. This
means that more random variables cannot possess less information. The
third, and perhaps most important property of entropy is equivalent to
submodularity. The conditional mutual information between two sets
$\submodaltsetX$ and $\submodaltsetY$ of random variables given a
third set $\submodaltsetZ$, all three of which are mutually disjoint,
is defined as
$I(\entropyrv_{\submodaltsetX} ; \entropyrv_{\submodaltsetY} |
\entropyrv_\submodaltsetZ) =
\sum_{\entropyrvv_\submodaltsetX,\entropyrvv_b,\entropyrvv_\submodaltsetZ}
p(\entropyrvv_\submodaltsetX,\entropyrvv_\submodaltsetY,\entropyrvv_\submodaltsetZ)
\log \frac{
  p(\entropyrvv_\submodaltsetX,\entropyrvv_\submodaltsetY,\entropyrvv_\submodaltsetZ)p(\entropyrvv_\submodaltsetZ)
} { p(\entropyrvv_\submodaltsetX,\entropyrvv_\submodaltsetZ)
  p(\entropyrvv_\submodaltsetY,\entropyrvv_\submodaltsetZ) }$.  The
conditional mutual information can be expressed in terms of the
entropy,
$I(\entropyrv_\submodaltsetX; \entropyrv_\submodaltsetY |
\entropyrv_\submodaltsetZ) =
H(\entropyrv_\submodaltsetX,\entropyrv_\submodaltsetZ) +
H(\entropyrv_\submodaltsetY,\entropyrv_\submodaltsetZ) -
H(\entropyrv_\submodaltsetX,\entropyrv_\submodaltsetY,\entropyrv_\submodaltsetZ)
- H(\entropyrv_\submodaltsetZ)$. A well-known property is that the
conditional mutual information is non-negative, i.e.,
$I(\entropyrv_\submodaltsetX; \entropyrv_\submodaltsetY |
\entropyrv_\submodaltsetZ) \geq 0$.\submodlongonly{

We can view this inequality in a slightly different way.  First, for
any set $\submodsetX \subseteq \submodgroundset$, we define entropy as
a set function $\submodfun(\submodsetX) = H(\entropyrv_\submodsetX)$,
which gives the entropy of the set of random variables indexed by
$\submodsetX$. Then if we set
$\submodsetX = \submodaltsetX \cup \submodaltsetZ$,
$\submodsetY = \submodaltsetY \cup \submodaltsetZ$, then the above
inequality about the non-negativity of mutual information becomes
$\submodfun(\submodsetX) + \submodfun(\submodsetY) -
\submodfun(\submodsetX \cup \submodsetY) - \submodfun(\submodsetX \cap
\submodsetY) \geq 0$, which is exactly the condition for the
submodularity of the set function (see below). } Thus, the Shannon
entropy function is submodular when seen as a function of subsets of
random variables. Statement equivalent to the submodularity of the
Shannon entropy function was implied back
in Shannon's paper~\cite{shannon1948}
and made explicit in~\cite{mcgill1954multivariate}.
These inequalities therefore became known, in the
information theory community, as the
Shannon inequalities~\cite{Yeung91,zhang1997non,zhang1998characterization}.
}

\subsection{Submodular Basic Definitions}

For a function to be submodular, it must satisfy the submodular
relationship for all subsets. We arrive at the following definition.
\begin{defn}[\keywordDef{Submodular} Function]
  A given set function $\submodfun : 2^\submodgroundset \to \real$ is submodular
  if for all $\submodsetX,\submodsetY \subseteq \submodgroundset$, we have the following inequality:
  \begin{equation}
    \submodfun(\submodsetX) + \submodfun(\submodsetY) \geq \submodfun(\submodsetX \cup \submodsetY) + \submodfun(\submodsetX \cap \submodsetY)
    \label{eqn:main_submodular_inequality}
  \end{equation}
  \label{defn:main_submodular_def}
  \vspace{-1.5\baselineskip}
\end{defn}
There are also many other equivalent definitions of submodularity\submodshortonly{~\cite{bilmes-submod-and-ml-2022}} some
of which are more intuitive and easier to understand.  For example, submodular
functions are those set functions that satisfy the property of
\keywordIndex{diminishing returns}. If we think of a function $\submodfun(\submodsetX)$ as measuring the
value of a set $\submodsetX$ that is a subset of a larger set of data items
$\submodsetX \subseteq \submodgroundset$, then the submodular property means that the
incremental ``value'' of adding a data item $\submodel$ to set $\submodsetX$ decreases
as the size of $\submodsetX$ grows. This gives us a second classic definition of
submodularity.
\begin{defn}[Submodular Function via Diminishing Returns]
  A given set function $\submodfun : 2^\submodgroundset \to \real$ is
  submodular if for all
  $\submodsetX,\submodsetY \subseteq \submodgroundset$, where
  $\submodsetX \subseteq \submodsetY$ and for all
  $\submodel \notin \submodsetY$, we have the following inequality:
  \begin{equation}
    \submodfun(\submodsetX + \submodel) - \submodfun(\submodsetX) \geq \submodfun(\submodsetY + \submodel) - \submodfun(\submodsetY)%
    \label{eqn:diminishing_returns_submodular_inequality}%
  \end{equation}\label{defn:diminishing_returns_submodular_def}%
  \vspace{-1.5\baselineskip}
\end{defn}%
The property that the incremental value of lemon with tea
is less than the incremental value of lemon once milk is already in
the tea is equivalent to Equation~\ref{eqn:main_submodular_inequality}
if we set $\submodsetX = \{ \smmilk,\smtea\}$ and
$\submodsetY = \{ \smlemon, \smtea\}$ (i.e.
$\submodfun(\smmilk,\smtea) + \submodfun(\smlemon,\smtea) >
\submodfun(\smlemon,\smmilk,\smtea) + \submodfun(\smtea)$).  It is
naturally also equivalent to
Equation~\ref{eqn:diminishing_returns_submodular_inequality} if we set
$\submodsetX = \{ \smtea \}$, $\submodsetY = \{ \smmilk,\smtea \}$,
and with $\submodel = \smlemon$ (i.e.,
$\submodfun(\smlemon|\smmilk,\smtea) < \submodfun(\smlemon|\smtea)$).
\submodlongonly{
The function listed in Figure~\ref{fig:coffee_tea_lemon_milk} is not
submodular for all subsets as mentioned above.  The entropy function,
however, seen as a function of a set of random variables, is
submodular over all subsets, and this is widely known as the ``further
conditioning reduces entropy'' property of the entropy function.
}

\submodlongonly{

It is not hard to see how Equation~\eqref{defn:main_submodular_def}
and Equation~\eqref{defn:diminishing_returns_submodular_def} are
mathematically identical, as is done in the following:
\begin{thm}
Given function $f: 2^V \to \mathbb R$, then
\begin{align}
f(A) + f(B) \geq f(A \cup B) + f(A \cap B)
\text{ for all } A,B \subseteq V \tag{Eq.~\eqref{defn:main_submodular_def}}
\end{align}
if and only if
\begin{align}
f(v|X)\geq f(v|Y) \text{ for all } X \subseteq Y \subseteq V \text{ and }
v \notin Y \tag{Eq.~\eqref{defn:diminishing_returns_submodular_def}}
\end{align}
\end{thm}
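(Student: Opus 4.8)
The plan is to prove the two implications separately, each by direct manipulation of the gain notation $f(v|X) = f(X+v) - f(X)$.

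For the direction Eq.~\eqref{eqn:main_submodular_inequality} $\Rightarrow$ Eq.~\eqref{eqn:diminishing_returns_submodular_inequality}, I would simply instantiate the submodular inequality at a well-chosen pair of sets. Given $X \subseteq Y \subseteq V$ and $v \notin Y$, set $A = X + v$ and $B = Y$. Then $A \cup B = Y + v$ (using $X \subseteq Y$) and $A \cap B = X$ (using $v \notin Y$ together with $X \subseteq Y$). Substituting into \eqref{eqn:main_submodular_inequality} gives $f(X+v) + f(Y) \geq f(Y+v) + f(X)$, and rearranging yields $f(X+v) - f(X) \geq f(Y+v) - f(Y)$, which is exactly \eqref{eqn:diminishing_returns_submodular_inequality}. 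This direction is essentially a one-line substitution; the only things to verify are the two set identities for $A \cup B$ and $A \cap B$.

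For the converse, Eq.~\eqref{eqn:diminishing_returns_submodular_inequality} $\Rightarrow$ Eq.~\eqref{eqn:main_submodular_inequality}, I would use a telescoping argument over the elements of $A \setminus B$. Fix arbitrary $A, B \subseteq V$, enumerate $A \setminus B = \{v_1, \dots, v_k\}$, and write $S_0 = \emptyset$, $S_i = \{v_1, \dots, v_i\}$. Building up from $A \cap B$ to $A$ and from $B$ to $A \cup B$ by adding $v_1, \dots, v_k$ in the same order gives the telescoping identities
\begin{align}
f(A) - f(A \cap B) &= \sum_{i=1}^{k} f\bigl(v_i \,\big|\, (A \cap B) \cup S_{i-1}\bigr), \\
f(A \cup B) - f(B) &= \sum_{i=1}^{k} f\bigl(v_i \,\big|\, B \cup S_{i-1}\bigr),
\end{align}
where for the first line I use $(A \cap B) \cup S_k = A$ and for the second $B \cup S_k = A \cup B$. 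For each $i$ we have $(A \cap B) \cup S_{i-1} \subseteq B \cup S_{i-1}$ and $v_i \notin B \cup S_{i-1}$ (since $v_i \in A \setminus B$ and the $v_j$ are distinct), so \eqref{eqn:diminishing_returns_submodular_inequality} applies term by term and the first sum dominates the second. Hence $f(A) - f(A \cap B) \geq f(A \cup B) - f(B)$, which rearranges to \eqref{eqn:main_submodular_inequality}. (If $A \subseteq B$ then $k = 0$ and both sides of \eqref{eqn:main_submodular_inequality} coincide, so the claim is trivial.)

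The routine part is the bookkeeping in the telescoping sums. The one genuine point to get right — and the only place the diminishing-returns hypothesis is used — is verifying simultaneously that the conditioning sets are nested, $(A \cap B) \cup S_{i-1} \subseteq B \cup S_{i-1}$, and that the element $v_i$ being added lies outside the larger conditioning set, so that \eqref{eqn:diminishing_returns_submodular_inequality} is legitimately invoked at each step; everything else is rearrangement of the definition of $f(v|X)$.
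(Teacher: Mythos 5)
Your proposal is correct and follows essentially the same route as the paper: the forward direction by the substitution $A = X + v$, $B = Y$, and the converse by ordering $A \setminus B$ and telescoping the diminishing-returns inequality from $A \cap B$ up to $A$ against $B$ up to $A \cup B$. The extra care you take in checking that the conditioning sets are nested and that $v_i$ lies outside the larger one is exactly the right point to verify, and the paper leaves it implicit.
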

\begin{proof}

Eq.~\eqref{defn:main_submodular_def}$\Rightarrow$Eq.~\eqref{defn:diminishing_returns_submodular_def}:
Set $A \gets X \cup \{ v\}$, $B \gets Y$. Then
$A \cup B = Y \cup \{ v\}$ and $A \cap B = X$
and $f(A) - f(A \cap B) \geq f(A \cup B) - f(B)$ implies Eq.~\eqref{defn:diminishing_returns_submodular_def}.

Eq.~\eqref{defn:diminishing_returns_submodular_def}$\Rightarrow$Eq.~\eqref{defn:main_submodular_def}:
Order $A \setminus B = \{v_1, v_2, \dots, v_r\}$ arbitrarily. For
$i \in [r]$,
\begin{align}
  f(v_i | (A \cap B) \cup \{ v_1, v_2, \dots, v_{i-1}\} )
  \geq f(v_i | B \cup \{ v_1, v_2, \dots, v_{i-1} \}). 
\notag
\end{align}
Applying telescoping summation to both sides, we get:
\begin{align*}
\sum_{i=1}^r 
f(v_i | (A \cap B) \cup \{ v_1, v_2, \dots, v_{i-1}\} )
&\geq 
\sum_{i=1}^r 
f(v_i | B \cup \{ v_1, v_2, \dots, v_{i-1} \}) \\
\Rightarrow \qquad f(A) - f(A \cap B) &\geq f(A \cup B) - f(B)  
\end{align*}
\end{proof}

There are many different ways to define submodularity.
Theorem~\ref{thm:many_defs_of_submodularity} gives a list of
equivalencies --- if any one of the equations are true as stated, then
the function is submodular.

\begin{thm}[The Many Definitions of Submodularity]
  \label{thm:many_defs_of_submodularity} Given a function
  $\submodfun : 2^{\submodgroundset} \to \real$, and the gain function defined as
  $\submodfgain{\submodfun}{\SetI}{\Set} = \submodfun(\SetI \cup \Set) - \submodfun(\Set)$, for
  any sets $\Set,\SetI \subseteq \submodgroundset$,
  the following equations are equivalent. In other words, any one of
  them can be used as the definition of a submodular function.
\begin{flalign}
\submodfun(\submodaltsetX) + \submodfun(\submodaltsetY) 
& \geq \submodfun(\submodaltsetX \cup \submodaltsetY) + \submodfun(\submodaltsetX \cap \submodaltsetY),\;\; \forall \submodaltsetX,\submodaltsetY \subseteq \submodgroundset
&& \llap{(Classic)}\tag{\ref{eqn:main_submodular_inequality}} \\ %
\submodfgain{\submodfun}{\submodvv}{\SetS} & \geq \submodfgain{\submodfun}{\submodvv}{\SetT}, \; 
\forall \SetS \subseteq \SetT \subseteq \submodgroundset, \;
\forall \submodvv \in \submodgroundset \setminus \SetT && \llap{(Dim.\ Returns)}\tag{\ref{eqn:diminishing_returns_submodular_inequality}}\\ %
\submodfgain{\submodfun}{\submodvvx}{\SetS} &\geq \submodfgain{\submodfun}{\submodvvx}{\setplussingle{\SetS}{\submodvvy}}, \;
\forall \SetS \subseteq \submodgroundset,\;
\forall \submodvvx \in \submodgroundset \setminus (\setplussingle{\SetS}{\submodvvy}) && \llap{(Four Points)} \label{eqn:submodularity_inequality_on_hypercube_face}\\ %
\submodfgain{\submodfun}{\SetC}{\SetS} &\geq \submodfgain{\submodfun}{\SetC}{\SetT},\; 
\forall \SetS \subseteq \SetT \subseteq \submodgroundset, \;
\forall \SetC \subseteq \submodgroundset \setminus \SetT && \llap{(Group\ Dim.\ Returns)}\eqnlabel{submod_group_dim_returns} \\
\submodfgain{\submodfun}{\submodaltsetX \cup \submodaltsetY}{\submodaltsetX \cap \submodaltsetY}
& \leq \submodfgain{\submodfun}{\submodaltsetX}{\submodaltsetX \cap \submodaltsetY} + \submodfgain{\submodfun}{\submodaltsetY}{\submodaltsetX \cap \submodaltsetY},\;
\forall \submodaltsetX, \submodaltsetY \subseteq \submodgroundset && \llap{(Conditional\ Subadditivity)}\eqnlabel{submod_cond_subadditive} \\
\submodfun(\SetT) &\leq \submodfun(\SetS) 
+ \sum_{\submodvv \in \SetT \setminus \SetS } \submodfgain{\submodfun}{\submodvv}{\SetS} 
&& \llap{(Union ULB)}\eqnlabel{submod_union_ulb}  \\
& \qquad 
- \sum_{\submodvv \in \SetS \setminus \SetT} \submodfgain{\submodfun}{\submodvv}{ \setminussingle{(\SetS \cup \SetT)}{\submodvv}}, \; 
\forall \SetS,\SetT \subseteq \submodgroundset &&  \notag \\
\submodfun(\SetT) & \leq \submodfun(\SetS) + \sum_{\submodvv \in \SetT \setminus \SetS} \submodfgain{\submodfun}{\submodvv}{\SetS}, \; 
\forall \SetS \subseteq \SetT \subseteq \submodgroundset && \llap{(Union UB)}\eqnlabel{submod_union_ub} \\
\submodfun(\SetT) & \leq \submodfun(\SetS) 
+ \sum_{\submodvv \in \SetT \setminus \SetS} \submodfgain{\submodfun}{\submodvv}{\SetS \cap \SetT}
&& \llap{(Intersection ULB)}\eqnlabel{submod_intersection_ulb} \\
& \qquad 
- \sum_{\submodvv \in \SetS \setminus \SetT} \submodfgain{\submodfun}{\submodvv}{\setminussingle{\SetS}{\submodvv}}, \; 
\forall \SetS,\SetT \subseteq \submodgroundset &&  \notag \\
\submodfun(\SetT) & \leq \submodfun(\SetS) - \sum_{\submodvv \in \SetS \setminus \SetT} 
\submodfgain{\submodfun}{\submodvv}{\setminussingle{\SetS}{\submodvv}}, \; 
\forall \SetT \subseteq \SetS \subseteq \submodgroundset && \llap{(Intersection UB)}\eqnlabel{submod_intersection_ub} \\
\submodfun(\SetT) &\leq 
\submodfun(\SetS) 
+ \sum_{\submodvv \in \SetT \setminus \SetS} \submodfgain{\submodfun}{\submodvv}{\SetS \cap \SetT}
&&  \llap{(Union/Intersection)}\eqnlabel{submod_union_intersection} \\
&\qquad 
- \sum_{\submodvv  \in \SetS \setminus \SetT} \submodfgain{\submodfun}{\submodvv}{ \setminussingle{(\SetS \cup \SetT)}{\submodvv}},  \; 
\forall \SetS,\SetT \subseteq \submodgroundset \notag
\end{flalign}
\end{thm}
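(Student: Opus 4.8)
The plan is to treat the equivalence (Classic)$\Leftrightarrow$(Dim.\ Returns) --- already proved above --- as a hub and reduce every remaining characterization to it, rather than thread one long cycle through all ten equations. Call a function \emph{submodular} if it satisfies either of these two, so that I may freely use \eqref{eqn:main_submodular_inequality}, \eqref{eqn:diminishing_returns_submodular_inequality}, and the telescoping identity $f(Z\cup W \mid W) = \sum_i f(z_i \mid W\cup\{z_1,\dots,z_{i-1}\})$ over any ordering $z_1,\dots,z_r$ of $Z\setminus W$. The ten equations then fall into three groups: the purely ``local'' ones, a single algebraic rewrite, and a block of five ``summation'' inequalities carrying all the real content.

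For the local ones, (Four Points) is literally the instance of (Dim.\ Returns) with $T = S\cup\{y\}$ (and is vacuous when $y\in S$), while conversely (Four Points)$\Rightarrow$(Dim.\ Returns) by ordering $T\setminus S = \{y_1,\dots,y_m\}$ and chaining $f(v\mid S)\ge f(v\mid S+y_1)\ge\cdots\ge f(v\mid T)$. Similarly (Dim.\ Returns) is (Group Dim.\ Returns) with $|C|=1$, and (Group Dim.\ Returns) follows from (Dim.\ Returns) by writing both $f(C\mid S)$ and $f(C\mid T)$ as telescoping sums over one fixed ordering of $C$ and comparing termwise --- each intermediate conditioning set on the $S$-side is contained in its counterpart on the $T$-side, with the newly added element of $C$ outside both since $C\cap T=\emptyset$. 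Finally (Conditional Subadditivity) is (Classic) in disguise: expanding the three gains about $X\cap Y$ and cancelling the common $-2f(X\cap Y)$ turns it verbatim into \eqref{eqn:main_submodular_inequality}, so both directions are immediate.

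The substance is the block (Union ULB), (Union UB), (Intersection ULB), (Intersection UB), (Union/Intersection). For the forward direction (submodular $\Rightarrow$ the inequality) I would use a single device: write $f(T)-f(S) = f(T\setminus S\mid A) - f(S\setminus T\mid B)$, routing through $S\cup T$ to get $(A,B)=(S,T)$ or through $S\cap T$ to get $(A,B)=(S\cap T,\,S\cap T)$; expand each gain as a telescoping sum over a suitable ordering; bound the $T\setminus S$ sum \emph{from above} termwise by (Dim.\ Returns), since each intermediate conditioning set contains $S$ (resp.\ $S\cap T$); and bound the $S\setminus T$ sum \emph{from below} termwise by (Dim.\ Returns), since each intermediate conditioning set is \emph{contained in} the target set $(S\cup T)\setminus\{v\}$ (resp.\ $S\setminus\{v\}$). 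Matching the two routing choices and the two target sets against the statements yields (Union ULB), (Intersection ULB) and (Union/Intersection); the ``UB'' variants are then just the $S\subseteq T$ (resp.\ $T\subseteq S$) specializations in which one sum is empty. For the reverse direction, I would specialize each inequality to a two-element difference --- take $T = S\cup\{x,y\}$ with $S\subseteq T$ for the ``Union'' shapes, or $S = R\cup\{x,y\}$ with $T = R$ for the ``Intersection'' shapes --- and use $f(S\cup\{x,y\}) = f(S)+f(x\mid S)+f(y\mid S\cup\{x\})$ to cancel terms; in every case what survives is exactly $f(y\mid S\cup\{x\})\le f(y\mid S)$, i.e.\ (Four Points), which by the above already implies submodularity.

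I expect the only real obstacle to be the sign-bookkeeping inside the forward direction of the summation block: one must choose each telescoping order and each routing point ($S\cup T$ versus $S\cap T$) so that the ``$+$'' sum gets an upper bound and the ``$-$'' sum gets a lower bound, which is exactly what forces every intermediate conditioning set to sit on the correct side (superset or subset) of the conditioning sets $S$, $S\cap T$, $S\setminus\{v\}$, $(S\cup T)\setminus\{v\}$ appearing in the targets; an off-by-one in the ordering (e.g.\ dropping the ``$\setminus\{v\}$'') flips the direction of (Dim.\ Returns). The cleanest write-up is therefore to state the ``sandwiching'' invariant once as an auxiliary lemma bounding $f(T)-f(S)$ above by $\sum_{v\in T\setminus S} f(v\mid A) - \sum_{v\in S\setminus T} f(v\mid C_v)$ for the two admissible choices of $(A,\{C_v\})$, and then read off all five inequalities --- and their UB restrictions --- as instances, leaving only the uniform two-element reverse argument to close the loop back to (Classic).
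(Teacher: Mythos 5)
Your proposal is correct, and it is worth noting that the paper itself states this theorem \emph{without} proof: the only argument supplied in the text is the preceding two-way equivalence between (Classic) and (Dim.\ Returns), proved by the substitution $A \gets X\cup\{v\}$, $B\gets Y$ in one direction and by telescoping over an ordering of $A\setminus B$ in the other. Your hub-and-spoke plan is exactly the natural extension of that same device to the remaining eight equations, and every step checks out: (Four Points) and (Group Dim.\ Returns) are the single-step and chained/termwise-telescoped versions of (Dim.\ Returns); (Conditional Subadditivity) cancels the two copies of $f(X\cap Y)$ to become (Classic) verbatim; and the five summation inequalities all follow from the identity $f(T)-f(S)=f(T\setminus S\mid A)-f(S\setminus T\mid A)$ with $A\in\{S\cap T,\,S\cup T\text{-routing}\}$, bounding the positive telescoping sum above term-by-term (each intermediate conditioning set contains $S$ or $S\cap T$) and the negative one below term-by-term (each intermediate conditioning set is contained in $(S\cup T)\setminus\{v\}$ or $S\setminus\{v\}$). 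Your observation that the two UB variants are the degenerate cases of the ULB variants in which one sum is empty, and that the reverse implications all collapse to (Four Points) via the two-element specialization $f(S\cup\{x,y\})=f(S)+f(x\mid S)+f(y\mid S\cup\{x\})$, closes every cycle back to the already-established hub. The one presentational caveat is minor: in (Four Points) the element $\submodvvy$ is implicitly universally quantified, and your parenthetical that the case $\submodvvy\in\SetS$ is vacuous is the right way to handle it. Packaging the forward direction of the summation block as a single sandwiching lemma, as you suggest, is a cleaner write-up than proving the five inequalities separately, and would be a genuine improvement over simply asserting the theorem as the paper does.
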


}

\submodlongshortalt{While entropy as seen above is submodular, there are many other
functions as well, some of which we examine below.}{There are many functions that are submodular, one famous one being Shannon entropy seen as a function of subsets of random variables.} We first point out
that there are non-negative (i.e.,
$\submodfun(\submodaltsetX) \geq 0, \forall \submodaltsetX$), monotone
non-decreasing (i.e.,
$\submodfun(\submodaltsetX) \leq \submodfun(\submodaltsetY)$ whenever
$\submodaltsetX \subseteq \submodaltsetY$) submodular functions that
are not
entropic~\cite{Yeung91,zhang1997non,zhang1998characterization}, so
submodularity is not just a trivial restatement of the class of
entropy functions.  When a function is monotone non-decreasing,
submodular, and \emph{normalized} so that $\submodfun(\emptyset) = 0$,
it is often referred to as a \keywordDef{polymatroid function}. Thus, while
the entropy function is a polymatroid function, it does not encompass
all polymatroid functions even though all polymatroid functions
satisfy the properties Claude Shannon mentioned as being
natural for an ``information'' function (see Section~\ref{sec:comb-inform-funct}).
\submodlongonly{Most polymatroid functions can be seen as a form
of dispersion, diversity, or information function, in that they tend
to give high valuation to sets $\submodaltsetX$, of a given size, that have a lot
of diversity (a high amount of information about $\submodgroundset$), and tend to
give low value to sets, of the same size, that are more homogeneous (a
low amount of information about $\submodgroundset$). The entropy function is a good
example of this, where diversity over a set of random variables is
determined by the degree to which the set of random variables are
statistically independent and the degree to which they have high
marginal entropy.
}

\submodlongonly{
A supermodular function has the following (reversed inequality
relative to submodularity) definition:
\begin{defn}[Increasing Returns]
  A given set function $\submodfun : 2^\submodgroundset \to \real$ is \emph{supermodular}
  if for all $\submodsetX,\submodsetY \subseteq \submodgroundset$, where $\submodsetX \subseteq \submodsetY$ and
  for all $\submodel \notin \submodsetY$,
  we have the following inequality:
  \begin{equation}
    \submodfun(\submodsetX + \submodel) - \submodfun(\submodsetX) \leq \submodfun(\submodsetY + \submodel) - \submodfun(\submodsetY)
    \label{eqn:increasing_returns_submodular_inequality}
  \end{equation}
\end{defn}

We can alternatively define supermodularity analogous to
Definition~\ref{defn:main_submodular_def}, where we say the function
$\submodfun$ is supermodular if
$\submodfun(\submodsetX) + \submodfun(\submodsetY) \leq
\submodfun(\submodsetX \cup \submodsetY) + \submodfun(\submodsetX \cap
\submodsetY)$ for all arguments.
}

\submodlongonly{Alternatively, a}\submodshortonly{A} function
$\submodfun$ is supermodular if and only if $-\submodfun$ is
submodular. If a function is both submodular and supermodular, it is
known as a \emph{modular} function. It is always the case that modular
functions may take the form of a vector-scalar pair
$(\submodmodfun,\submodmodfunconst)$ where
$\submodmodfun : 2^\submodgroundset \to \real$ and where
$\submodmodfunconst \in \real$ is a constant, and where for any
$\submodaltsetX \subseteq \submodgroundset$, we have that
$\submodmodfun(\submodaltsetX) = \submodmodfunconst + \sum_{\submodel
  \in \submodaltsetX} \submodmodfun_\submodel$. If the modular
function is normalized, so that $\submodmodfun(\emptyset) = 0$, then
$\submodmodfunconst=0$ and the modular function can be seen simply as
a vector $\submodmodfun \in \real^\submodgroundset$. Hence, we
sometimes say that the modular function
$\submodmodaltfun \in \real^\submodgroundset$ offers a value for set
$\submodaltsetX$ as the partial sum
$\submodmodaltfun( \submodaltsetX ) = \sum_{\submodel \in
  \submodaltsetX} \submodmodaltfun(\submodel)$. Many combinatorial
problems use modular functions as objectives. For example, the graph
cut problem uses a modular function defined over the edges, judges a cut
in a graph as the modular function applied to the edges that comprise
the cut.

As can be seen from the above, and by considering
Figure~\ref{fig:coffee_tea_lemon_milk}, a submodular function, and in
fact any set function, $\submodfun : 2^\submodgroundset \to \real$ can
be seen as a function defined only on the vertices of the
$\submodgroundsetsize$-dimensional unit hypercube
$[0,1]^\submodgroundsetsize$. Given any set
$\submodsetX \subseteq \submodgroundset$, we define
$\submodcharv_\submodsetX \in \{0,1\}^\submodgroundset$ to be the
characteristic vector of set $\submodsetX$ defined as
$\submodcharv_\submodsetX(\submodel) = 1$ if
$\submodel \in \submodsetX$ and
$\submodcharv_\submodsetX(\submodel) = 0$ otherwise. This gives us a
way to map from any set $\submodsetX \subseteq \submodgroundset$ to a
binary vector $\submodcharv_\submodsetX$. We also see that
$\submodcharv_\submodsetX$ is itself a modular function since
$\submodcharv_\submodsetX \in \{0,1\}^\submodgroundset \subset
\real^\submodgroundset$.

\submodlongonly{
A submodular function is not defined using relationships only between
pairs of objects like in a typical weighted graph where every edge
connects two vertices.  Rather, a submodular function provides a
distinct value for every unique subset
$\submodsetX \subseteq \submodgroundset$, and the value
$\submodfun(\submodsetX)$ need not be a function of the values of
pairs of elements $\submodfun(\submodelI,\submodelJ)$ for
$\submodelI,\submodelJ \in \submodsetX$. Most generally, a submodular
function requires the specification of $2^\submodgroundsetsize$
distinct parameters.  To compare a submodular function with a
graphical model, for example, a submodular function (most generally)
corresponds to a single clique over a set of $\submodgroundsetsize$
variables. Such a graphical model, which normally encodes
factorization properties via graph separation properties, would have
no factorization properties since nothing is separated in a clique.
It is for this reason that submodular functions can be seen to be a
form of ``anti-graphical'' model. In other words,
submodular/supermodular functions and graphical models describe
distinct properties --- a graphical model depicts who directly
interacts with whom while submodular (or supermodular) function states
the manner that a group of items may interact amongst themselves. Both
graphical models and submodular functions are useful, and they are in
fact complementary, since we can use a graphical model to decompose
how a submodular function is formed. As a simple example, suppose
$\submodgroundset = \{ \submodel, \submodell, \submodelll,
\submodellll \}$ and we form the function
$\submodfun(\submodsetX) = \sqrt{|\submodsetX \cap \{ \submodel,
  \submodell \}|} + \sqrt{|\submodsetX \cap \{ \submodell, \submodelll
  \}|} + \sqrt{|\submodsetX \cap \{ \submodelll, \submodellll
  \}|}$. Then, not only is function submodular (see below for why), it
abides by the rules of a chain graphical model
$\submodel \relbar \submodell \relbar \submodelll \relbar
\submodellll$. That is, a graphical model describes which variables
may or may not directly interact with each other depending on the
presence or absence of edges in a graph, a submodular (or
supermodular) function specifies \emph{how} variables interact.
}

Submodular functions share
a number of
properties in common with both convex and concave functions
\cite{lovasz1983submodular}, including wide applicability, generality,
multiple representations, and closure
under 
a number of 
common operators (including mixtures, truncation, complementation, and
certain convolutions).
There is one important submodular closure property
that we state here --- that is that if we take non-negative weighted
(or conical) combinations of submodular functions, we preserve
submodularity. In other words, if we have a set of $k$ submodular
functions, $\submodfun_i: 2^\submodgroundset \to \real$, $i \in [k]$,
and we form
$\submodfun(\submodsetX) = \sum_{i=1}^k \submodfuncweight_i
\submodfun_i(\submodsetX)$ where $\submodfuncweight_i \geq 0$ for all
$i$, then Definition~\ref{defn:main_submodular_def} immediately
implies that $\submodfun$ is also submodular.  When we consider
Definition~\ref{defn:main_submodular_def}, we see that submodular
functions live in a cone in $2^\submodgroundsetsize$-dimensional space
defined by the intersection of an exponential number of half-spaces
each one of which is defined by one of the inequalities of the form
$\submodfun(\submodsetX) + \submodfun(\submodsetY) \geq
\submodfun(\submodsetX \cup \submodsetY) + \submodfun(\submodsetX \cap
\submodsetY)$. Each submodular function is therefore a point in that
cone. It is therefore not surprising that taking conical combinations
of such points stays within this cone.

\subsection{Example Submodular Functions}
\label{sec:example-subm-funct}

As mentioned above, there are many functions that are submodular
besides entropy. Perhaps the simplest such function is
$\submodfun(\submodaltsetX) = \sqrt{|\submodaltsetX|}$ which is the
composition of the square-root function (which is concave) with the
cardinality $|\submodaltsetX|$ of the set $\submodaltsetX$. The gain
function is
$\submodfun(\submodaltsetX+\submodel) - \submodfun(\submodaltsetX) =
\sqrt{k+1} - \sqrt{k}$ if $|\submodaltsetX|=k$, which we know to be a
decreasing in $k$, thus establishing the submodularity of
$\submodfun$. In fact, if $\submodconcavefun : \real \to \real$ is any
concave function, then
$\submodfun(\submodaltsetX) = \submodconcavefun(|\submodaltsetX|)$
will be submodular for the same reason.\footnote{While we will not be
  extensively discussing supermodular functions in this section,
  $\submodfun(\submodaltsetX) = \submodconcavefun(|\submodaltsetX|)$
  is supermodular for any convex function $\submodconcavefun$.}
Generalizing this slightly further, a function defined as
$\submodfun(\submodaltsetX) = \submodconcavefun(\sum_{\submodaltelx
  \in \submodaltsetX} \submodmodfun(\submodaltelx))$ is also
submodular, whenever $\submodmodfun(\submodaltelx) \geq 0$ for all
$\submodaltelx \in \submodgroundset$. This yields a composition of a
concave function with a modular function
$\submodfun(\submodaltsetX) =
\submodconcavefun(\submodmodfun(\submodaltsetX))$ since
$\sum_{\submodaltelx \in \submodaltsetX} \submodmodfun(\submodaltelx)
= \submodmodfun(\submodaltsetX)$.  We may take sums of such functions
as well as add a final modular function without losing submodularity, leading to
$\submodfun(\submodaltsetX) = \sum_{\submodfeatureel \in
  \submodfeatureset} \submodconcavefun_\submodfeatureel
(\sum_{\submodaltelx \in \submodaltsetX}
\submodmodfun_{\submodfeatureel}(\submodaltelx))
+
\sum_{\submodaltelx \in \submodaltsetX}
 \submodmodfun_{\pm}(\submodaltelx)$ where
$\submodconcavefun_\submodfeatureel$ can be a distinct concave
function for each $\submodfeatureel$,
$\submodmodfun_{\submodfeatureel,\submodaltelx}$ is a non-negative
real value for all $\submodfeatureel$ and $\submodaltelx$,
and $\submodmodfun_{\pm}(\submodaltelx)$ is an arbitrary real number.
Therefore, $\submodfun(\submodaltsetX) = \sum_{\submodfeatureel \in
  \submodfeatureset} \submodconcavefun_\submodfeatureel
(\submodmodfun_{\submodfeatureel}(\submodaltsetX))
+ \submodmodfun_{\pm}(\submodaltsetX)$ where
$\submodmodfun_{\submodfeatureel}$ is a $\submodfeatureel$-specific
non-negative modular function
and $\submodmodfun_{\pm}$ is an arbitrary modular function.
Such functions are sometimes known as
\keywordDef{feature-based} submodular functions~\cite{bilmes-dsf-arxiv-2017} because $\submodfeatureset$
can be a set of non-negative features (in the machine-learning
``bag-of-words'' sense) and this function measures a form of
dispersion over $\submodaltsetX$ as determined by the set of features
$\submodfeatureset$.

A function such as
$\submodfun(\submodaltsetX) = \sum_{\submodfeatureel \in
  \submodfeatureset} \submodconcavefun_\submodfeatureel
(\submodmodfun_{\submodfeatureel}(\submodaltsetX))$ tends to award
high diversity to a set $\submodaltsetX$ that has a high valuation by
a distinct set of the features $\submodfeatureset$. The reason is
that, due to the concave nature of
$\submodconcavefun_\submodfeatureel$, any addition to the argument
$\submodmodfun_{\submodfeatureel}(\submodaltsetX)$ by adding, say,
$\submodel$ to $\submodaltsetX$ would diminish as $\submodaltsetX$
gets larger. In order to produce a set larger than $\submodaltsetX$
that has a much larger valuation, one must use a feature
$\submodfeatureel' \neq \submodfeatureel$ that has not yet diminished
as much.

\emph{Facility location} is another well-known submodular function ---
perhaps an appropriate nickname would be the ``$k$-means of submodular functions,''
due to its applicability, utility, ease-of-use (it needs only an
affinity matrix), and similarity to $k$-medoids problems.  The
facility location function is defined using an affinity matrix as
follows:
$\submodfun(\submodaltsetX) = \sum_{v \in \submodgroundset}
\max_{\submodaltelx \in \submodaltsetX}
\text{sim}(\submodaltelx,\submodel)$ where
$\text{sim}(\submodaltelx,\submodel)$ is a non-negative measure of the
affinity (or similarity) between element $\submodaltelx$ and
$\submodel$.  Here, every element $\submodel \in \submodgroundset$
must have a representative within the set $\submodaltsetX$ and the
representative for each $\submodel \in \submodgroundset$ is chosen to
be the element $\submodaltelx \in \submodaltsetX$ most similar to
$\submodel$.  This function is also a form of dispersion or diversity
function because, in order to maximize it, every element
$\submodel \in \submodgroundset$ must have some element similar to it
in $\submodaltsetX$.  The overall score is then the sum of the
similarity between each element $\submodel \in \submodgroundset$ and
$\submodel$'s representative. This function is monotone (since as
$\submodaltsetX$ includes more elements to become
$\submodaltsetY \supseteq \submodaltsetX$, it is possible only to find
an element in $\submodaltsetY$ more similar to a given $\submodel$
than an element in $\submodaltsetX$).

While the facility location looks quite
different from a feature-based function, it is possible to precisely
represent any facility location function with a feature-based
function. \submodlongonly{One way to see this is to consider
$\max_{\submodaltelx \in \submodaltsetX}
\submodmodaltfun_\submodaltelx = \lim_{\gamma \to \infty}
\frac{1}{\gamma}\log(\sum_{\submodaltelx \in \submodaltsetX}
\exp(\gamma \submodmodaltfun_\submodaltelx ))$, for non-negative
values $\submodmodaltfun_\submodaltelx$.  Thus, the $\max$ function
acts as a kind of sharp concave function.  Alternatively, we can
capture the facility location directly by a feature-based function,
term by term, by sorting the individual modular functions.}  Consider
just
$\max_{\submodaltelx \in \submodaltsetX}
\submodmodaltfun_\submodaltelx$ and, without loss of generality,
assume that
$0 \leq \submodmodaltfun_1 \leq \submodmodaltfun_2 \leq \dots \leq
\submodmodaltfun_n$. Then
$\max_{\submodaltelx \in \submodaltsetX}
\submodmodaltfun_\submodaltelx = \sum_{i=1}^\submodgroundsetsize
\submodmodaltaltfun_i \min(|\submodaltsetX \cap \{ i, i+1, \dots,
\submodgroundsetsize \}|,1)$ where
$\submodmodaltaltfun_i = \submodmodaltfun_i - \submodmodaltfun_{i-1}$
and we set $\submodmodaltfun_0 = 0$. We note that this is a sum of
weighted concave composed with modular functions since
$\min(\alpha,1)$ is concave in $\alpha$, and
$|A \cap \{ i, i+1, \dots, \submodgroundsetsize \}|$ is a modular
function in $\submodaltsetX$.  Thus, the facility location function, a sum of these, is
merely a feature-based function. \submodlongonly{In fact, since \emph{budgeted
  additive functions}~\cite{devanur2013approximation} are the same as \emph{truncated modular
  functions} (i.e., those that take the form
$\submodfun(\submodsetX) = \min(\alpha, \sum_{\submodel \in \submodsetX}
\submodmodfun(\submodel))$), we see that a facility location function
can be represented by class of functions smaller than feature-based
functions.}

Feature-based functions, in fact, are quite expressive, and can be
used to represent many different submodular functions including set
cover and graph-based functions. For example, we can define a
\emph{set cover function}, given a set of sets
$\{ \submodfeatureset_\submodel \}_{\submodel \in \submodgroundset}$,
via
$\submodfun(\submodsetX) = \left| \bigcup_{\submodel \in \submodsetX}
  \submodfeatureset_\submodel \right|$.  If
$\submodfun(\submodsetX) = |\submodfeatureset|$ where
$\submodfeatureset = \bigcup_{\submodel \in \submodgroundset}
\submodfeatureset_\submodel$ then $\submodsetX$ indexes a set that
fully covers $\submodfeatureset$.  This can also be represented as
$\submodfun(\submodsetX) = \sum_{\submodfeatureel \in
  \submodfeatureset} \min( 1,
\submodmodfun_\submodfeatureel(\submodsetX) )$ where
$\submodmodfun_\submodfeatureel(\submodsetX)$ is a modular function
where $\submodmodfun_\submodfeatureel(\submodel) = 1$ if and only if
$\submodfeatureel \in \submodfeatureset_\submodel$ and otherwise
$\submodmodfun_\submodfeatureel(\submodel) = 0$. We see that this is a
feature-based submodular function since $\min(1,x)$ is concave in $x$,
and $\submodfeatureset$ is a set of features.

This construct can be
used to produce the vertex cover function if we set
$\submodfeatureset = \submodgroundset$ to be the set of vertices in a
graph, and set $\submodmodfun_\submodfeatureel(\submodel) = 1$ if and
only if vertices $\submodfeatureel$ and $\submodel$ are adjacent in
the graph and otherwise set
$\submodmodfun_\submodfeatureel(\submodel) = 1$.
Similarly, the edge cover function can be expressed
by setting $\submodgroundset$ to be
the set of edges in a graph, $\submodfeatureset$
to be the set of vertices in the graph,
and $\submodmodfun_\submodfeatureel(\submodel) = 1$ if and
only edge $\submodel$ is incident to
vertex $\submodfeatureel$.

A generalization of the
set cover function is the \emph{probabilistic coverage} function.  Let
$\submodprob{\submodbernoullirv_{\submodfeatureel,\submodel} = 1}$ be the
probability of the presence of feature (or concept) $\submodfeatureel$
within element $\submodel$.  Here, we treat
$\submodbernoullirv_{\submodfeatureel,\submodel}$ as a Bernoulli
random variable for each element $\submodel$ and feature
$\submodfeatureel$ so that
$\submodprob{\submodbernoullirv_{\submodfeatureel,\submodel} = 1} = 1 -
\submodprob{\submodbernoullirv_{\submodfeatureel,\submodel} = 0}$.  Then we
can define the probabilistic coverage function 
as 
$\submodfun(\submodsetX) 
= \sum_{\submodfeatureel \in \submodfeatureset} \submodfun_\submodfeatureel(\submodsetX)$
where,
for feature
$\submodfeatureel$, we have
$\submodfun_\submodfeatureel(\submodsetX) = 1 - \prod_{\submodel \in
  \submodsetX} ( 1 -
\submodprob{\submodbernoullirv_{\submodfeatureel,\submodel}=1} )$ which
indicates the degree to which feature $\submodfeatureel$ is
``covered'' by $\submodsetX$.  If we set
$\submodprob{\submodbernoullirv_{\submodfeatureel,\submodel} = 1} = 1$ if
and only if $\submodfeatureel \in \submodfeatureset_\submodel$ and
otherwise
$\submodprob{\submodbernoullirv_{\submodfeatureel,\submodel}=1} = 0$, then
$\submodfun_\submodfeatureel(\submodsetX) = \min( 1,
\submodmodfun_\submodfeatureel(\submodsetX) )$ and the set cover
function can be represented as
$\sum_{\submodfeatureel \in \submodfeatureset}
\submodfun_\submodfeatureel(\submodsetX)$.  We can generalize this in
two ways. First, to make it softer and more probabilistic we allow
$\submodprob{\submodbernoullirv_{\submodfeatureel,\submodel} = 1}$ to be any
number between zero and one. We also allow each feature to have a
non-negative weight. This yields the general form of the probabilistic
coverage function, which is defined by taking a weighted combination
over all features:
$\submodfun_\submodfeatureel(\submodsetX) = \sum_{\submodfeatureel \in
  \submodfeatureset} \submodfuncweight_\submodfeatureel
\submodfun_\submodfeatureel(\submodsetX)$ where
$\submodfuncweight_\submodfeatureel \geq 0$ is a weight for feature
$\submodfeatureel$.  Observe that
$1 - \prod_{\submodel \in \submodsetX} ( 1 -
\submodprob{\submodbernoullirv_{\submodfeatureel,\submodel}=1} ) = 1 -
\exp(- \submodmodfun_\submodfeatureel(\submodsetX) ) =
  \submodconcavefun(\submodmodfun_\submodfeatureel(\submodsetX))$
where $\submodmodfun_\submodfeatureel$ is a modular function with
evaluation
$\submodmodfun_\submodfeatureel(\submodsetX) = \sum_{\submodel \in
  \submodsetX} \log
\bigl(1/(1-\submodprob{\submodbernoullirv_{\submodfeatureel,\submodel}=1} )\bigr)$
and for $\submodmodrealvalue \in \real$,
$\submodconcavefun(\submodmodrealvalue) =
1-\exp(-\submodmodrealvalue)$ is a concave function. Thus, the
probabilistic coverage function (and its set cover specialization) is
also a feature-based function. \submodlongonly{A variant of this function utilizes the
probability $\submodprob{\submodfeatureel|\submodel}$ rather than
$\submodprob{\submodbernoullirv_{\submodfeatureel,\submodel}=1}$, and while
this form is not able to perfectly represent the set cover function,
it is useful for many applications, for example when
$\submodprob{\submodfeatureel|\submodel}$ indicates the probability of topic
$\submodfeatureel$ being present in document $\submodel$.}

Another common submodular function is the graph cut function. Here, we
measure the value of a subset of $\submodgroundset$ by the edges that
cross between a set of nodes and all but that set of nodes.  We are
given an undirected non-negative weighted graph
$\submodgraph=(\submodgroundset,\submodedgesgroundset,\submodedgeweight)$
where $\submodgroundset$ is the set of nodes,
$\submodedgesgroundset \subseteq \submodgroundset \times
\submodgroundset$ is the set of edges, and
$\submodedgeweight \in \real^\submodedgesgroundset_+$ are non-negative
edge weights corresponding to symmetric matrix
(so $\submodedgeweight_{\submodelI,\submodelJ}
= \submodedgeweight_{\submodelJ,\submodelI}$).  For any $\submodedgeel \in \submodedgesgroundset$, we
have $\submodedgeel = \{\submodelI,\submodelJ\}$ for some
$\submodelI,\submodelJ \in \submodgroundset$ with
$\submodelI \neq \submodelJ$, the graph cut function
$\submodfun: 2^\submodgroundset \to \real$ is defined as
$\submodfun(\submodsetX) = \sum_{\submodelI \in \submodsetX,
  \submodelJ \in \bar \submodsetX}
\submodedgeweight_{\submodelI,\submodelJ}$ where
$\submodedgeweight_{\submodelI,\submodelJ} \geq 0$ is the weight of
edge $\submodedgeel = \{\submodelI,\submodelJ\}$
($\submodedgeweight_{\submodelI,\submodelJ} = 0$ if the edge does not
exist), and where
$\bar \submodsetX = \submodgroundset \setminus \submodsetX$ is the
complement of set $\submodsetX$. Notice that we can
write the graph cut function as follows:
\begin{align}
\hspace{-2em}
\submodfun(\submodsetX)
&=
\sum_{\submodelI \in \submodsetX,
  \submodelJ \in \bar \submodsetX}
\submodedgeweight_{\submodelI,\submodelJ}
= \sum_{\submodelI,\submodelJ \in \submodgroundset}
\submodedgeweight_{\submodelI,\submodelJ} \mathbf 1\{
\submodelI \in \submodsetX,
  \submodelJ \in \bar \submodsetX \} \\
\submodlongonly{
&= \frac{1}{2}
\sum_{\submodelI,\submodelJ \in \submodgroundset}
\submodedgeweight_{\submodelI,\submodelJ}
\left[ \mathbf 1\{
\submodelI \in \submodsetX,
  \submodelJ \in \bar \submodsetX \}
+
\mathbf 1\{
\submodelJ \in \submodsetX,
  \submodelI \in \bar \submodsetX \}
\right] \label{eq:due_to_symmetry}
\\
&= 
\frac{1}{2}
\sum_{\submodelI,\submodelJ \in \submodgroundset} \submodedgeweight_{\submodelI,\submodelJ}
\Bigl(
  \min(|\submodsetX \cap \{\submodelI,\submodelJ\}|,1)
+   \min(|(\submodgroundset \setminus \submodsetX) \cap \{\submodelI,\submodelJ\}|,1)
- 1
     \Bigr) \\
  }
  &=
\frac{1}{2}
\sum_{\submodelI,\submodelJ \in \submodgroundset} \submodedgeweight_{\submodelI,\submodelJ}
\min(|\submodsetX \cap \{\submodelI,\submodelJ\}|,1)
+
\frac{1}{2}
\sum_{\submodelI,\submodelJ \in \submodgroundset} \submodedgeweight_{\submodelI,\submodelJ}
    \min(|(\submodgroundset \setminus \submodsetX) \cap \{\submodelI,\submodelJ\}|,1)
    -
\frac{1}{2}
    \sum_{\submodelI,\submodelJ \in \submodgroundset} \submodedgeweight_{\submodelI,\submodelJ} \\
  &= \tilde \submodfun(\submodsetX) + \tilde \submodfun(\submodgroundset \setminus \submodsetX)
    - \tilde \submodfun(\submodgroundset)
\end{align}
where
$\tilde \submodfun(\submodsetX) = \frac{1}{2}
\sum_{\submodelI,\submodelJ \in \submodgroundset}
\submodedgeweight_{\submodelI,\submodelJ} \min(|\submodsetX \cap
\{\submodelI,\submodelJ\}|,1)$\submodlongonly{
(which itself is a form of monotone submodular graph function
that, for a set of nodes $\submodsetX$,
returns the sum of edge weights for all
edges incident to at least one node in $\submodsetX$)
and where
Equation~\eqref{eq:due_to_symmetry} follows due to symmetry}.
Therefore, since $\min(\alpha,1)$ is concave, and since
$\submodmodfun_{i,j}(\submodsetX) = |\submodsetX \cap
\{\submodelI,\submodelJ\}|$ is modular,
$\tilde \submodfun(\submodsetX)$ is submodular for all
$\submodelI,\submodelJ$.  Also, since $\tilde \submodfun(\submodsetX)$
is submodular, so is
$\tilde \submodfun(\submodgroundset \setminus \submodsetX)$ (in
$\submodsetX$).  Therefore, the graph cut function can be expressed as
a sum of non-normalized feature-based functions. Note that here the
second modular function is not normalized and is non-increasing, and
also we subtract the constant $\tilde \submodfun(\submodgroundset)$ to
achieve equality.

\submodlongonly{
We can express the graph cut as a pure feature-based function as
described above, as well as allow directed graph cuts, as follows.
The indicator
$\mathbf 1\{ \submodelI \in \submodsetX, \submodelJ \in \bar
\submodsetX \}$ used above is an inherently directed asymmetric
quantity whose directness quality vanished due to the symmetry of
$\submodedgeweight$.  Suppose now it is allowed for
$\submodedgeweight_{\submodelI,\submodelJ} \neq
\submodedgeweight_{\submodelJ,\submodelI}$ and we interpret
$\submodedgeweight_{\submodelI,\submodelJ}$ as the weight of a
directed edge from node $\submodelI$ to node $\submodelJ$.  An
undirected edge is still expressible whenever
$\submodedgeweight_{\submodelI,\submodelJ} =
\submodedgeweight_{\submodelJ,\submodelI}$.  The general graph cut can then be
expressed as follows:
\begin{align}
\submodfun(\submodsetX)
&=
\sum_{\submodelI \in \submodsetX,
  \submodelJ \in \bar \submodsetX}
\submodedgeweight_{\submodelI,\submodelJ}
= \sum_{\submodelI,\submodelJ \in \submodgroundset}
\submodedgeweight_{\submodelI,\submodelJ} \mathbf 1\{
\submodelI \in \submodsetX,
                 \submodelJ \in \bar \submodsetX \} \\
  &= \sum_{\submodelI,\submodelJ \in \submodgroundset}
    \submodedgeweight_{\submodelI,\submodelJ}
    \Bigl[
    \min( 2 \times \mathbf 1\{ \submodelI \in \submodsetX \} + \mathbf 1\{
\submodelJ \in \submodsetX \},2) - | \submodsetX \cap \{ \submodelI,\submodelJ \} |
    \Bigr] \\
  &=
\sum_{\submodelI,\submodelJ \in \submodgroundset}
    \submodedgeweight_{\submodelI,\submodelJ}
    \min( \submodmodfun_{\submodelI,\submodelJ}(\submodsetX),2)
    -
\sum_{\submodelI,\submodelJ \in \submodgroundset}
    \submodedgeweight_{\submodelI,\submodelJ}
    | \submodsetX \cap \{ \submodelI,\submodelJ \} |
\end{align}
where $\submodmodfun_{\submodelI,\submodelJ}(\submodsetX) = \min( 2 \times \mathbf 1\{ \submodelI \in \submodsetX \} + \mathbf 1\{
\submodelJ \in \submodsetX \},2)$
is a non-negative monotone non-decreasing modular function for all $\submodelI,\submodelJ$
and the r.h.s.\ term is negative modular.
}

\submodlongonly{
We can slightly generalize the graph cut function to a parameterized
form in the following. Let $\submodgraphcuthyperparameter = 1$ for
now.  Then the graph cut can be written as:
\begin{align}
\submodfun(\submodsetX)
&=
\sum_{\submodelI \in \submodsetX,
  \submodelJ \in \bar \submodsetX}
                          \submodedgeweight_{\submodelI,\submodelJ} 
  = \sum_{\submodelI \in \submodsetX}
    \left[
    \sum_{\submodelJ \in \submodsetX} (1 - \submodgraphcuthyperparameter) \submodedgeweight_{\submodelI,\submodelJ}
    +  \sum_{\submodelJ \in \bar \submodsetX}
    \submodedgeweight_{\submodelI,\submodelJ}
                          \right] \\
  &=
\sum_{\submodelI \in \submodsetX,
    \submodelJ \in \submodgroundset} \submodedgeweight_{\submodelI,\submodelJ}
    - \submodgraphcuthyperparameter
    \sum_{\submodelI \in \submodsetX,
    \submodelI \in \submodsetX} \submodedgeweight_{\submodelI,\submodelJ}
    =
    \sum_{\submodelI \in \submodsetX}
    \submodedgeweight_{\submodelI}
    - \submodgraphcuthyperparameter
    \sum_{\submodelI \in \submodsetX,
    \submodelI \in \submodsetX} \submodedgeweight_{\submodelI,\submodelJ}
\end{align}
where $\submodedgeweight_{\submodelI}$ is the weight of node
$\submodelI$ (the sum of the weight of edges incident to vertex
$\submodelI$) and we note that
$\sum_{\submodelI \in \submodsetX, \submodelJ \in \submodsetX}
\submodedgeweight_{\submodelI,\submodelJ}$ is a classic supermodular
function in $\submodsetX$ (see
Section~\ref{sec:example-superm-funct}). Since this takes the form of
a modular minus a supermodular function, any value
$\submodgraphcuthyperparameter \geq 0$ will preserve submodularity, so
we consider this a form of generalized graph cut. This generalized form holds for
both undirected and directed graph cuts.  We leave as an exercise the
question of if this can be represented as a feature-based function or
not.
}

\submodlongonly{
One way of viewing the above, when wishing to maximize this function,
is to consider $\submodedgeweight_{\submodelI,\submodelJ}$ as a
similarity value between node $\submodelI$ and $\submodelJ$ in a
graph.  Then
$\sum_{\submodelI \in \submodsetX, \submodelJ \in \submodgroundset}
\submodedgeweight_{\submodelI,\submodelJ}$ measures overall how
similar the set $\submodsetX$ is to all of the nodes while
$\sum_{\submodelI \in \submodsetX, \submodelI \in \submodsetX}
\submodedgeweight_{\submodelI,\submodelJ}$ can be viewed as a
supermodular redundancy penalty, since it is the sum of pairwise
similarities between every node in $\submodsetX$.  We wish for sets
that are similar to all the nodes but that are not redundant.  Using a
negative supermodular function as a redundancy penalty is the approach
taken in~\cite{lin2010-submod-sum-nlp}.  If there is an item
$\submodelJ \in \submodgroundset$ that is very similar to many other
items, meaning $\submodedgeweight_{\submodelI,\submodelJ}$ is very
large for all $\submodelI$, even when all of the elements are quite
different from each other (which implies a strong transitivity
relationship would not exist in this case), then maximizing
$\submodfun(\submodsetX)$ might choose elements for $\submodsetX$ only
because they are similar to this one element $\submodelJ$ not because
they are similar to many other elements (which ideally is what we
would want). The supermodular redundancy penalty does not help in this
case since the items being mostly dissimilar to each other would still
be granted a small penalty.  To limit this from happening, we can
modify the first term so that it saturates, and produce a further generalized
graph cut function of the form:
\begin{align}
\submodfun(\submodsetX)
  &=
    \sum_{
  \submodelJ \in \submodgroundset
  }
  \min( \sum_{\submodelI \in \submodsetX}  \submodedgeweight_{\submodelI,\submodelJ},
  \alpha \sum_{\submodelI \in \submodgroundset}  \submodedgeweight_{\submodelI,\submodelJ})
  -
  \submodgraphcuthyperparameter
  \sum_{\submodelI \in \submodsetX,
  \submodelI \in \submodsetX} \submodedgeweight_{\submodelI,\submodelJ} \\
  &=
    \sum_{
  \submodelJ \in \submodgroundset
  }
  \min( C_{\submodelJ}(\submodsetX), \alpha C_{\submodelJ}(\submodgroundset))
  -
  \submodgraphcuthyperparameter
  \sum_{\submodelI \in \submodsetX,
  \submodelI \in \submodsetX} \submodedgeweight_{\submodelI,\submodelJ} 
\end{align}
where
$C_{\submodelJ}(\submodsetX) = \sum_{\submodelI \in \submodsetX}
\submodedgeweight_{\submodelI,\submodelJ}$ is a modular function for
all $\submodelJ$ and $0 \leq \alpha \leq 1$.  The minimum function
limits the influence that node $\submodelJ$ has on selecting elements
for $\submodsetX$ to no more than an $\alpha$ fraction of the total
possible affinity $\submodelJ$ has with the remaining nodes. Since
we've turned the modular function into a feature-based function,
submodularity is preserved. In fact, it is sometimes useful to use this
function even when $\lambda = 0$, but one should be reminded
again that this recovers the standard graph cut function
when $\lambda = 1$ and $\alpha = 1$.
}

Another way to view the graph cut function is to consider the
non-negative weights as a modular function defined over the edges.
That is, we view $\submodedgeweight \in \real^\submodedgesgroundset_+$
as a modular function
$\submodedgeweight : 2^\submodedgesgroundset \to \real_+$ where for
every $\submodedgessubset \subseteq \submodedgesgroundset$,
$\submodedgeweight(\submodedgessubset) = \sum_{\submodedgeel \in
  \submodedgessubset} \submodedgeweight(\submodedgeel)$ is the weight
of the edges $\submodedgessubset$ where
$\submodedgeweight(\submodedgeel)$ is the weight of edge
$\submodedgeel$. Then the graph cut function becomes
$\submodfun(\submodsetX) =
\submodedgeweight(\{(\submodedgeela,\submodedgeelb) \in
\submodedgesgroundset : \submodedgeela \in \submodsetX, \submodedgeelb
\in \submodsetX \setminus \submodsetX\})$. We view
$\{(\submodedgeela,\submodedgeelb) \in \submodedgesgroundset :
\submodedgeela \in \submodsetX, \submodedgeelb \in \submodsetX
\setminus \submodsetX\}$ as a set-to-set mapping function, that maps
subsets of nodes to subsets of edges, and the edge weight modular
function $\submodedgeweight$ measures the weight of the resulting
edges. This immediately suggests that other functions can measure the
weight of the resulting edges as well, including non-modular
functions. One example is to use a polymatroid function itself leading
$\submodaltaltfun(\submodsetX) = \submodaltfun(
\{(\submodedgeela,\submodedgeelb) \in \submodedgesgroundset :
\submodedgeela \in \submodsetX, \submodedgeelb \in \submodsetX
\setminus \submodsetX\})$ where
$\submodaltfun : 2^\submodedgesgroundset \to \real_+$ is a submodular
function defined on subsets of edges.  The function $\submodaltaltfun$
is known as the \keywordDef{cooperative cut} function, and it is neither
submodular nor supermodular in general but there are many useful and
practical algorithms that can be used to optimize
it~\cite{jegelka-coop-cut-journal-2016} thanks to its internal yet
exposed and thus available to exploit submodular structure.

While feature-based functions are flexible and powerful, there is a
strictly broader class of submodular functions, unable to be expressed
by feature-based functions, that are related to deep neural
networks. Here, we create a recursively nested composition of concave
functions with sums of compositions of concave functions. An example
is
$\submodfun(\submodaltsetX) = \submodconcavefun(
\sum_{\submodfeatureel \in \submodfeatureset}
\submodfuncweight_\submodfeatureel \submodconcavefun_\submodfeatureel
(\sum_{\submodaltelx \in \submodaltsetX}
\submodmodfun_{\submodfeatureel,\submodaltelx}))$, where
$\submodconcavefun$ is an outer concave function composed with a
feature-based function, with
$\submodmodfun_{\submodfeatureel,\submodaltelx} \geq 0$ and
$\submodfuncweight_\submodfeatureel \geq 0$.  This is known as a
two-layer \keywordDef{deep submodular function} (DSF). A three-layer DSF has the
form
$\submodfun(\submodaltsetX) = \submodconcavefun( \sum_{\submodaltelz
  \in \submodaltsetZ} \submodfuncweight_\submodaltelz
\submodconcavefun_\submodaltelz(\sum_{\submodfeatureel \in
  \submodfeatureset}
\submodfuncweight_{\submodfeatureel,\submodaltelz}
\submodconcavefun_\submodfeatureel (\sum_{\submodaltelx \in
  \submodaltsetX} \submodmodfun_{\submodfeatureel,\submodaltelx})))$.
DSFs strictly expand the class of submodular functions beyond feature-based functions, meaning that there are feature-based functions that
cannot~\citep{bilmes-dsf-arxiv-2017} represent deep submodular functions, even simple ones.
\submodlongonly{
For example, the function $f:2^V \to \real$
where $\submodgroundset = \{\submodELa,\submodELb,\submodELc,\submodELd,\submodELe,\submodELf\}$
cannot be represented by a
feature-based function.
\begin{align}
f(A) = \min\Bigl(
\min(|A \cap \{\submodELa,\submodELb,\submodELc,\submodELd\}|,3)
+
\min(|A \cap \{\submodELc,\submodELd,\submodELe,\submodELf\}|,3),
5
\Bigr)
\label{eq:non_matroid_dsf}
\end{align}
In fact, the space of representable submodular
functions strictly grows with each new layer added to a DSF.  DSFs are
analogous to deep neural networks, but where the weights may never be
negative, and the non-linear activation functions are monotone
non-decreasing concave functions, for example $\min(\alpha,1)$ or
$\sqrt{\alpha}$.  A full discussion is given in~\citep{bilmes-dsf-arxiv-2017}.
}

\submodlongonly{
\subsubsection{Example Supermodular Functions}
\label{sec:example-superm-funct}

There are also a number of simple functions that are supermodular. Firstly,
the feature-based functions mentioned above. That is,
a composition of a convex function with a modular function
$\submodaltfun(\submodaltsetX) =
\submodconvexfun(\submodmodfun(\submodaltsetX))$ since
$\sum_{\submodaltelx \in \submodaltsetX} \submodmodfun(\submodaltelx)
= \submodmodfun(\submodaltsetX)$.  We may take sums of such functions
as well without losing supermodularity, leading to
$\submodaltfun(\submodaltsetX) = \sum_{\submodfeatureel \in
  \submodfeatureset} \submodconvexfun_\submodfeatureel
(\sum_{\submodaltelx \in \submodaltsetX}
\submodmodfun_{\submodfeatureel}(\submodaltelx))$ where
$\submodconvexfun_\submodfeatureel$ can be a distinct convex
function for each $\submodfeatureel$ and
$\submodmodfun_{\submodfeatureel,\submodaltelx}$ is a non-negative
real value for all $\submodfeatureel$ and $\submodaltelx$.  This is
identical to sums of functions each of which is a convex composed
with a modular function
$\submodaltfun(\submodaltsetX) = \sum_{\submodfeatureel \in
  \submodfeatureset} \submodconvexfun_\submodfeatureel
(\submodmodfun_{\submodfeatureel}(\submodaltsetX))$ where
$\submodmodfun_{\submodfeatureel}$ is a $\submodfeatureel$-specific
non-negative modular function.

Another classic submodular function has been known as the
sum-sum-dispersion function. It is defined on a square matrix with
non-negative entries, specifically
$\submodaltfun(\submodaltsetX) = \sum_{\submodaltelx,\submodaltelx \in
  \submodaltsetX} \text{sim}(\submodaltelx,\submodaltelx')$ where
$\text{sim}(\submodaltelx,\submodaltelx')$ is any non-negative value
that expresses the relationship between $\submodaltelx$ and
$\submodaltelx'$.  If $\text{sim}(\submodaltelx,\submodaltelx')$ is an
affinity or similarity score, then $\submodaltfun(\submodaltsetX)$ being
large indicates the homogeneity of the set $\submodaltsetX$ since
everything with $\submodaltsetX$ is mutually pairwise similar.
}

\subsection{Submodular Optimization}
\label{sec:subm-optim}

\submodlongonly{As mentioned earlier, machine learning involves (often continuous)
mathematical optimization.} Submodular functions, while discrete, would
not be very useful if it was not possible to optimize over them
efficiently. There are many natural problems in machine
learning that can be cast as submodular optimization and that can be
addressed relatively efficiently.

When one wishes to encourage diversity, information, spread, high
complexity, independence, coverage, or dispersion, one usually will
maximize a submodular function, in the form of
$\max_{\submodaltsetX \in \submodcnstr} \submodfun(\submodaltsetX)$
where $\submodcnstr \subseteq 2^\submodgroundset$ is a constraint set,
a set of subsets we are willing to accept as feasible solutions (more
on this below).  \submodlongonly{For example, $\submodfun$ might correspond to the
value of a set of sensor locations in an environment, and we wish to
find the best set $\submodsetX \subseteq \submodgroundset$ of sensors
locations given a fixed upper limit on the number of sensors
$\submodcardcnstr$. Alternatively, if $\submodgroundset$ is a set of
documents, one might wish to choose a subset
$\submodsetX \subset \submodgroundset$ of documents that act as a good
representative of a complete set of documents --- this is a
\emph{document summarization} task. As a third example, if
$\submodgroundset$ is a large training dataset over which it is costly
to train a deep neural network, one might wish to find a
representative subset $\submodsetX$ of size $\submodcardcnstr$ to
train in as surrogate for $\submodgroundset$, one that works much
better than a uniformly-at-random subset of size
$\submodcardcnstr$. This is a coreset task.  Submodularity is a good
general model for such problems because submodular functions act as a
form of information function (Section~\ref{sec:comb-inform-funct}).
Maximizing a submodular function subject to a constraint selects a
subset that is non-redundant and thus not wasteful.  
\submodlongonly{
If we can find a
small set that retains most or all of the valuation of the whole
(which maximization strives to achieve), we have not lost information
as measured by the submodular function, and if the subset is small, it
becomes an efficient representation of the information contained in
the whole.
}}

Why is submodularity, in general, a good model for diversity?
Submodular functions are such that once you have some elements, any
other elements
not in your possession
but that are similar to,
explained by, or represented by the elements in your possession
become
less valuable. Thus, in order to maximize the function, one must
choose other elements that are dissimilar to, or not well represented
by, the ones you already have. That is, the elements similar to the
ones you own are diminished in value relative to their original
values, while the elements dissimilar to the ones you have do not have
diminished value relative to their original values. Thus, maximizing a
submodular function successfully involves choosing elements that are
jointly dissimilar amongst each other, which is a definition of
diversity. Diversity in general is a critically important aspect in
machine learning and artificial intelligence. For example, bias in
data science and machine learning can often be seen as some lack of
diversity somewhere. Submodular functions have the potential to
encourage (and even ensure) diversity, enhance balance, and reduce
bias in artificial intelligence.

Note that in order for a submodular function to appropriately model
diversity, it is important for it to be instantiated
appropriately. Figure~\ref{fig:fl_vs_random} shows an example in two
dimensions. The plot compares the ten points chosen according to a
facility location instantiated with a Gaussian kernel, along with the
random samples of size ten. We see that the facility location selected
points are more diverse and tend to cover the space much better than
any of the randomly selected points, each of which miss large regions
of the space and/or show cases where points near each other are
jointly selected.

\begin{figure}
\centering
\includegraphics[page=1,width=0.95\textwidth]{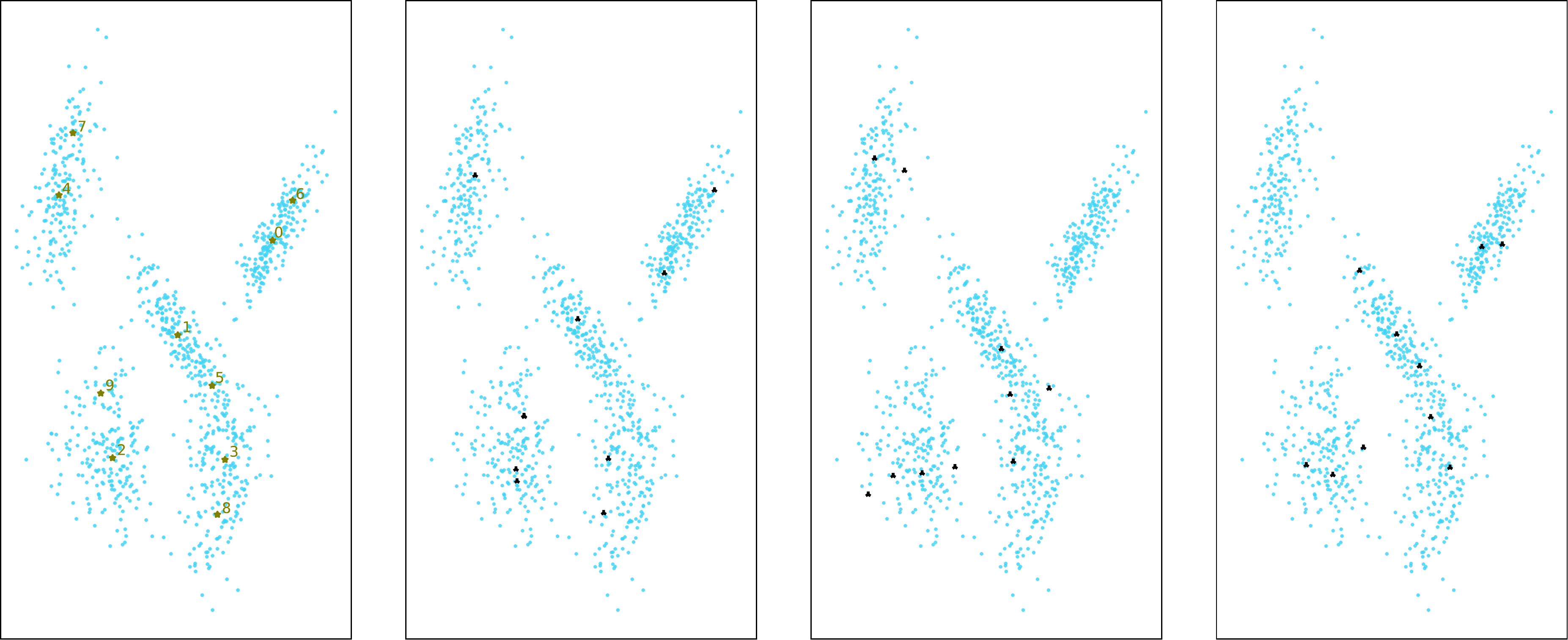}
\caption{
  Far Left: cardinality constrained (to ten) submodular maximization of a facility location function over 1000 points
  in two dimensions. Similarities are based on a Gaussian kernel 
  $\text{sim}(\submodaltelx,\submodel) = \exp(-d(\submodaltelx,\submodel))$ where $d(\cdot,\cdot)$ is
  a distance. Selected points are green stars, and the greedy order is also shown next to each selected
  point. Right three plots: different uniformly-at-random subsets of size ten.
}
\label{fig:fl_vs_random}
\end{figure}

When one wishes for homogeneity, conformity, low complexity,
coherence, or cooperation, one will usually minimize a submodular
function, in the form of
$\min_{\submodaltsetX \in \submodcnstr}
\submodfun(\submodaltsetX)$. For example, if $\submodgroundset$ is a
set of pixels in an image, one might wish to choose a subset of pixels
corresponding to a particular object over which the properties (i.e.,
color, luminance, texture) are relatively homogeneous. Finding a set
$\submodsetX$ of size $\submodcardcnstr$, even if $\submodcardcnstr$
is large, need not have a large valuation $\submodfun(\submodsetX)$,
in fact it could even have the least valuation.  Thus, semantic image
segmentation could work even if the object being segmented and
isolated consists of the majority of image pixels. 

\submodlongonly{
As another example, clustering is a problem that often asks for
subsets of data that are homogeneous in some way --- one may wish to
cluster a dataset $\submodgroundset$ by removing, one by one, subsets
$\submodsetX$ from $\submodgroundset$ that are homogeneous, and this
would mean finding a subset with minimum valuation
$\submodfun(\submodsetX)$. As a third example, suppose we are
interested in the most probable assignment to a random vector (i.e.,
MAP probabilistic inference) in the probability model
$p(\submodsetX) = \frac{1}{Z} \exp( - \submodfun ( \submodsetX ))$.
Here the function $\submodfun$ is the energy function, $Z$ is the
``partition'' function, and $p(\submodsetX)$ is said to be a
\emph{log-supermodular} probabilistic model. Log-supermodular models
give high probability to configurations of the random variables that
are homogeneous or ``regular'' as measured by $\submodfun$ (since $\submodfun$ gives
such configurations a low value). Finding the MAP solution is the same
as finding the minimum energy configuration, and if $\submodfun$ is
submodular, this is submodular minimization. Probabilistic modeling
with submodular/supermodular functions is further discussed in
Section~\ref{sec:prob-model}.
}

\submodlongonly{
Arbitrary set functions are impossible to tractably optimize while
having any form of mathematical quality assurance. In fact, it is
easy, given a set $\submodsetX^*$ with
$|\submodsetX^*| = \submodcardcnstr$, to define a function
$\submodfun$ such that $\submodfun(\submodsetX) = \alpha$ for all
$\submodsetX \neq \submodsetX^*$ and
$\submodfun(\submodsetX^*) = \beta$. With the right values of
$\alpha < \beta$, we see that, when trying to maximize this function,
anything other than a full exponential (in $\submodgroundsetsize$)
search will give an unboundedly large error in the worst case,
regardless of the outcome of the $\text{P} = \text{NP}$
question. Fortunately, this function is not submodular since given
sets $\submodsetZ_1,\submodsetZ_2$ with
$\submodsetZ_1 \neq \submodsetZ_2$ and
$\submodsetZ_1 \cup \submodsetZ_2 = \submodsetX^*$, we have
$\submodfun(\submodsetZ_1) + \submodfun(\submodsetZ_2) = \alpha +
\alpha < \beta + \alpha = \submodfun(\submodsetZ_1 \cup \submodsetZ_2)
+ \submodfun(\submodsetZ_1 \cap \submodsetZ_2)$. Neither is the
function supermodular since given some $\submodsetZ$ with
$|\submodsetZ \setminus \submodsetX^*| > 0$ and
$|\submodsetX^* \setminus \submodsetZ| > 0$, we have
$\submodfun(\submodsetX^*) + \submodfun(\submodsetZ) = \beta + \alpha
> 2\alpha = \submodfun(\submodsetX^* \cup \submodsetZ) +
\submodfun(\submodsetX^* \cap \submodsetZ)$. Therefore, we might
expect there is some hope when we require the function to be
submodular.
}

\submodlongonly{
Submodular optimization problems are more interesting when we impose
constraints. For example, when unconstrainedly maximizing a
polymatroid function, a trivial solution is $\submodgroundset$ so the
problem becomes interesting only when there is a constraint. 
The simplest and most widely used constraint is the cardinality constraint,
giving the following optimization problem:
$\max_{\submodsetX \subseteq \submodgroundset : |\submodsetX| \leq
  \submodcardcnstr } \submodfun(\submodsetX)$.  When $\submodfun$
represents the information in the set $\submodsetX$, this optimization
procedure asks for the size-limited set $\submodsetX$ of size
$\submodcardcnstr$ that has the most information about
$\submodgroundset$. One way to see this is to note that
$\submodfun(\submodgroundset) = \submodfun(\submodgroundset \setminus
\submodsetX | \submodsetX ) + \submodfun( \submodsetX )$.  Thus, when
we maximize $\submodfun( \submodsetX )$, we are minimizing the
residual information
$\submodfun(\submodgroundset \setminus \submodsetX | \submodsetX )$
about $\submodgroundset$ not contained in $\submodsetX$.
}

\subsubsection{Submodular Maximization}
\label{sec:submodularMaximization}

While the cardinality constrained submodular maximization problem is
NP complete~\cite{feige1998threshold}, it was shown in
\cite{nemhauser1978,fisher1978analysis} that the very simple and
efficient greedy algorithm finds an approximate solution guaranteed to
be within $1-1/e \approx 0.63$ of the optimal solution.
Moreover, the approximation ratio achieved by
the simple greedy algorithm is provably the best achievable in
polynomial time, assuming $P \neq NP$~\cite{feige1998threshold}.
The greedy algorithm proceeds as follows: Starting with
$\submodsetX_0 = \emptyset$, we repeat the following greedy step for
$i = 0 \dots (\submodcardcnstr-1)$:
\begin{equation}
  \submodsetX_{i+1} = \submodsetX_i \cup
  (\argmax_{\submodel \in \submodgroundset \setminus \submodsetX_i }
  \submodfun(\submodsetX_i \cup \{ \submodel \} ) )
    \label{eqn:greedy_algorithm}
\end{equation}
\submodlongonly{If there are any ties in the $\argmax$, we break them arbitrarily.}
What the above approximation result means is that if
$\submodsetX^* \in \argmax \{ \submodfun(\submodsetX) : |\submodsetX|
\leq \submodcardcnstr \}$, and if $\tilde \submodsetX$ is the result
of the greedy procedure, then
$\submodfun(\tilde \submodsetX) \geq
(1-1/e)\submodfun(\submodsetX^*)$.

\submodlongonly{
It is very easy to show where the famous $1-1/e$ approximation ratio
comes from, and we show this in the following in as simple and direct
a way as possible.  From series analysis, we can define
$e^x = \lim_{\submodcardcnstr \to \infty}
(1+x/\submodcardcnstr)^\submodcardcnstr$ and since
$(1-1/\submodcardcnstr)^\submodcardcnstr$ is increasing with
$\submodcardcnstr$ having a limit of $e^{-1}$, $1 - (1-1/\submodcardcnstr)^\submodcardcnstr$ is
decreasing with $\submodcardcnstr$ and has limit of $1-1/e$.  Then,
note that the greedy procedure creates a sequence, or a chain, of
elements at step $i$, and we define
$\submodsetX_i = \{ \submodel_1, \submodel_2, \dots, \submodel_i\}$
where $\submodel_i$ is the $i^\text{th}$ element added by greedy and
$\submodsetX_\submodcardcnstr$ is the final solution.  This
forms a chain since
$\emptyset = \submodsetX_0 \subset \submodsetX_1 \subset \submodsetX_2
\subset \dots \subset \submodsetX_i$. Also, with $\submodsetX^*$ being
an optimal set of size $\submodcardcnstr$ we have
\begin{align}
  \submodfun(\submodsetX^*)
  &\leq \submodfun(\submodsetX^* \cup \submodsetX_i) =
\submodfun(\submodsetX^* | \submodsetX_i) + \submodfun(\submodsetX_i)  \\
  &\leq \sum_{\submodel \in \submodsetX^*}
    \submodfun(\submodel | \submodsetX_i) + \submodfun(\submodsetX_i) \label{eq:kgreed_from_submod} \\
  &\leq \sum_{\submodel \in \submodsetX^*}
    \submodfun(\submodel_{i+1} | \submodsetX_i) + \submodfun(\submodsetX_i) \label{eq:kgreed_from_greedy} \\
  &= \submodcardcnstr \submodfun(\submodel_{i+1} | \submodsetX_i) + \submodfun(\submodsetX_i) \label{eq:kgreed_from_size},
\end{align}
where Equation~\eqref{eq:kgreed_from_submod} follows
from submodularity, Equation~\eqref{eq:kgreed_from_greedy} follows since greedy chooses the optimal
single element at each step, and Equation~\eqref{eq:kgreed_from_size} follows
since the given optimal solution is one of size $\submodcardcnstr$.
Then
\begin{align}
  \submodfun(\submodsetX^*) - \submodfun(\submodsetX_i)
  &\leq \submodcardcnstr \submodfun(\submodel_{i+1} | \submodsetX_i) \\
  &= \submodcardcnstr \left[ \submodfun(\submodsetX_{i+1}) - \submodfun(\submodsetX_i) \right] \\
  &= \submodcardcnstr \left[ \submodfun(\submodsetX^*) - \submodfun(\submodsetX_i) - \left[ \submodfun(\submodsetX^*) - \submodfun(\submodsetX_{i+1}) \right] \right]
\end{align}
which immediately yields
\begin{align}
  \submodfun(\submodsetX^*) - \submodfun(\submodsetX_{i+1})
  \leq (1-1/\submodcardcnstr)\left[ \submodfun(\submodsetX^*) - \submodfun(\submodsetX_i) \right]
\end{align}
When we note that $\submodfun(\submodsetX_{0}) = 0$,
apply this
$\submodcardcnstr$ times,
and use the above bound on $1-1/e$,
we get
\begin{align}
  \submodfun(\submodsetX^*)(1-1/e) \leq
  \submodfun(\submodsetX^*)(1 - (1-1/\submodcardcnstr)^\submodcardcnstr) \leq \submodfun(\submodsetX_k).
\end{align}
This means that when $k$ is one, the greedy procedure has an
approximation ratio of 1, which is clear since greedy selects the best
single element. As $k$ increases, the approximation ratio gets worse,
but it is never worse than the lower bound of $1-1/e$, but it does
mean that the smaller the $k$ the better the approximation ratio.
}

The $1-1/e$ guarantee is a powerful constant factor approximation
result since it holds regardless of the size of the initial set
$\submodgroundset$ and regardless of which polymatroid function
$\submodfun$ is being optimized.  It is possible to make this
algorithm run extremely fast using various acceleration
tricks~\cite{fisher1978analysis,
  nemhauser1978,minoux1978_accelerated_greedy}.

\submodlongonly{
A dual variant for which the greedy algorithm immediately applies is
the \emph{submodular set cover} problem, where one wishes for the
smallest set of a given valuation rather than the highest
valuation set of a given size. Here
we wish to find
\begin{align}
\submodsetX^* \in \argmin_{\submodsetX \subseteq \submodgroundset}
  |\submodsetX| \text{ such that } \submodfun(\submodsetX) \geq \alpha
\end{align}
where $\alpha$ is a minimum valuation threshold. To approximately
solve this problem, the greedy algorithm above runs until
$\submodfun(\submodsetX_{i}) \geq \alpha$ and $\submodsetX_{i}$
becomes the solution. Observe that we then get a solution that
satisfies the given constraint (since that is what determines the
stopping point). The approximation is in terms of how much bigger the
resulting set is than $|\submodsetX^*|$. For a general real-valued
polymatroid function, the size of the greedy solution
$\tilde \submodsetX = \submodsetX_{i}$ that first exceeds the
threshold satisfies the following upper bound:
$|\tilde \submodsetX| \leq |\submodsetX^*|\left(1 + \ln \frac{
    \submodfun(\submodgroundset) }{ \submodfun(\submodgroundset) -
    \submodfun(\submodsetX_{i-1})}\right)$ \citep{wolsey1982analysis}.
It is interesting to note that these approximations are the best
approximations we can do for these problems in polynomial time,
assuming $\text{P} \neq \text{NP}$.
}

A minor bit of additional information about a polymatroid function,
however, can improve the approximation guarantee. Define the total
curvature if the polymatroid function $\submodfun$ as
$\submodcurvature = 1 - \min_{\submodel \in \submodgroundset}
\submodfun(\submodel | \submodgroundset - \submodel)/
\submodfun(\submodel)$ where we assume $\submodfun(\submodel) > 0$ for
all $\submodel$ (if not, we may prune them from the ground set since
such elements can never improve a polymatroid function valuation).  We
thus have $0 \leq \submodcurvature \leq 1$,
and~\cite{conforti1984submodular} showed that the greedy algorithm
gives a guarantee of
$\frac{1}{\submodcurvature}(1 - e^{-\submodcurvature}) \geq 1-1/e$. In
fact, this is an equality (and we get the same bound) when
$\submodcurvature = 1$, which is the fully curved case. As
$\submodcurvature$ gets smaller, the bound improves, until we reach
the $\submodcurvature = 0$ case and the bound becomes unity. Observe
that $\submodcurvature = 0$ if and only if the function is modular, in
which case the greedy algorithm is optimal for the cardinality
constrained maximization problem. \submodlongonly{Extending this even further, in many
cases, the function need not even be submodular for the greedy
algorithm to have a guarantee. Given a non-negative but otherwise
arbitrary set function $\submodaltaltfun$, the submodularity
ratio~\cite{das2018approximate} can be defined as a measure of
deviation from submodularity as follows:
\begin{align}
  \gamma(\submodaltaltfun) =
  \min_{\submodsetX \subseteq \submodgroundset, \submodsetY \subseteq \submodgroundset : \submodsetY\cap \submodsetX=\emptyset}
  \frac
  {\sum_{\submodel\in \submodsetY}\submodaltaltfun(\submodel|\submodsetX)}
  {\submodaltaltfun(\submodsetY|\submodsetX)},
\end{align}
where $|\submodgroundset| = \submodgroundsetsize$.  $\submodaltaltfun$
is submodular if and only $\gamma(\submodaltaltfun) = 1$, and
otherwise $\gamma(\submodaltaltfun) < 1$ measures how far, in a sense,
$\submodaltaltfun$ is from being submodular. Moreover, a variety of
non-submodular functions can be optimized approximately with
guarantees that are a function of
$\gamma(\submodaltaltfun)$. Unfortunately, $\gamma(\submodaltaltfun)$
is in general information theoretically hard to compute.}
In some
cases, non-submodular functions can be decomposed into components that
each might be more amenable to approximation. We see below that any
set function can be written as a difference of
submodular~\cite{narasimhan2005-subsup,iyer2012-min-diff-sub-funcs}
functions, and sometimes (but not always) a given $\submodaltaltfun$
can be composed into a monotone submodular plus a monotone
supermodular function, or a BP function~\cite{bai-bp-functions-2018},
i.e., $\submodaltaltfun = \submodfun + \submodaltfun$ where
$\submodfun$ is submodular and $\submodaltfun$ is
supermodular. $\submodaltfun$ has an easily computed quantity called
the supermodular curvature
$\submodcurvature^\submodaltfun = 1 - \min_{\submodel \in
  \submodgroundset} \submodaltfun(\submodel)/\submodaltfun(\submodel |
\submodgroundset - \submodel)$ that, together with the submodular
curvature, can be used to produce an approximation ratio having the
form
$\frac{1}{\submodcurvature}(1 -
e^{-\submodcurvature(1-\submodcurvature^\submodaltfun)})$ for greedily
maximization of $\submodaltaltfun$.

\submodlongonly{
In sum, while the greedy algorithm for cardinality constrained
maximization might seem overly simple, it is actually quite powerful
and computationally unimprovable in the case of polymatroid functions.
}

\subsubsection{Discrete Constraints}

There are many other types of constraints one might desire besides a
cardinality limitation. The next simplest constraint allows each
element $\submodel$ to have a non-negative cost, say
$\submodmodfun(v) \in \real_+$. In fact, this means that the costs are
modular, i.e., the cost of any set $\submodsetX$ is
$\submodmodfun(\submodsetX) = \sum_{\submodel \in \submodsetX}
\submodmodfun(\submodel)$. A submodular maximization problem subject
to a \emph{knapsack constraint} then takes the form
$\max_{\submodsetX \subseteq \submodgroundset :
  \submodmodfun(\submodsetX) \leq \submodknapnstr }
\submodfun(\submodsetX)$ where $\submodknapnstr$ is a non-negative
budget. While the greedy algorithm does not solve this problem
directly, a slightly modified cost-scaled version of the greedy
algorithm \citep{sviridenko2004note} does solve this problem for any
set of knapsack costs. This has been used for various multi-document
summarization tasks
\citep{lin2011-class-submod-sum,hui2012-submodular-shells-summarization}.

\submodlongonly{When performing convex minimization, a natural constraint to use is a
convex subset of the reals.} There is no single direct analogy for a
convex set when one is optimizing over subsets of the set
$\submodgroundset$, but there are a few forms of discrete constraints that are
both mathematically interesting and that often occur repeatedly in
applications.

The first form is the independent subsets of a matroid. \submodlongonly{Firstly,
matroids is an area in discrete mathematics that, to do them justice,
would require an entire book~\citep{oxley2011matroid}.  Secondly, to
really understand submodularity, one must also master matroids, and
this can take some time. For the purposes of this section, we give
only the briefest of introductions to matroids, enough to see how they
can be useful for constraints in submodular maximization problems. A
matroid is an algebraic set system that consists of a set
$\submodgroundset$ and a set of subsets
$\submodmatroidsetofsets =
(\submodmatroidindsetI_1,\submodmatroidindsetI_2,\dots)$ known as the
independent sets of a matroid, where $\submodmatroidindsetI_i \subseteq \submodgroundset$
for all $i$.
A matroid can fully defined this way
and is often indicated by the pair
$(\submodgroundset,\submodmatroidsetofsets)$. To be a matroid, the set
of sets must have certain properties, namely: (1)
$\emptyset \in \submodmatroidsetofsets$; (2) If
$\submodsetY \in \submodmatroidsetofsets$ and
$\submodsetX \subset \submodsetY$, then
$\submodsetX \in \submodmatroidsetofsets$; and (3) If
$\submodsetX,\submodsetY \in \submodmatroidsetofsets$ with
$|\submodsetX| < |\submodsetY|$, then
$\exists \submodel \in \submodsetY \setminus \submodsetX$ such that
$\submodsetY + \submodel \in \submodmatroidsetofsets$.  The first
property says that the set of sets is not empty and must contain the
empty set at the very least. The second property says that the set of
sets are closed under subsets. The third property is a form of
exchangeability, meaning that if we have two independent sets, one
larger than the other, there must be an element in the larger
independent set that can be added to the smaller independent set
without loss of independence.  You would not be wrong if the set of
independent sets reminds you of sets of independent vectors in a
vector space --- in fact, matroids are a purely algebraic generalization of
linear independence properties of sets of vectors.  Importantly, it is
a strict generalization, meaning that there are matroids that are not
representable as independent vectors in a vector space over any field
\citep{oxley2011matroid}.

}The independent sets of a matroid are useful to represent a constraint
set for submodular maximization
\cite{calinescu2007maximizing,lee2009submodular,lee2010maximizing},
$\max_{\submodsetX \in \submodmatroidsetofsets }
\submodfun(\submodsetX)$, and this can be useful in many ways. We can
see this by showing a simple example of what is known as a
\emph{partition matroid}. Consider a partition
$\submodgroundset = \{ \submodgroundset_1, \submodgroundset_2, \dots,
\submodgroundset_\submodpartmatroidnumblocks\}$ of $\submodgroundset$
into $\submodpartmatroidnumblocks$ mutually disjoint subsets that we
call blocks. Suppose also that for each of the
$\submodpartmatroidnumblocks$ blocks, there is a positive integer
limit $\submodpartmatroidlimit_i$ for
$i \in [\submodpartmatroidnumblocks]$.  Consider next the set of sets
formed by taking all subsets of $\submodgroundset$ such that each
subset has intersection with $\submodgroundset_i$ no more than
$\submodpartmatroidlimit_i$ for each $i$. I.e., consider
\begin{align}
 \submodmatroidsetofsets_\text{p} = \{ \submodsetX : \forall i \in [\submodpartmatroidnumblocks], |\submodgroundset_i \cap \submodsetX | \leq \submodpartmatroidlimit_i \}.
\end{align}
Then $(\submodgroundset,\submodmatroidsetofsets_\text{p})$ is a
matroid. The corresponding submodular maximization problem is a
natural generalization of the cardinality constraint in that, rather
than having a fixed number of elements beyond which we are
uninterested, the set of elements $\submodgroundset$ is organized
into groups, and here we have a fixed per-group limit beyond which we
are uninterested. This is useful for fairness applications since
the solution must be distributed over the blocks of the matroid.
Still, there are many much more powerful types of
matroids that one can use~\citep{oxley2011matroid,gordon2012matroids}.

Regardless of the matroid, the problem
$\max_{\submodsetX \in \submodmatroidsetofsets }
\submodfun(\submodsetX)$ can be solved, with a 1/2 approximation
factor, using the same greedy algorithm as above \citep{nemhauser1978,
  fisher1978analysis}. Indeed, the greedy algorithm has an intimate
relationship with submodularity, a fact that is well studied in some
of the seminal works on submodularity
\citep{edmonds1970matroids,lovasz1983submodular, schrijver2004}.  It
is also possible to define constraints consisting of an
\emph{intersection of matroids}, meaning that the solution must be
simultaneously independent in multiple distinct matroids.  Adding on
to this, we might wish a set to be independent in multiple matroids
and also satisfy a knapsack constraint. 
Knapsack
constraints are not matroid constraints, since there can be multiple
maximal cost solutions that are not the same size (as must be the
case in a matroid).  It is also possible to define discrete
constraints using level sets of another completely different
submodular function \citep{rishabh2013-submodular-constraints} ---
given two submodular functions $\submodfun$ and $\submodaltfun$, this
leads to optimization problems of the form
$\max_{\submodsetX \subseteq \submodgroundset :
  \submodaltfun(\submodsetX) \leq \submodscscscskthreshold }
\submodfun(\submodsetX)$ (the submodular cost submodular knapsack, or
SCSK, problem) and
$\min_{\submodsetX \subseteq \submodgroundset :
  \submodaltfun(\submodsetX) \geq \submodscscscskthreshold }
\submodfun(\submodsetX)$ (the submodular cost submodular cover, or
SCSC, problem).  Other examples include covering constraints
\cite{iwata2009submodular}, and cut constraints
\cite{jegelka-coop-cut-journal-2016}.  Indeed, the type of constraints on
submodular maximization for which good and scalable algorithms exist
is quite vast, and still growing.

One last note on submodular maximization. In the above, the function
$\submodfun$ has been assumed to be a polymatroid function. There are
many submodular functions that are not
monotone~\citep{feldman2012optimal}.  One example we saw before,
namely the graph cut function.  Another example is the log of the
determinant (log-determinant) of a submatrix of a positive-definite
matrix (which is the Gaussian entropy plus a constant). Suppose that
$\submodDPPmatrix$ is an
$\submodgroundsetsize \times \submodgroundsetsize$ symmetric
positive-definite (SPD) matrix, and that
$\submodDPPmatrix_\submodsetX$ is a row-column submatrix (i.e., it is
an $|\submodsetX| \times |\submodsetX|$ matrix consisting of the rows
and columns of $\submodDPPmatrix$ consisting of the elements in
$\submodsetX$). Then the function defined as
$\submodfun(\submodsetX) = \log \det ( \submodDPPmatrix_\submodsetX )$
is submodular but not necessarily monotone non-decreasing.  In fact,
the submodularity of the log-determinant function is one of the
reasons that \emph{determinantal point processes} (DPPs), which
instantiate probability distributions over sets in such a way that
high probability is given to those subsets that are diverse according
to $\submodDPPmatrix$, are useful for certain tasks where we wish to
probabilistically model diversity~\citep{kulesza2011k}.  Diversity of
a set $\submodsetX$ here is measured by the volume of the
parallelepiped which is known to be computed as the determinant of the
submatrix $\submodDPPmatrix_\submodsetX$ and taking the log of this
volume makes the function submodular in $\submodsetX$.  A DPP in fact
is an example of a log-submodular probabilistic model (more in
Section~\ref{sec:prob-model}). \submodlongonly{Also, a relatively
  simple randomized greedy algorithm~\citep{feldman2012optimal} can
  yield a 1/2 approximation algorithm for maximizing any non-monotone
  submodular function, a class that includes not only DPPs but also
  graph cuts and differences between a submodular and a modular
  function. Interestingly, the log-determinant can be seen as a
  function of spectral decompositions of principle submatrices of a
  given SPD matrix, but it is not the only one that is
  submodular~\cite{friedland2011submodular}.
}

\subsubsection{Submodular Function Minimization}
\label{sec:subm-minim}

\submodlongonly{
So far, we have spoken mostly about submodular maximization, but
submodular minimization is also a fascinating study, perhaps even more
so than submodular maximization. We discussed, for example, the MAP
inference problem (amongst others) above.
}

In the case of a polymatroid function, unconstrained minimization is
again trivial.  However, even in the unconstrained case, the
minimization of an arbitrary (i.e., not necessarily monotone)
submodular function
$\min_{\submodsetX \subseteq \submodgroundset}
\submodfun(\submodsetX)$ might seem hopelessly intractable.
Unconstrained submodular maximization is NP-hard (albeit approximable),
and this is not surprising given that there are an exponential number
of sets needing to be considered. Remarkably, submodular minimization
does not require exponential computation, is not NP-hard, and in fact,
there are polynomial time algorithms for doing so, something that is
not at all obvious. This is one of the important characteristics that
submodular functions share with convex functions, their common
amenability to minimization. Starting in the very late 1960s and
spearheaded by individuals such as Jack
Edmonds~\cite{edmonds1970matroids}, there was a concerted effort in
the discrete mathematics community in search of either an algorithm
that could minimize a submodular function in polynomial time or a
proof that such a problem was NP-hard.  The nut was finally cracked in
a classic paper~\citep{grotschel1981ellipsoid} on the ellipsoid
algorithm that gave a polynomial time algorithm for submodular
function minimization (SFM). While the algorithm was polynomial, it
was a continuous algorithm, and it was not practical, so the search
continued for a purely combinatorial strongly polynomial time
algorithm.  Queyranne~\cite{queyranne98} then proved that an
algorithm~\cite{nagamochi1992computing} worked for this problem when the set
function also satisfies a symmetry condition (i.e.,
$\forall \submodsetX \subseteq \submodgroundset,
\submodfun(\submodsetX) = \submodfun(\submodgroundset \setminus
\submodsetX)$), which only requires $O(n^3)$ time. The result
finally came around the year
2000 using two mostly independent
methods~\cite{iwata00,schrijver2000combinatorial}. These algorithms,
however, also were impractical in that while they are polynomial time,
they have unrealistically high polynomial degree (i.e.,
$\tilde O(\submodgroundset^7*\gamma + \submodgroundset^8)$ for
\cite{schrijver2000combinatorial} and
$\tilde O(\submodgroundset^7*\gamma)$ for~\cite{iwata00}).  This led
to additional work on combinatorial algorithms for SFM leading to
algorithms that could perform SFM in time
$\tilde O(\submodgroundset^5\gamma + \submodgroundset^6)$
in~\cite{iwata2009simple}.  Two practical algorithms for SFM include
the Fujishige-Wolfe
procedure~\citep{fujishige2005submodular,wo76}\footnote{This is the
  same Wolfe as the Wolfe in Frank-Wolfe but not the same algorithm.}
as well as the Frank-Wolfe procedure, each of which minimize the
2-norm on a polyhedron $\submodbasepoly_\submodfun$ associated with
the submodular function $\submodfun$ and which is defined below (it
should also be noted that the Frank-Wolfe algorithm can also be used
to minimize the convex extension of the function, something that is
relatively easy to compute via the \lovasz{} extension~\cite{lovasz1983submodular}\submodlongonly{, see below}).
More recent work on SFM are also based continuous relaxations of the
problem in some form or another, leading algorithms with strongly
polynomial running time~\cite{lee2015faster} of
$O(\submodgroundsetsize^3 \log^2 n)$ for which it was possible to drop
the log factors leading to a complexity of $O(\submodgroundsetsize^3)$
in~\cite{jiang2021minimizing}, weakly-polynomial running
time~\cite{lee2015faster} of $\tilde O(\submodgroundsetsize^2 \log M)$
(where $M >= \max_{S \subseteq V} |\submodfun(S)|$), pseudo-polynomial
running time~\cite{axelrod2020near,chakrabarty2017subquadratic} of
$\tilde O(\submodgroundsetsize M^2)$, and a $\epsilon$-approximate
minimization with a linear running time~\cite{axelrod2020near} of
$\tilde O (\submodgroundsetsize/\epsilon^2)$.  There have been other
efforts to utilize parallelism to further improve SFM
\cite{balkanski2020lower}.

\submodlongonly{
What is remarkable about all of the above algorithms is that while
submodular functions live in a cone in
$2^\submodgroundsetsize$-dimensional space, this cone has a non-empty
(measurable) interior. This means that it is not possible to map this
cone of points down to a lower-dimensional space, or equivalently,
that submodular functions in general have $2^\submodgroundsetsize$
independent degrees of freedom (at least locally). This can be seen by
considering strictly submodular functions, ones for which the
inequality is never an equality, even such simple functions like
$\sqrt{|\submodsetX|}$.  Even in such rich a space, submodular
functions can be minimized with so few oracle queries to
$\submodfun$.  The above polytime
(and in particular the $\tilde O (\submodgroundsetsize/\epsilon^2)$)
results are particular interesting. As
$\submodgroundsetsize \to \infty$ the ratio of points that are queried
to minimize a submodular function, and independent points the
submodular function contains, goes to zero.  This is quite analogous
to convex functions.

Practically speaking, with so many algorithms capable of minimizing a
submodular function, it may not be obvious which one to choose. In
general, the practical utility of the various algorithms, even the
ones with lower asymptotic complexity, have not been compared to the
Fujishige-Wolfe and Frank-Wolfe procedures (which are the ones used
most often in practice for general purpose submodular function
minimization). If the submodular function is a special case, however
(e.g., the graph cut function) then it is in general better to use an
algorithm specifically geared towards exploiting the additional extant
structure. Also, sometimes the minimum norm can be slow, and even
though SFM is polytime, it is sometimes useful to approximately
minimize a submodular function~\cite{jegelka2011-fast-approx-sfm}.
}

\submodlongonly{
\subsubsection{Other Submodular Optimization Problems}
\label{sec:other-subm-optim}

Besides maximization and minimization, there are other discrete
optimization problems involving submodular that are interesting and
useful. In this section, we consider several of them.

Firstly, the above submodular greedy maximization procedure has a
guarantee only if the function is submodular and monotone
non-decreasing. A simple trick can be used to find a solution when the
function is monotone non-increasing. Suppose that
$\submodfun$ is submodular and monotone non-increasing, so that
$\submodfun(\submodsetX) \geq \submodfun(\submodsetY)$ whenever
$\submodsetX \supseteq \submodsetY$.  We define a function
$\submodaltfun$ so that
$\submodaltfun(\submodsetX) = \submodfun(\submodgroundset \setminus
\submodsetX)$, so now, whenever $\submodsetX \supseteq \submodsetY$,
$\submodaltfun(\submodsetX) = \submodfun(\submodgroundset \setminus
\submodsetX)) \leq \submodfun(\submodgroundset \setminus \submodsetY)
= \submodaltfun(\submodsetY)$ and hence $\submodaltfun$ is monotone
non-decreasing and is also submodular. Using the greedy algorithm, or whatever
procedure we wish, let $\tilde \submodsetX$
be a set of size $\submodcardcnstr$ having
$\submodaltfun(\tilde \submodsetX) \geq \alpha
\submodaltfun(\submodsetX^*)$ where
$\submodsetX^* \in \argmax_{\submodsetX \subseteq \submodgroundset : |\submodsetX| \leq \submodcardcnstr } \submodaltfun(\submodsetX)$. Thus,
$\submodfun(\submodgroundset \setminus \tilde \submodsetX) \geq \alpha
\submodfun(\submodgroundset \setminus \submodsetX)$
for all $\submodsetX \subseteq \submodgroundset$
with $|\submodsetX| = \submodcardcnstr$. Thus,
we've found a set
$\tilde \submodsetZ = \submodgroundset \setminus \tilde \submodsetX$
of size $\submodcardcnstr' = \submodgroundsetsize - \submodcardcnstr$
such that 
$\submodfun(\tilde \submodsetZ) \geq \alpha
\submodfun(\submodsetZ)$ for any set
$\submodsetZ$ of size $\submodcardcnstr'$.

Often, however, the submodular function is itself neither monotone
non-decreasing nor monotone non-increasing, e.g., the graph cut or the
log-determinant functions. In such cases, there are both randomized and
deterministic
algorithms~\cite{feldman2012optimal,buchbinder2018deterministic}
some of which are quite simple. For example, the
bidirectional greedy algorithm of~\cite{feldman2012optimal} takes
elements in a random order, and randomly inserts the element into
either a growing set or removes the element from a shrinking set depending on
probabilities based on the gain change. These algorithms give a
$O(1/2)$ guarantee for unconstrained non-monotone submodular
maximization, which is the best one can do in polynomial time. 

In some cases, submodular maximization alone does not suffice, and we
wish for the solution to be robust to deletions.  For example, in
summarization problems (see Section~\ref{sec:core-sets-summ}), once a
summary has been constructed, a deletion request might be made on a
certain number of elements in the summary. We may therefore wish
to ensure that the remaining summary after deletion is still highly
valued by the submodular function. This idea leads to an optimization
problem~\cite{orlin2018robust} of the form:
\begin{align}
  \max_{\submodsetX \subseteq \submodgroundset : |\submodsetX| \leq \submodcardcnstr }
  \min_{\submodsetY \subseteq \submodsetX : |\submodsetY| \leq \submodcardcnstr' }
  \submodfun(\submodsetX \setminus \submodsetY)
\end{align}
Here we wish to maximize the submodular function under a cardinality
constraint of size $\submodcardcnstr$ but in such a way that the
solution is robust even to the worst-case deletion of a set of size
$\submodcardcnstr'$. 
For any set
$\submodsetY \subseteq \submodsetX$, the function
$\submodfun_\submodsetY(\submodsetX) = \submodfun(\submodsetX
\setminus \submodsetY)$ is submodular in $\submodsetX$ and thus the
above optimization problem can
be expressed as the maximization
of the minimum over of a set of submodular
functions~\cite{krause2008robust}.
\begin{align}
  \max_{\submodsetX \subseteq \submodgroundset : |\submodsetX| \leq \submodcardcnstr }
  \min_{i} \submodfun_i(\submodsetX)
\end{align}
Seeing the problem this way is not without a loss of useful structure
that is exploited in~\cite{orlin2018robust}, but if the inner
minimization is over only a relatively small number of submodular
functions, then a bi-criterion guarantee can be obtained using a
binary search
procedure~\cite{krause2008robust,powers-constrained-submod-fusion2016,cotter2018constrained}.

As mentioned elsewhere, clustering is the problem of grouping items
together that are homogeneous in some way. A submodular function, when
maximized, prefers heterogeneous items and when minimized prefers
homogeneous items. Using these notions, there are ways to use
submodular objectives to determine such properties in a way that is
very flexible, as flexible as the space of submodular functions. The
general approach is to partition the set $\submodgroundset$ into
blocks. Here, the submodular function itself is used to construct the
partition, unlike with a partition matroid (above) where the partition
is pre-constructed and used as a constraint. The most general form of
problems takes the following forms:
\begin{align}
\text{Diverse Blocks Partitioning Problem: } &
  \max_{\pi \in \Pi_\submodpartmatroidnumblocks} \Bigl[\bar\lambda\min_{i} \submodfun_i(\submodsetX_i^{\pi}) + \frac{\lambda}{\submodpartmatroidnumblocks} \sum_{j = 1}^\submodpartmatroidnumblocks \submodfun_j(\submodsetX^{\pi}_j)\Bigr], \label{prob:maxmin-genmix}  \\
\text{Homogeneous Block Partitioning Problem: } &
  \min_{\pi \in \Pi_\submodpartmatroidnumblocks} \Bigl[\bar\lambda\max_{i} \submodfun_i(\submodsetX^\pi_i) + \frac{\lambda}{\submodpartmatroidnumblocks} \sum_{j = 1}^\submodpartmatroidnumblocks \submodfun_j(\submodsetX^\pi_j)\Bigr].
  \label{prob:minmax-genmix}
\end{align}
where $0 \leq \lambda \leq 1$, is a tradeoff parameter,
$\bar \lambda \triangleq 1-\lambda$ is the complement of the tradeoff
parameter, the set of sets
$\pi = (\submodsetX^\pi_1, \submodsetX^\pi_2, \cdots,
\submodsetX^\pi_\submodpartmatroidnumblocks)$ is a partition of the
ground set $\submodgroundset$ (i.e, recall from above,
$\cup_i \submodsetX^\pi_i = \submodgroundset$ and
$\forall i\neq j, \submodsetX^\pi_i \cap \submodsetX^\pi_j =
\emptyset$), and $\Pi_\submodpartmatroidnumblocks$ refers to the set
of all possible partitions of $\submodgroundset$ into
$\submodpartmatroidnumblocks$ blocks.  The parameter $\lambda$
controls the objective: $\lambda=1$ is the average case, $\lambda=0$
is the robust case, and $0 < \lambda < 1$ is a mixed case. In general,
these problems are intractable even to approximate. Assuming
$\submodfun_1, \submodfun_2, \cdots,
\submodfun_\submodpartmatroidnumblocks$ are all polymatroid functions,
however, enables relatively simple algorithms to have approximation
guarantees. Polymatroid functions also mean that the partitioning
problems are natural to a variety of practical problems.  Often it is
assumed that the $f_i$'s are identical to each other (the
\emph{homogeneous} case) as the problem becomes similar, while the
more general \emph{heterogeneous} case is stated above.  Taken
together, these problems are called \emph{Submodular Partitioning}.

Problem~\ref{prob:maxmin-genmix} asks for a partition whose blocks
each (and that collectively) have a high valuation according to the
polymatroid functions. Problem~\ref{prob:maxmin-genmix} with
$\lambda=0$ is called \emph{submodular fair allocation} (SFA), where
the goal is to choose the partition so that the worse valued block is
as large as possible. If one thinks of $\submodgroundset$ as
consumable goods that can be allocated to people, and $\submodfun_i$
as a valuation function for person $i$, this problem ensures that the
poorest person gets well taken care of.
Problem~\ref{prob:maxmin-genmix} for $\lambda=1$, on the other hand,
is called \emph{submodular welfare}. Again, when $\submodgroundset$ is
a set of goods, this procedure asks only that the average allocation
of goods to consumers is high, even if the partition allocates little
to the poorest and much to the richest individual.  For various
additional assumptions on the submodular functions, there is much to
exploit. For the $\lambda=0$ case,
see~\cite{asadpour2010approximation,golovin2005max,khot2007approximation},
while for the $\lambda=1$ case,
see~\cite{vondrak2008optimal,fisher1978analysis,vondrak2008optimal}.

Problem~\ref{prob:minmax-genmix} asks for a partition whose blocks
each (and that collectively) are internally homogeneous. This is
typically the goal in clustering algorithms so we can think of
problem~\ref{prob:minmax-genmix} as a submodular generalization of
clustering.  When $\lambda=0$, Problem~\ref{prob:minmax-genmix} is
called the \emph{submodular load balancing} (SLB) problem and when
$f_i$'s are all modular, it is called {\em minimum makespan
  scheduling}. Like above, are solutions for such special
cases~\cite{hochbaum1988polynomial,lenstra1990approximation}. Even in
the homogeneous submodular case, however, the problem is shows that
the problem~\cite{svitkina2008submodular} is information theoretically
hard to approximate within $o(\sqrt{n/\log n})$.  When $\lambda=1$,
problem~\ref{prob:minmax-genmix} becomes the \emph{submodular multiway
  partition} (SMP) problem for which one can obtain
$2$-approximations~\cite{chekuri2011approximation,zhao2004generalized,
  narasimhan2005-q} in the homogeneous case and
$O(\log n)$~\cite{chekuri2011submodularSCA} in the heterogeneous case.

The general $0 \leq \lambda \leq 1$ case for both
Problem~\ref{prob:maxmin-genmix} and~\ref{prob:minmax-genmix} was
addressed in~\cite{kaiwei2015nips_submod_partitioning} and in the
Problem~\ref{prob:maxmin-genmix}, this was extended
in~\cite{cotter2018constrained} to include a submodular
cross-block diversity term in the objective, i.e., one that
prefers not only intra-block but also inter-block diversity.

Even when the function of interest is not submodular, but is
\emph{approximately} submodular, or decomposable into submodular
components, algorithms for submodular optimization can be relied on to
behave reasonably well.  For example, another form of submodular
optimization strives to minimize a non-submodular function that has
been decomposed into a difference of submodular functions.  Let
$\submodaltaltfun: 2^\submodgroundset \to \real$ be {\bf any}
arbitrary set function, meaning neither submodular nor supermodular
nor possessing any other useful structural property. Then
there are two
polymatroid functions
$\submodfun$ and $\submodaltfun$
such that
we can
decompose $\submodaltaltfun$ into a difference ---
this means that for all $\submodsetX$,
$\submodaltaltfun(\submodsetX) = \submodfun(\submodsetX) -
\submodaltfun(\submodsetX)$.
Very much like the difference of concave
(DC) optimization, this is known as a difference of submodular (or DS)
decomposition~\cite{narasimhan2005-subsup,iyer2012-min-diff-sub-funcs}
of $\submodaltaltfun$ and we may, exploiting the properties of this
decomposition, either maximize or minimize
$\submodaltaltfun$.  This can be particularly useful when the function
$\submodaltaltfun$ has a natural decomposition, for example the mutual
information function when seen as a set function where
$\submodaltaltfun(\submodsetX) = I(\entropyrv_\submodsetX;
\entropyrv_\submodsetY) = H(\entropyrv_\submodsetX) -
H(\entropyrv_\submodsetX | \entropyrv_\submodsetY) =
\submodfun(\submodsetX) - \submodaltfun(\submodsetX)$.  This is a DS
decomposition since both
$\submodfun(\submodsetX) = H(\entropyrv_\submodsetX)$ and
$\submodaltfun(\submodsetX) = H(\entropyrv_\submodsetX |
\entropyrv_\submodsetY)$ are polymatroid functions in $\submodsetX$.
In fact, submodular functions can be used to define generalized
combinatorial forms of mutual and conditional mutual
information~\cite{iyer-cmi-alt-2021}, ones that are much more
practical for a variety of problems since even computing one entropic
query can involve a complex probabilistic inference problem.  A
special case of DS optimization asks to maximize the sum of a monotone
submodular and a monotone supermodular (or a BP) function and this was
studied in~\cite{bai-bp-functions-2018} (more on this below).
}

\submodlongonly{

Algorithms for DS and BP optimization
use a form of discrete semi-gradients, analogous to semi-gradients
that exists for concave and convex functions, except for here there
are sub-gradient and super-gradients that are available both for
submodular and supermodular functions. This is another reason that
submodular functions are both like and unlike both convex and concave
functions.  Related to DS problems are those that involve maximizing
one submodular objective while using another submodular objective as a
constraint, which use the second function to define submodular
sublevel sets in problems such as
$\max_{\submodsetX} \submodfun(\submodsetX)$ subject to
$\submodaltfun(\submodsetX) \leq \alpha$ or alternatively,
$\min_{\submodsetX} \submodfun(\submodsetX)$ subject to
$\submodaltfun(\submodsetX) \geq \alpha$, as was introduced
in~\cite{rishabh2013-submodular-constraints}. Instances of using
multiple submodular functions in this way, and comparing DS and
constrained optimization formulations, was explored for speech
recognition applications
in~\cite{liu-svitchboard-fisver-submodular-2017}.
}

\submodlongonly{
\subsection{Continuity, Polyhedra, and Extensions}
\label{sec:cont-polyh-extens}

One of the strategies for optimizing submodular functions is via
continuous relaxations. On this topic, another way that submodular
functions are like convex functions, and that is related to their
continuous extension. Recall from above that a submodular function is
defined by offering a value to every vertex of the
$\submodgroundsetsize$-dimensional hypercube.

Recall, any continuous valued function can be transformed into a
unique convex function via its convex envelope. That is, given any
function $h : \mathcal D_h \to \mathbb R$, where
$\mathcal D_h \subseteq \mathbb R^n$, we may define a new function
$\submodccl h: \mathbf R^n \to \mathbb R$ via the following
\begin{align}
  \submodccl h(x) = \sup \{ g(x) : g \text{ is convex \& }
    g(y) \leq h(y), \forall y \in \mathcal D_h  \}.
\end{align}
It can be shown that (1) $\submodccl h(x)$ is convex, (2)
$\submodccl h(x) \leq h(x), \forall x$, and (3) if $g(x)$ is any
convex function having the property that $g(x) \leq h(x), \forall x$,
then $g(x) \leq \submodccl h(x)$.  The function $\submodccl h(x)$ is
the convex envelope of $h$, and is in some sense the
everywhere-largest-valued function lower than $h$ that is still convex.

Submodular functions are discrete functions, so in order to extend
them to a continuous function, and then compute the convex envelope,
we must start by placing the value $\submodfun(\submodsetX)$ for each
$\submodsetX$ at some coordinate in continuous space, and for this
purpose we use the vertices of the hypercube. A \emph{continuous
  extension} $\bar \submodfun : \real^\submodgroundset \to \real$ of a
set function $\submodfun : 2^\submodgroundset \to \real$ is a
continuous function that agrees with the set function on all vertices
of the hypercube.  This means that
$\bar \submodfun(\submodcharv_\submodsetX) = \submodfun(\submodsetX)$
for all $\submodsetX \subseteq \submodgroundset$.  Any set function
has an infinite number of continuous extensions, and any set function
has an infinite number of extensions that may be either convex or
concave.  The tightest convex extension of an arbitrary set function
is known as the \emph{convex closure}, which is a function
$\submodccl \submodfun(\submodvecx) : [0,1]^\submodgroundset \to
\real$ defined in the following way:
\begin{align}
  \submodccl \submodfun(\submodvecx) = \min_{p \in \triangle ^\submodgroundsetsize(\submodvecx)} \sum_{\submodsetX \subseteq \submodgroundset} p_\submodsetX \submodfun(\submodsetX)
\label{eq:convexclosure}
\end{align}
where
$\triangle ^\submodgroundsetsize(\submodvecx) = \{ p \in
\real^{2^\submodgroundsetsize} : \sum_{\submodsetX \subseteq
  \submodgroundset} p_\submodsetX = 1; \; p_\submodsetX \geq 0, \;
\forall \submodsetX \subseteq \submodgroundset ; \text{ \& }
\sum_{\submodsetX \subseteq \submodgroundset} p_\submodsetX
\submodcharv_\submodsetX = x \}$ is the probability simplex in
$2^\submodgroundsetsize$ dimensions. It can be shown that
$\submodccl \submodfun(\submodvecx)$ tight (i.e.,
$\forall \submodsetX \subseteq \submodgroundset$, we have
$\submodccl \submodfun(\submodcharv_\submodsetX) =
\submodfun(\submodsetX)$), that $\submodccl \submodfun$ is convex (and
thus, that any arbitrary set function has a tight convex extension),
that the convex closure $\submodccl \submodfun$ is the convex envelope
of a function taking values only on the hypercube vertices, and which
takes value $\submodfun (\submodsetX)$ on hypercube vertex
$\submodcharv_\submodsetX$ for all
$\submodsetX \subseteq \submodgroundset$.

We say it is tightest such
convex continuous extension of $\submodfun$ in the sense that it is
the convex envelope.  It is also apparent that the function is defined
as a minimization over an exponentially large (in
$\submodgroundsetsize$) space of a sum over an exponential number of
terms. Some immediate and interesting questions, then, to ask are: (1)
when is $\submodccl \submodfun(\submodvecx)$ computationally feasible
to obtain or estimate?  (2) When does
$\submodccl \submodfun(\submodvecx)$ have interesting mathematical
properties?  and (3) When is $\submodccl \submodfun(\submodvecx)$
useful as a surrogate for a given $\submodfun$? It turns out that when
$\submodfun$ is submodular, there will be good answers to all three
questions as we shall now see.

The first quantity we will need to define is the submodular
polyhedron, i.e., the polyhedron $\mathcal P_\submodfun$ associated
with a submodular function $\submodfun$. 
\begin{equation}
  \mathcal P_\submodfun = \{ \submodmodaltfun \in \real^\submodgroundset :
  \submodmodaltfun(\submodaltsetX) \leq \submodfun(\submodaltsetX), \forall
  \submodaltsetX \subseteq \submodgroundset \}
\end{equation}
This polyhedron lies within an $|\submodgroundset|$-dimensional space
and is defined via an exponential set of inequalities. It can be shown
that this polyhedron contains the same information as $\submodfun$
(i.e., given $\mathcal P_\submodfun$ we can reconstruct $\submodfun$).
When $\submodfun$ is a {\em polymatroid function}, then
$\mathcal P_\submodfun \cap \real_+$ is called a polymatroid since,
for a variety of reasons, it is a natural polyhedral generalization of
a matroid. It is for this reason that non-negative monotone
non-decreasing submodular functions are called polymatroid functions
as we mentioned above. We also define a polytope associated with a
submodular function called the {\em base polytope} defined as
\begin{align}
  \mathcal B_\submodfun =
  \mathcal P_\submodfun \cap
  \left\{ x \in \mathbb \real^\submodgroundset :  x(\submodgroundset) = \submodfun(\submodgroundset) \right\}
  \label{eq:base_polytope}
\end{align}
Since $\mathcal B_\submodfun$ can also have an exponential (in
$\submodgroundsetsize$) number of facets, we would expect that using
$\mathcal B_\submodfun$ as a constraint in a linear programming
problem would lead to an intractability. Indeed, consider the
following parameterized linear programming problem which defines a
function $\submodlex \submodfun : \real^V \to \real$:
\begin{align}
  \submodlex \submodfun(\submodvecx) = \max( \transpose{\submodvecx} \submodvecy : \submodvecy \in \mathcal B_\submodfun ).
\label{eq:lexlinearprogramming}
\end{align}
This function is convex (since it maximizes over a set of linear
functions) but interestingly, when $\submodfun$ is submodular, we have
that
$\submodlex \submodfun(\submodvecx) = \submodccl
\submodfun(\submodvecx)$.  Even more interestingly, it is possible to
compute this function easily using what is known as the Lov\'asz
extension, defined as follows. Given a particular $\submodvecx$,
find an ordering $\submodorder = (\submodorder_1,\submodorder_2, \dots, \submodorder_\submodgroundsetsize)$ of the elements in $\submodgroundset$
where $\submodorder_i \in [\submodgroundset]$ that
sorts the elements of $\submodvecx$ descending, meaning
that
$\submodvecx_{\submodorder_1} \geq \submodvecx_{\submodorder_2} \geq
\dots \geq \submodvecx_{\submodorder_\submodgroundsetsize}$.
We also define a series of sets
$\submodsetchain_i = \{ \submodorder_1, \submodorder_2, \dots, \submodorder_i \}$
for $i \in [\submodgroundsetsize]$; thus
$\emptyset = \submodsetchain_0 \subset
\submodsetchain_1 \subset \submodsetchain_2 \subset \dots \subset
\submodsetchain_\submodgroundsetsize = \submodgroundset$. Such
a sequence of sets is called a \emph{chain}.
Then
Lov\'asz~\cite{lovasz1983submodular} showed that
\begin{align}
  \submodlex \submodfun(\submodvecx) &=
  \submodvecx_{\submodorder_\submodgroundsetsize}\submodfun(\submodgroundset)
  + \sum_{i=1}^{\submodgroundsetsize-1}
                                       (\submodvecx_{\submodorder_i} - \submodvecx_{\submodorder_{i+1}})\submodfun(\submodsetchain_i) 
  = \sum_{i=1}^{\submodgroundsetsize} \submodlexweight_i \submodfun(\submodsetchain_i)
\label{eq:lovaszextension}
\end{align}
where $\submodlexweight_i$ is defined accordingly. We see that
computing this expression is relatively easy --- it requires only a
sort of the elements of $\submodvecx$ and then
$O(\submodgroundsetsize)$ evaluations of $\submodfun$, much better
than the exponential costs seen above.  The right-hand side of
Equation~\eqref{eq:lovaszextension}, in and of itself, by no means is
guaranteed to be convex.  Lov\'asz~\cite{lovasz1983submodular} showed,
however, that not only is this expression equal to the convex closure
of the submodular function, but that the expression on the right of
Equation~\eqref{eq:lovaszextension} is convex if and only if
$\submodfun$ is submodular. This is a strong relationship between
submodular and convex functions and is one of the reasons it was
possible to show that submodular function minimization was possible in
polynomial time~\citep{grotschel1981ellipsoid} as mentioned earlier.
One should also be aware that the Lov\'asz extension is equivalent to
the Choquet integral~\cite{choquet1953theory} which was defined much
earlier.

On the other hand, fondly recall the simple example of submodular
function $\submodfun(\submodsetX) = \sqrt{|\submodsetX|}$ or the
feature-based submodular functions
$\submodfun(\submodaltsetX) = \sum_{\submodfeatureel \in
  \submodfeatureset} \submodconcavefun_\submodfeatureel
(\sum_{\submodaltelx \in \submodaltsetX}
\submodmodfun_{\submodfeatureel,\submodaltelx})$ where
$\submodconcavefun_\submodfeatureel$ are concave.  So, while the convex
closure of a submodular function is easy to compute, and the form of
the extension is convex if and only if the function $\submodfun$ is submodular,
it is a concave (not a convex) function composed with a modular
function that is submodular. Thus, asking if a submodular function is
more concave-like or more convex-like, or neither, is really a
meaningless question. Indeed, it is important to realize that submodularity defines
inequalities between variables of different dimensions, while
convexity also requires a certain relationship even amongst values in
the same dimension and is convex in all directions (e.g., if
$\submodconvexfun : \real^\submodgroundsetsize \to \real$ is a convex
function and we define
$\bar \submodconvexfun_{x,y}(\alpha) = \submodconvexfun(x + \alpha
y)$, then $\bar \submodconvexfun : \real \to \real$ is also
convex). There is no such corresponding operation possible with
submodular functions, although there are analogous argument
restrictions for set functions that preserve submodularity (these
include subset restrictions, as we saw above).

Speaking of concave composed with modular functions, recall that
$\submodconcavefun(\submodmodfun(\submodsetX))$ is submodular whenever
$\submodconcavefun: \real \to \real$ is concave.  Supposing instead
that $\submodconcavefun : \real^\submodgroundset \to \real$ is a
multivariate real-valued function, and then we construct set function
as via
$\submodfun(\submodsetX) =
\submodconcavefun(\submodcharv_\submodsetX)$.  We can easily see, by
considering multivariate quadratic functions, that concavity of
$\submodconcavefun$ is insufficient to ensure the submodularity of
$\submodfun(\submodsetX)$.
Consider a concave $\submodconcavefun : \real^2 \to \real$
defined as $\submodconcavefun(\submodrealvectorx) =
\transpose{\submodrealvectorx}\submodhessianmatrix \submodrealvectorx$
where $\submodhessianmatrix = \begin{psmallmatrix} -2 & 1\\ 1 & -3\end{psmallmatrix}$.
This matrix has negative eigenvalues, and thus $\submodconcavefun$
is concave, but we see that 
$\submodconcavefun(\submodcharv_\submodsetX)
+ \submodconcavefun(\submodcharv_\submodsetY) = -3 - 2
<
-3 + 0
= \submodconcavefun(\submodcharv_{\submodsetX \cup \submodsetY})
+ \submodconcavefun(\submodcharv_{\submodsetX \cap \submodsetY})$
when
$\submodcharv_\submodsetX =
\begin{psmallmatrix} 0 \\ 1 \end{psmallmatrix}$
and
$\submodcharv_\submodsetY =
\begin{psmallmatrix} 1 \\ 0 \end{psmallmatrix}$.

There is, however, another condition on a multivariate real-valued
function that is sufficient to ensure submodularity and that also is
useful in its own right, and that is based on a continuous form of
submodularity. We can define a form of continuous submodularity on
real-valued vectors as follows. Given two real-valued vectors
$\submodrealvectorx,\submodrealvectory$ the submodularity of a
function $\submodconcavefun : \real^\submodrelaxrealvectordim \to \real$ is
defined as:
\begin{align}
\submodconcavefun(\submodrealvectorx) +
\submodconcavefun(\submodrealvectory) \geq
\submodconcavefun(\submodrealvectorx \submodjoin \submodrealvectory) +
  \submodconcavefun(\submodrealvectorx \submodmeet \submodrealvectory)
\label{eqn:submodular_inequality_on_lattice}
\end{align}
where the join $\submodrealvectorx \submodjoin \submodrealvectory$
is
defined as the vector with the element-wise maximums of the two
(i.e.,
$(\submodrealvectorx \submodjoin \submodrealvectory)(i) =
\max(\submodrealvectorx(i),\submodrealvectory(i))$ and
$\submodrealvectorx \submodmeet \submodrealvectory$ is the vector of
element-wise minimums.
We see this definition coincides with the
standard discrete definition of submodularity
(Definition~\ref{defn:main_submodular_def}) whenever
$\submodrealvectorx, \submodrealvectory \in
\{0,1\}^\submodrelaxrealvectordim$ are characteristic vectors of sets.  If
$\submodconcavefun$ is also differentiable, this definition is
identical to the Hessian matrix of $\submodconcavefun$ having
non-positive off-diagonal entries. That is,
for all $\submodrealvectorx$,
we have that
\begin{align}
\forall i \neq j, \qquad  \frac{\partial^2 \submodconcavefun(\submodrealvectorx)}{\partial \submodrealvectorx_i, \partial \submodrealvectorx_j} \leq 0
\end{align}
This is analogous to the definition given for supermodularity in the
economics literature~\cite{topkis1998supermodularity,samuelson1947}. It is important
to realize that the above does not place constraints on the diagonal
entries of the Hessian. Submodularity in the differentiable case, in
general, does not require negatives (or positives) along the diagonal.
Therefore, a real valued submodular function might neither be convex
nor concave. A simple example of a two-dimensional quadratic would put
$\submodhessianmatrix = \begin{psmallmatrix} 1 & -2\\ -2 &
  1\end{psmallmatrix}$.  Here, all the off-diagonal entries are
non-positive, and hence the quadratic is submodular, but the
quadratic is neither convex nor concave since it has
both a positive (3) and a negative (-1) eigenvalue.

For classic discrete submodular functions, there is an equivalent
definition of submodularity based on diminishing returns
(Definition~\ref{defn:diminishing_returns_submodular_def}). There is
also a diminishing returns property in the continuous world as well.
This is as follows: for all
$\submodrealvectorx \leq \submodrealvectory$ (taken element-wise,
meaning that
$\forall i, \submodrealvectorx(i) \leq \submodrealvectory(i)$), for
all $\submodel \in \submodgroundset$, for all $\alpha \geq 0$, we have
$\submodconcavefun(\submodrealvectorx + \alpha
\submodcharv_{\{\submodel\}}) - \submodconcavefun(\submodrealvectorx)
\geq \submodconcavefun(\submodrealvectory + \alpha
\submodcharv_{\{\submodel\}}) -
\submodconcavefun(\submodrealvectory)$.  Any real-valued function that
satisfies the above is called {\emph DR Submodular} (for diminishing
returns submodular), and it means that any time we have an increment
in the positive direction, the increment becomes less valuable if we
start from more in the positive direction along any dimension.

For discrete set functions (i.e., functions defined only on vertices
of the hypercube), the two definitions, continuous submodularity and
DR submodularity, are mathematically identical. In the general
continuous case, however, we only have that DR submodularity implies
submodularity, but not vice versa (meaning there are submodular
functions that are not DR submodular). As an example, consider the
quadratic function above with
$\submodhessianmatrix = \begin{psmallmatrix} 1 & -2\\ -2 &
  1\end{psmallmatrix}$. We saw above that such a quadratic is
submodular, but it is not DR submodular. If we take a submodular
function, however, and also insist that it is coordinate wise
concave, then it is DR submodular. Indeed, when differentiable, the DR
submodular condition is identical to $\forall i,j$,
$\frac{\partial^2 \submodconcavefun(\submodrealvectorx)}{\partial
  \submodrealvectorx_i, \partial \submodrealvectorx_j} \leq 0$, note
that this says for {\emph all} $i,j$, not just $i \neq j$ as in the
case of a submodular function. This therefore means that the Hessian
matrix is all non-positive, a condition that does not require
concavity which requires a non-positive definite Hessian.

When are submodular and DR submodular functions useful? Like the
more typical discrete submodular functions, such continuous submodular
functions can often be optimized even though they are neither concave
nor convex~\cite{bach2019submodular,mokhtari2020stochastic}. For
example, DR submodular functions, despite not being convex, can be
exactly minimized~\cite{bian2017continuous} and approximately
maximized \cite{pmlr-v97-bian19a}, very much like discrete submodular
functions. Also, the deep submodular functions
(DSFs)~\citep{bilmes-dsf-arxiv-2017} mentioned above are often
instantiated using certain classes of nested DR-submodular
functions. There are also interesting strategies for approximate
continuous DR-Submodular maximization~\cite{sadeghi2020online}.
Therefore, it seems reasonable that this is a natural generalization
of the discrete to the continuous.

\submodlongonly{
Another extension of discrete submodular functions is that of
bisubmodularity \cite{chandrasekaran1988pseudomatroids,
  qi1989bisubmodular, bouchet1995delta, ando1996characterization,
  thomas2007strongly,mccormick2010strongly,singh2012-bisubmod} and multivariate
submodularity~\cite{santiago2019multivariate}.  A bisubmodular
function is of the form:
$\submodfun: 2^\submodgroundset \times 2^\submodgroundset \rightarrow
\real$.  There are several types of bisubmodular functions
\cite{qi1989bisubmodular}, the common type having the property that
$\submodfun(\submodaltsetX,\submodaltsetY) +
\submodfun(\submodaltsetZ,\submodaltsetZZ) \geq
\submodfun(\submodaltsetX \cap \submodaltsetZ, \submodaltsetY \cap
\submodaltsetZZ) + \submodfun((\submodaltsetX \cup \submodaltsetZ)
\setminus (\submodaltsetY \cup \submodaltsetZZ), (\submodaltsetY \cup
\submodaltsetZZ) \setminus (\submodaltsetX \cap \submodaltsetZ))$
whenever both $\submodaltsetX,\submodaltsetY$ and
$\submodaltsetZ,\submodaltsetZZ$ are nonintersecting. Bisubmodular
functions can be seen as a form of signed (or typed) set membership,
where an item is a member of a set if it is a member of either
$\submodaltsetX$ or $\submodaltsetY$, but the sign is determined by
which member. Bisubmodular functions also have polyhedra for which the
greedy algorithm can be applied~\cite{dunstan1973greedy}. A simpler
form of bisubmodularity was defined in~\cite{singh2012-bisubmod}.
More general still, a submodular function can be defined over any
lattice \cite{birkhoff1948lattice,fan1968inequality}. In fact, the
submodular and DR submodular functions above defined on real vectors
can be considered submodular functions defined on a real lattice, so
this is quite general.
}

}

\submodlongonly{
\subsection{Historical Applications of Submodularity}
\label{sec:backgr-subm}

Submodular functions~\cite{fujishige2005submodular} have a long history in economics
\cite{vives2001oligopoly,carter2001foundations,samuelson1947}, game
theory \cite{topkis1998supermodularity,shapley1971cores},
combinatorial optimization
\cite{edmonds1970matroids,lovasz1983submodular,schrijver2004},
electrical networks \cite{narayanan1997submodular}, and operations
research \cite{cornuejols1990facilitylocation}. Entire issues of
journals of discrete mathematics have been devoted to their exposition
\cite{submodularity2003}. The classic applications of submodularity
lie in their use to describe a variety of properties of problems
arising in graph theory and combinatorial
optimization~\cite{frank1993applications}.  The graph cut function
mentioned above is one typical example, but there are many others (see
\cite{frank1993applications,frank1998applicationsrelaxedsubmodularity}).
Similar to how convergent evolution indicates the utility of a genetic
trait, submodularity, under many different names, has been
independently discovered in a number of different domains.

For example, in statistical physics, it is sometimes known as the
ferromagnetic assumption, standard in the Ising model
\cite{ising1925beitrag,peierls1936ising} and its generalization to the
Potts model~\cite{potts1952some}. In the Ising case,
$\submodrealvectorx \in \{0,1\}^\submodrelaxrealvectordim$ represents a
configuration of $\submodrelaxrealvectordim$ particles and each particle
$\submodrealvectorx_i$ can be in one of two states
$\submodrealvectorx_i \in \{-1,+1\}$.  Each particle, for example,
might be an atom of magnetic material that can have a magnetic moment
oriented in either the ``up'' or the ``down'' directions.  The overall
energy of these particles can be described as the Hamiltonian
$\energy(\submodrealvectorx) = - \sum_{(i,j) \in
  \submodedgesgroundset} \submodneighborenergyconsant
\submodrealvectorx_i \submodrealvectorx_j - \sum_{i \in
  [\submodrelaxrealvectordim]} \submodexternalenergyconsant
\submodrealvectorx_i$ where $\submodedgesgroundset$ are the edges of a
graph.  In the ferromagnetic case, $\submodneighborenergyconsant > 0$
so that the lowest energy configuration (i.e.,
$\sum_{(i,j) \in \submodedgesgroundset} \submodneighborenergyconsant
\submodrealvectorx_i \submodrealvectorx_j$ is maximum) occurs when all
neighboring particles point in the same direction
($\submodrealvectorx_i = \submodrealvectorx_j$). When used as an
energy in a probability model
$\submodprob{\submodrealvectorx} \propto \exp(-\energy(\submodrealvectorx))$
this lowest energy configuration corresponds to one with the highest
probability. How does this correspond to submodularity?  Consider any
pair $(i,j) \in \submodedgesgroundset$.  Since $\beta > 0$, we have
that
$-\beta\times(-1)\times(+1) - \beta\times(+1)\times(-1) \geq
-\beta\times(+1)\times(+1) -\beta\times(-1)\times(-1)$, but we see
that this is precisely the same condition as
Equation~\eqref{eqn:submodular_inequality_on_lattice} applied to the
Boolean lattice (i.e., 0/1-valued vectors). Moreover, we can see
$\sum_{i \in [\submodrelaxrealvectordim]} \submodexternalenergyconsant
\submodrealvectorx_i$ as a modular function. Hence, minimizing
$\energy(\submodrealvectorx)$ is an instance of submodular function
minimization, and the Ising model's energy function is a special case
of a submodular function in the ferromagnetic case.

Another example where submodularity plays a role is in efficiently
computing the earth mover's distance (or Wasserstein metric). We start
with a non-negative supply vector
$\submodrealvectorx = (\submodrealvectorx_1, \submodrealvectorx_2,
\dots, \submodrealvectorx_n)$ (representing the amount or mass of some
substance at position $i \in [n]$) and demand vector
$\submodrealvectory = (\submodrealvectory_1, \submodrealvectory_2,
\dots, \submodrealvectory_m)$ (representing the mass of some substance
needed at position $j \in [m]$). There is an $n \times m$ matrix $C$
where $C_{ij}$ represents the cost, or distance, to transport a unit
of mass between position $i$ and $j$.  Let $Z$ be an $n \times m$
matrix.  The goal is to solve the linear programming problem
$\min_{Z} \sum_{i=1}^n\sum_{j=1}^m C_{ij} Z_{ij}$ subject to
$\sum_{j=1}^m Z_{ij} = \submodrealvectorx_i$,
$\sum_{i=1}^n Z_{ij} = \submodrealvectory_j$, and $Z_{ij} \geq 0$.
The normalized solution to this problem is the earth mover's distance.
We can view this problem as either a transportation problem (the total
mass at locations represented by $\submodrealvectorx$ needs to be
moved to locations represented by $\submodrealvectory$ in the best
possible way) or as a way of computing a best-case distance between
two probably mass functions (say if the vectors sum to
one). Ordinarily, solving this problem requires general linear
programming.  Suppose, however, that the costs satisfy the Monge
property, i.e.: $C_{ij} + C_{rs} \leq C_{is} + C_{rj}$ for all
$1 \leq i < r \leq n$ and $1\leq j < s \leq m$. Then the above
transportation problem can then be solved using a simple greedy
algorithm (known as the north-west corner rule) in only $O(n+m)$
time~\cite{burkard1996perspectives}.  Now suppose we define a function
$\submodconcavefun : [n] \times [m] \to \real$ and set
$\submodconcavefun(i,j) = C_{ij}$.  Then $\submodconcavefun$ is
submodular in the sense
of~\eqref{eqn:submodular_inequality_on_lattice} if and only if the
matrix $C$ is Monge. Hence, the submodularity property allows rapid
computation of the earth mover's distance in some cases. This can be
generalized to higher dimensional tensors as well~\cite{bein1995monge}
using a form of Monge property that is the same as the multivariate
submodularity mentioned in Section~\ref{sec:cont-polyh-extens}.

We've already seen in Section~\ref{sec:entropy} how the Shannon
entropy function is submodular, and how the submodular inequalities
are just a restatement of the classic Shannon
inequalities~\cite{Yeung91,zhang1997non,zhang1998characterization}. In
block coding applications, we might wish to partition a set of random
variables into two disjoint sets so as to minimize the mutual
information between them. If we find such a partition, each set of
random variables can be block coded with a minimum coding length over
all such partitions. If $S \subset \submodgroundset$ is a set of
random variables, then minimizing
$\submodfun(S) = I(S; \submodgroundset\setminus S)$ (where
$I(\cdot;\cdot)$ is the mutual information function) corresponds
precisely to this task. Here, $\submodfun(S)$ is a symmetric
submodular function that can be minimized in $O(n^3)$
time~\cite{queyranne98,nagamochi1992computing} as mentioned in
Section~\ref{sec:subm-minim}.  Incidentally, regarding information
theory, the von Neumann, or quantum, information~\cite{vonneumann1932}
also satisfies a property, called ``strongly sub-additivity'' and
considered an essential and fundamental property of quantum
information, that is essentially the same as submodularity
\cite{delbrurk1936,lieb1973proof,robinson1967mean}.

Cooperative game theory is also an area that makes good use of
submodularity, or rather supermodularity which of course is quite
related. Here, we think of $\submodgroundset$ as a set of ``players'',
any subset $\submodsetX \subseteq \submodgroundset$ of players is
considered a ``coalition'', and a function
$\submodfun: 2^\submodgroundset \to \real$ is considered a ``game'',
in that it assigns a value or worth to each coalition. Interacting
players, if they agree, can strategically join a coalition
$\submodsetX$ and the resulting worth $\submodfun(\submodsetX)$ must
somehow be distributed to the players. As a side note, it is
interesting that game theoretic settings were significantly developed
and advanced~\cite{vonneumann1944} by the same von Neumann who
developed quantum information~\cite{vonneumann1932} --- besides so
many other amazing achievements, he also seems to have touched
submodularity in many ways.  It is usually the case that games are
normalized $\submodfun(\emptyset) = 0$, superadditive
($ \submodfun(\submodsetX) + \submodfun(\submodsetY) \leq
\submodfun(\submodsetX \cup \submodsetY)$ so that the joint coalition
$\submodsetX \cup \submodsetY$ is worth no less than the two coalitions
acting independently, thereby incentivizing cooperation) and
supermodular
(Definition~\ref{eqn:increasing_returns_submodular_inequality}),
although in the game theory literature, supermodularity is sometimes
called convex~\cite{shapley1971cores} due to the analogy that
Definition~\ref{eqn:increasing_returns_submodular_inequality} has with
the non-negativity of second derivatives. The way the worth
$\submodfun(\submodsetX)$ is distributed to the players is given by a
payoff vector, a normalized modular function $\submodmodfun$.  Those
payoff vectors that satisfy both
$\submodmodfun(\submodgroundset) \leq \submodfun(\submodgroundset)$
and $\forall \submodsetX \subseteq \submodgroundset$,
$\submodmodfun(\submodsetX) \geq \submodfun(\submodsetX)$ are those
modular functions that aren't worth more than the entire game, but
that would be satisfactory to any coalition. Such payoff vectors are
called the core of the game and, collectively, they can be seen as a
dual of the base polytope of a submodular function
(Equation~\eqref{eq:base_polytope}).
The question regarding such a
game is what the outcome, or resulting coalition, should be; and also what
the right payoff vector is, considering that the core itself can be an
infinitely large set.
This was addressed in fact by what is known as
the Shapley value, which says that the payoff vector should be based
on the average contribution of each player to all possible
contexts. I.e., the Shapley
value~\cite{shapley1953valuefornpersongames} is a modular function
$\submodmodfun_\text{shapley}$ that can be defined, for all
$\submodel \in \submodgroundset$, as
$\submodmodfun_\text{shapley}(\submodel) =
\frac{1}{\submodgroundsetsize !}  \sum_{\sigma \in \Sigma} \submodfun(
\submodel | \sigma_{< \submodel} )$, where $\Sigma$ is the set of all
total orderings of $\submodgroundset$, $\sigma$ is a particular
ordering, and $\sigma_{< \submodel} \subseteq \submodgroundset$ is the
set of items that precede $\submodel$ in the order $\sigma$.  While
there are other definitions of the Shapley value, this one has an
intuitive definition as the average improvement in value that
$\submodel$ provides over all possible coalitions into which
$\submodel$ is added. Hence, this seems to be a reasonable payoff
amount for $\submodel$. Interestingly, the Shapley value is a member
of the core of the game.  The Shapley value in fact has, of late,
become useful as an attribution method~\cite{lundberg2017unified},
i.e., to evaluate the contribution of parts of a feature vector for
the resulting final output in, for example, a deep neural network.

It is also interesting to note that fuzzy set theory uses fuzzy
measures~\cite{grabisch2010fuzzy} (or capacities) and are a form of
non-additive measures~\cite{pap2006non} on sets in a way that
generalizes standard additive (e.g., Lebesque) measures on sets. These
measures are often submodular or supermodular functions.  Integrals
with respect to these non-additive measures can be defined, and one
extremely useful one is the Choquet integral~\cite{choquet1953theory}
which as we have seen earlier is identical to the \lovasz{} extension
(Equation~\eqref{eq:lovaszextension} and~\cite{lovasz1983submodular}).

In natural language processing, submodularity has been uncovered for a
variety of purposes. For example, the popular Rouge-recall
metric~\cite{lin:04} for judging the quality of a candidate document
summary was shown to be submodular~\cite{lin2011-class-submod-sum}.
In statistical machine translation, submodularity also spontaneously
arose~\cite{biccici2013feature,kirchhoff2014-submodmt} for the
purposes of selecting a subset of training data that was specifically
relative for a given test translation task.  In computer vision,
Markov random fields were at one time all the rage, and since solving
them in general was known to be intractable, there was a concerted
effort to identify conditions that enabled their practical and
tractable solution.  Submodularity arose as that condition, although
it was at that time called, or was strongly related to, potential
functions which are called ``regular'', ``Potts'', or
``attractive''~\cite{boykov2001fast,kolmogorov2002energy,kolmogorov2004energy,boykov2004experimental,kumar2008improved,kumar2009map,jegelka-coop-cut-journal-2016}. 
Algorithms related to graph cut can often solve such models
either exactly or approximately well.  This research led to
algorithms that overcame pure binary image segmentation and led to
move-making algorithms such as alpha expansions and alpha-beta swaps
\cite{boykov2001fast,kolmogorov2004energy,boykov2004experimental}.
This had a big impact on problems such as semantic image segmentation.
Since these are based on Markov random fields but with restricted
potential functions, submodularity also plays a role in inducing fast
algorithms for graphical model inference (see
Section~\ref{sec:prob-model}).

There continues to be much active work on submodularity in the
discrete optimization literature. There has been a surge of interest
in optimizing submodular functions under various constraints, such as
matroid constraints
\cite{calinescu2007maximizing,lee2009submodular,lee2010maximizing},
covering constraints \cite{iwata2009submodular}, cut constraints
\cite{jegelka-coop-cut-journal-2016}, and other combinatorial
constraints~\cite{jegelka2011-online-submodular-min,iyer2013-curvature}
such as paths, st-cuts, spanning trees, and perfect graph matchings.

Most of the algorithms that optimize submodular functions are
inherently sequential, meaning the algorithm steps through a sequence
of decisions in series based on the outcomes of past decisions it has
made. Submodular functions are inherently global and, unlike
graphical models, do not necessarily allow any one set of
elements to be selected without influencing the value of selecting
other sets of elements. This can be seen even when considering the
simple square-root submodular function
$\submodfun(\submodaltsetX) = \sqrt{|\submodaltsetX|}$ seen above,
where every element interacts with every other element, although
trivially. More complex submodular functions still involve global
interaction but much less trivially. This property is anathema to
parallel computation where we wish, in a structure, to find a
sub-structure that can be computed relatively independently of other
sub-structures. Submodular functions, in general, offer no such
independence-between-sub-structures property. Nevertheless, there have
been efforts to develop parallel submodular optimization algorithms.
In~\cite{mirzasoleiman2013distributed}, for example, the ground set is
partitioned, each block is optimized over separately, and then the
resulting solutions are merged and optimized over again. More
recently, researchers began studying the tradeoffs between
approximation quality of an algorithm that proceeds in ``rounds'',
where each round one is allowed to perform polynomially many steps of
computation. One in particular wishes to find the minimum number of
sequential rounds (i.e., the minimum time) to achieve a given
approximation quality where a polynomial number of computational steps
(e.g., set function queries) may be performed embarrassingly in
parallel, meaning each function query may be derived using results only from
previous rounds but not from any results of the concurrent round thus
allowing each query to occur simultaneously in parallel.
The minimum number of rounds, each having polynomial cost, to achieve a
given (often constant) approximation ratio is called the {\em adaptive
  complexity} of the problem. One wishes for algorithms that have the
best (lowest) adaptive complexity for a given approximation
ratio~\cite{li2020polynomial,balkanski2018adaptive,breuer2020fast}.

It is an exciting time for combinatorial optimization involving
submodular and supermodular objectives, as many combinatorial problems
that in the past have used only modular cost functions can be extended
to submodular functions while still retaining polynomial time
approximability; this gives combinatorial problems significantly more
power and expressivity.  These algorithms, especially when they are
shown to be polynomial time approximable in one form or another, open
the doors up to an enormous number of useful applications. A few of
these applications, in the case of machine learning, are listed next.

}

\submodlongshortalt{
  \section{Applications of Submodularity in Machine Learning and AI}
}{
  \subsection{Applications of Submodularity in Machine Learning and AI}
 }
  
\label{sec:appl-subm-mach}

Submodularity arises naturally in applications in machine learning and
artificial intelligence, but its utility has still not yet been as
widely recognized and exploited as other techniques. For example,
while information theoretic concepts like entropy and mutual
information are extremely widely used in machine learning (e.g., the
cross-entropy loss for classification is ubiquitous), the
submodularity property of entropy is not nearly as widely explored.

Still, in the last several decades, submodularity has been increasingly
studied and utilized in the context of machine learning. \submodlongonly{It has now
become an essential ingredient in many machine learning and data
scientists' book of recipes.  Submodular maximization has been
successfully used for selecting summaries (or sketches) of text
documents~\cite{lin2010-submod-sum-nlp, lin2011-class-submod-sum,
  hui2012-submodular-shells-summarization} recorded
speech~\cite{lin2009-submod-active-seq,liu:submodular,
  wei-duc-sum-for-speech-2013, wei:submodular} image
compendia~\cite{sebastian2014-submod-image-sum} sets of protein
sequences~\cite{max2018-protein-submod}, sets of genomics
assays~\cite{wei2016genomicselection} and sets of genomic
loci~\cite{gasperini:genome-wide}.  It has been used for
variable/feature selection \cite{krause2005near}, modeling and
selecting influential nodes in a social network~\cite{kempe2003maximizing,kempe2015maximizing},
Gaussian processes~\cite{guestrin2005}, and active learning
\cite{guillory2010-icml-interactive-sub,guillory2011-noisy-interactive-submod-cover,golovin2010adaptive}.
Submodular minimization has found use in graphical model inference
\cite{kolmogorov2004energy} and PAC
learning~\cite{narasimhan2004-paclearn}, clustering
\cite{narasimhan2005-q,narasimhan2007-bcut}, and sensor placement for
graphical models~\cite{krause2005near}.

Central to these papers
is that some form of entropy, mutual information, and social network
influence of a set of nodes in a social network are all submodular.
One would also expect (intuitively) that the information capacity of
subsets of features or queries could be formulated as a submodular
function.  Indeed, submodularity has also become a key component in
active learning
\cite{hoi2006batch,krause2008robust,guillory2009-label-selection,Settles2010,Guillory2010,golovin2010adaptive,guillory2011-online-submod-set-cover,guillory2011-active-semisupervised-submodular}.
It is natural, therefore, to expect that results from the theory of
submodular functions can be used to resolve,
both, some of the
theoretical and some of the practical problems arising in machine learning.

}In the below we begin to provide only a brief survey of some of the major
sub-areas within machine learning that have been touched by
submodularity. The list is not meant to be exhaustive, or even
extensive. It is hoped that the below should, at least, offer a
reasonable introduction into how submodularity has been and can
continue to be useful in machine learning and artificial intelligence.

\submodlongonly{
  \subsection{A Summary of the History of Summarization: Sketching, CoreSets, Distillation, and Data Subset \& Feature Selection}
}
\submodshortonly{
  \subsection{Sketching, CoreSets, Distillation, and Data Subset \& Feature Selection}
}
\label{sec:core-sets-summ}

\submodlongonly{
The idea of a summary goes back at least to ancient Rome, where the
Latin word ``Summ\=arium'' means the highest, or top most, number ---
this is because numbers were added from bottom to top with the
mathematical ``sum'' on the top row. The summary is enough
of an important concept for humanity that it has been granted many 
different terms --- just in English, this includes
abridgment,
abstract,
bottom line,
cliff note,
compendium,
conspectus,
digest,
synopsis,
outline,
overview,
recapitulation,
rundown,
and so on.
In this era of big data that the fields of machine learning and
artificial intelligence are currently in, a correspondingly large
number of notions of summary have been developed as well. In this
section, we briefly offer them a survey and taxonomy, and explain how
submodularity plays a role in producing a variety of useful
summarization algorithms.
}

A summary is a concise representation of a body of data that can be
used as an effective and efficient substitute for that data.  There
are many types of summaries, some being extremely simple.  For example,
the mean or median of a list of numbers summarizes some property (the
central tendency) of that list. A random subset is also a form of
summary. \submodlongonly{Most of the time, however, summaries are more complex and
have a specific purpose.  For example, the result of a web search is
a summary of the web that is relevant to the given query.  All
clustering algorithms are summarization procedures in that the
resulting set of cluster centers (however they may be defined)
summarize the data.  Any machine learning procedure, in fact, can be
viewed as a summarization. Machine learning is the art of telling a
computer what one wants the computer to tell a second computer about a
lot of data, and this takes the form of model parameters, summarizing
the training data.  For example, when we train an
SVM~\cite{Scholkopf2001book}, the samples corresponding to the support
vectors constitute a weighted sum over a subset of samples.  When we
train a neural network, the parameters of the resulting model
summarize the aspects of the data that are useful for mapping from
input to output.}

Any given summary, however, is not guaranteed to do a good job serving
all purposes. Moreover, a summary usually involves at least some
degree of approximation and fidelity loss relative to the original,
and different summaries are faithful to the original in different ways
and for different tasks.  For these and other reasons, the field of
summarization is rich and diverse, and summarization procedures are
often very specialized.

\submodlongonly{
Purposefully designed algorithms that summarize data began outside the
field of machine learning. For example, in statistics, the notion of a
sufficient statistic~\cite{bickel2015mathematical} is a computation on
a sample of data that, once known, renders the sample irrelevant to
estimating the parameters of a distribution from which the sample was
drawn. In other words, the data and model parameters are rendered
independent given a sufficient statistic and, in this sense, the
sufficient statistic summarizes everything about the data necessary to
estimate the parameters.  In statistics and computer science, random
subsets are used frequently for analysis and even such agnostic
computations comprise statistically unbiased summaries of the body of
data being sampled.  This is also true, for example, in the field of
public opinion polling where the sample of people selected to answer
poll questions constitutes a (hopefully) representative summary of the
entire population.
}

Several distinct names for summarization have been used over the past
few decades, including ``sketches'', ``coresets'', (in the field of
natural language processing) ``summaries'', and ``distillation.''

Sketches~\cite{cormode2017data,cormode2020small,cormode2012synopses},
arose in the field of computer science and was based on the
acknowledgment that data is often too large to fit in memory and too
large for an algorithm to run on a given machine, something enabled by
a much smaller but still representative, and provably approximate,
representation of the data.  \submodlongonly{Viewable as a summary data structure that
efficiently represents the data in some way, a sketch is computed
under a given computational model.  For example, in the streaming
case, the data is never held in memory at the same time and instead is
sequentially processed where a machine can simultaneously hold only
one (or a small number) of samples simultaneously in memory as well as
the sketch being incrementally updated.  This allows the data to be
unboundedly, or at least unknowingly, large, but the challenge is that
there is one and only one chance to view a given data sample since the
algorithm is unable to go back and revisit a sample later.  This
includes one of the earliest sketching
algorithms~\cite{flajolet1985probabilistic} that computes the number
of different types of elements in a large dataset --- the data,
however, is available only serially (e.g., imagine a sequence of balls
arriving in some order and the goal is to count the number of distinct
ball colors).  Other models include the ability to merge two sketches
such that the result of the merged sketch is the same as if the sketch
was produced from the merged data. Linear sketches are particularly
useful here since they have the property that the sketch of the sum or
concatenation of two datasets is the same as the sum of the sketches.
There are many sketching algorithms and sketch data structures, and
usually each is developed to solve a very specific problem.  For
example, sketching can be as simple as counting a number of
items. Normally one needs $O(\log n)$ bits to count $n$ items but when
$n$ is very big, even $O(\log n)$ bits is too costly, especially when
there are many different item types each needing to be counted. A
sketching algorithm can be developed that counts using only
$O(\log \log n)$ bits. Other examples of sketching include Bloom
filters (a data structure that approximates set membership queries in
that detecting non-membership is exact but detecting membership is
only likely but not assured), detecting ``Heavy hitters'' (i.e.,
elements in a large array that are above a certain threshold),
dimensionality reduction (often via the Johnson-Lindenstrauss family
of results which guarantee that random projections of vectors
approximately and with high probability preserve distance in a lower
dimensional space as long as the lower dimensional space is not too
low), and various graph based sketches.  Even uniformly-at-random
sampling procedures have sketching-like algorithms. Rather than
load all items in just to randomly sample, one can stream them in and
occasionally choose a random update in an online fashion as done by
the famous reservoir sampling algorithm~\cite{knuth2014art}. Sketching
is also useful for a variety of problems in numerical linear
algebra~\cite{liberty2013simple,woodruff2014sketching} including
sketches for multiplying together very large matrices.  It is almost
always the case that sketching algorithms offer guarantees, often in
the form of having a high $1-\delta$ probability of having an error no
more than $\epsilon$, and the computational complexity of the
algorithms grows inversely with $\epsilon$ and $\delta$ (ideally
poly-logarithmically).  A very good sketch of sketching algorithms is
given in~\cite{cormode2020small}.
}

Coresets are similar to sketches and there are some properties that are
more often associated with coresets than with sketches, but sometimes
the distinction is a bit vague.  The notion of a
coreset~\cite{badoiu2002approximate,agarwal2005geometric,buadoiu2008optimal}
comes from the field of computational geometry where one is interested
in solving certain geometric problems based on a set of points in
$\real^\submodrealvectordim$.  For any geometric problem and a set of
points, a coreset problem typically involves finding the smallest
weighted subset of points so that when an algorithm is run on the
weighted subset, it produces approximately the same answer as when it
is run on the original large dataset.  For example, given a set of
points, one might wish to find the diameter of set, or the radius of
the smallest enclosing sphere, or finding the narrowest annulus (ring)
containing the points, or a subset of points whose $k$-center
clustering is approximately the same as the $k$-center clustering of
the whole~\cite{badoiu2002approximate}.

\submodlongonly{
Common to the coreset paradigm is that each point in the summary is
assigned a weight that indicates the importance in each point.  Also,
in common with sketching algorithms, most coreset algorithms offer
approximation guarantees where the size of the coreset depends
inversely on an error $\epsilon$ and possibly (but ideally not) on the
dimensionality of the points.  Soon after their introduction, coresets
became useful in machine learning, e.g., the core-SVMs
of~\cite{tsang2005core} that facilitate the use of SVMs on larger
datasets by identifying a coreset.  More recently, the notion of
coresets and summarization have been used for many specific
applications in machine learning, good surveys and examples are
in~\cite{DBLP:journals/corr/abs-1910-08707,DBLP:journals/corr/abs-2011-09384,wei2015-submodular-data-active,munteanu2018coresets},
the last of which categorizes four types of coreset constructions for
machine learning: geometric, gradient descent, random sampling, and
sketching. In the sketching case, the coreset is viewed as a form of
dimensionality reduction in data space which is like a transpose of
dimensionality reduction in feature space (more on this when we
discuss distillation below).
}

Document summarization became one of the most important problems in
natural language processing (NLP) in the 1990s although the idea of
computing a summary of a text goes back much further to the
1950s~\cite{luhn1958automatic,edmundson1969new}, also and
coincidentally around the same time that the
CliffsNotes~\cite{enwiki:1051798921} organization began. There are two
main forms of document summarization~\cite{yao2017recent}.  With {\em extractive
  summarization}~\cite{nenkova2012survey}, a set of sentences (or
phrases) are extracted from the documents needing to be summarized,
and the resulting subset of sentences, perhaps appropriately ordered,
comprises the summary. \submodlongonly{Extractive summarization therefore is a subset
selection problem, where if $\submodgroundset$ represents the set of
all sentences, we wish to find a subset
$\submodsetX \subseteq \submodgroundset$ that represents
$\submodgroundset$ and in particular $\submodsetX$ must represent
everything not selected $\submodgroundset \setminus \submodsetX$.  In
an extractive summary, each sample is a genuine sentence from the
original and so there is no need to ensure that the summary sentences
are grammatical or factually accurate.}

With abstractive summarization~\cite{lin2019abstractive}, on the other
hand, the goal is to produce an ``abstract'' of the documents, where
one is not constrained to have any of the sentences in the abstract
correspond to any of the sentences in the original documents. With
abstractive summarization, therefore, the goal is to synthesize a
small set of new pseudo sentences that represent the original
documents. CliffsNotes, for example, are abstractive
summaries of the literature being represented.
\submodlongonly{Conceptually, abstractive summarization is more natural and
flexible, but it is a computationally more challenging task since one
must ensure that the synthesized sentences are grammatical, natural,
relevant, and factual.  Indeed, there is research that compares
extractive with abstractive summarization and that report ``factual
hallucinations'' in the abstractive
summaries~\cite{maynez2020faithfulness, kryscinski2020evaluating}, and
that finds that extractive summarization is often better than
abstractive due to its inherent faithfulness and
factual-consistency~\cite{huang-etal-2020-achieved,durmus-etal-2020-feqa}.
Extractive summarization is considerably easier computationally since
the process involves only picking and choosing sentences. With
abstractive summarization, one must synthesize and regenerate new
sentences, similar to language translation.  With extractive
summarization, however, it can be more difficult to piece together the
extracted sentences into a coherent paragraph or whole article, since
the selected sentences might just not flow well together as a
text. With abstractive summarization, the sentences can be composed for
the purposes of being a coherent and fluid whole, with clarity,
organization, voice, grammar, and even style. It is also possible
for an abstractive summarization to be more concise since in any
accurate extractive summarization there might be redundancies that are
impossible to remove without either changing the sentences or removing
relevant information.  Therefore, there are tradeoffs between
abstractive and extractive summarization.  In fact, extractive
summarization is often used as the first step of abstractive
summarization in NLP
including~\cite{pilault2020extractive,gehrmann-etal-2018-bottom,
  liu2018generating,hsu2018unified,li2021ease} --- interestingly,
there is evidence that this might even be a mechanism used by
human summarizers~\cite{jing1999decomposition} where extractions are done first
before concept merging and resynthesis into an abstractive summary
later.

Another useful variant of summarization commonly encountered in the
text domain is called {\em update summarization}~\cite{tac-summarization-task-overview-2008}. Here,
the assumption is that a person has already read a set of articles and
is interested only in a summary of any updates, or any new
information, that has happened that is not in the articles that have
already been read. Update summarization is natural in document
summarization since at any given time there are too many articles to
read, but a human does not start from zero knowledge, rather the human
starts from having read articles up to a previous time point.  Update
summarization can be either extractive or abstractive.

A big part of document summarization research is in developing
automatic ways to judge the quality of a summary, the reason being
that obtaining human judgments of summaries is even more challenging
than the human labeling tasks. This is because the human now needs to
judge not just what objects or concepts lie within each sample, but
instead needs to examine a large sample collection and tell if
it is representative of the whole.  Due to this difficulty, accurate
automated summary assessment algorithms are extremely valuable.  One
strategy used for document summarization requires a set of ground truth
reference summaries of a collection of documents, and all new
candidate summaries are judged against statistics within these
references.  Since there can be many different summaries all of which
are equally good, one strategy to do this is via n-gram statistics. A
classic example of this approach is the ROUGE metric~\cite{lin:04}.
The ROUGE-N recall metric can be written as the following set function:
\begin{align}
\submodfun_{\text{ROUGE-N-Recall}}(S) 
\triangleq \frac{ \sum_{i=1}^K \sum_{e \in R_i } \min (c_e(S),r_{e,i})}{\sum_{i=1}^K\sum_{e \in R_i} r_{e,i}},
\end{align}
where $K$ is the number of reference summaries, $R_i$ is the set of
sentences in the $i^\text{th}$ reference summary, $e$ is a particular
n-gram, $c_e(S)$ is the number of times n-gram $e$ occurs in a
hypothesized set of sentences $S$, and $r_{e,i}$ is the number of times
n-gram $e$ occurs in reference summary $i$.  Since $c_e(S)$ is
non-negative monotone and $\min(x,a)$ is a concave non-decreasing
function of $x$, $\min(c_e(S),r_{e,i})$ is monotone submodular since
it is a monotone non-decreasing concave function composed with a
modular function.  Since summation preserves submodularity, and the
denominator is constant, we see that
$\submodfun_{\text{ROUGE-N-recall}}$ is monotone submodular, as was
shown in~\cite{lin2011-class-submod-sum}. That ROUGE-N is submodular
does not mean that it can be used to produce a summary, since it is
instantiated from human summaries. Rather it is used only to judge a
hypothesized summarization that has not had the opportunity to benefit
from the human reference summaries (if it did, it would be like
training on the test set).  On the other hand, that such a widely used
metric is submodular provides evidence of the naturalness of
submodularity for judging summary quality, further discussed below.
}

Another form of summarization that has more recently become popular in
the machine learning community is {\em data distillation}
~\cite{NIPS2005_4491777b, wang2020dataset,such2020generative,
  bohdal2020flexible, nguyen2020dataset, SucholutskyDistill2021,
  nguyen2021dataset} or equivalently {\em dataset
  condensation}~\cite{zhao2021dataset,ZhaoB21-0}.  With data distillation\footnote{Data distillation is distinct
  from the notion of {\em knowledge
    distillation}~\cite{hinton2015distilling,ba2014deep,buciluǎ2006model}
  or {\em model distillation}, where the ``knowledge'' contained in a
  large model is distilled or reduced down into a different smaller
  model.}, the goal is to produce a small set of synthetic pseudo-samples that
can be used, for example, to train a model.  The key here is that in
the reduced dataset, the samples are not compelled to be the same as,
or a subset of, the original dataset. \submodlongonly{This is akin to the difference
between the $k$-means algorithm (which is a form of data distillation)
and the $k$-medoids algorithm (which is a form of extractive
summarization or unweighted coreset). Data distillation therefore is
the same as abstractive summarization but on arbitrary data modalities
(e.g., images). Some references even consider sketchings to be data
projections~\cite{munteanu2018coresets,tremblay2019determinantal} and
thus data distillation --- for example, if we pre-multiply an
$n \times m$ design matrix $X$ by a $k \times n$ projection matrix
$P$, we get a resulting ``sketched'' $k \times n$ matrix $PX$ that can
be thought of as having $k$ linear distilled samples in the same
dimension-$m$ feature space as the original design matrix, and hence
we can view this as dimensionality reduction in data space.  The above
cited methods correspond to non-linear distillation procedures
produced by taking gradients, with respect to the data samples being
learnt, of an objective that involves a deep neural network, and
usually involve implicit
gradients~\cite{wang2020dataset,zhao2021dataset}. Sometimes the
objective is based on kernel ridge regression using a kernel obtained
from a neural network via the neural tangent
kernel~\cite{nguyen2020dataset}.  With any form of data distillation,
a large dataset is distilled down to a smaller synthetic or
pseudo-sample dataset.
}

All of the above should be contrasted with data {\em compression},
which in some sense is the most extreme data reduction method.
With compression, either lossless or lossy, one is no longer under any
obligation that the reduced form of the data
must be usable, or even recognizable,
by any algorithm or entity other than the decoder, or
uncompression, algorithm.
\submodlongonly{
The compressed data is thus unrecognizable,
looks like uniformly-at-random bits, and is unusable unless
decompressed back to its original form.  So, while the compressed form of
the data might be very small, the data is not useful in this form
until one expands it back to its original large size.  This is
different from summarization procedures which require that the data is
useful immediately in its summarized form without re-expansion. Unlike
compression, summarization procedures also have no requirement that
the original data, either exactly or approximately, must be
reconstructable back from the summary.  Compression can be slow for
larger datasets but might be faster than data distillation since
compression is not obliged to keep the data in a form that can be used
like the original.  Compression therefore is meant for other tasks,
specifically storage and communication.

\begin{figure}
\centering
\includegraphics[page=1,width=0.99\textwidth]{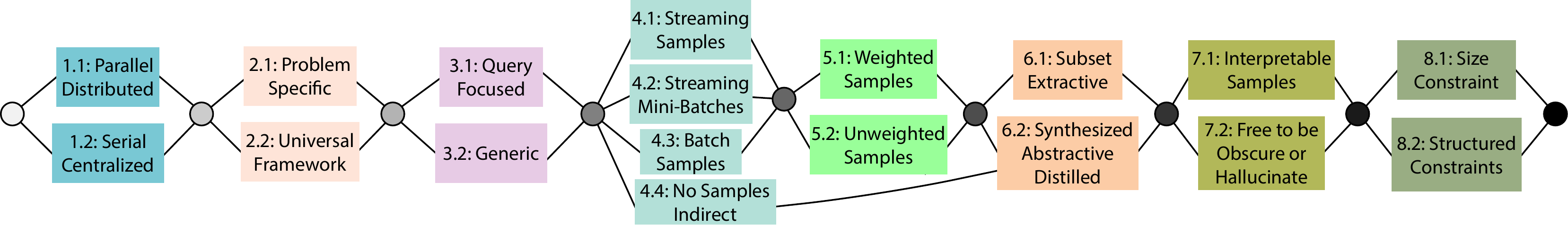}
\caption{A summary of the flavors of summarization. Any path from the
  left-most dot to the right-most dot selects the choices one
  may make when choosing or designing a summarization
  procedure. Regardless of the path, the purpose of the summary is to
  act as a concise purposeful representation of the original whole
  data.}
\label{fig:summarization_taxonomy}
\end{figure}

What then precisely is the difference between summarization, sketching,
coresets, and distillation? It is challenging to come up with a clear
distinction between the methods. This gets even more convoluted
because, for example, there are core set algorithms for the $k$-means
procedure ~\cite{bachem2018scalable} (i.e., how to reduce a dataset
so that running $k$-means on the reduced dataset produces the same
answer as when $k$-means is run on the entire dataset), but then the
resulting $k$-means set itself can be seen as a coreset --- running
$k$-means on the result of $k$-means will produce the same
answer. This is also true for $k$-center~\cite{badoiu2002approximate}
and other clustering procedures.
}

\subsubsection{Summarization Algorithm Design Choices}
\label{sec:summ-design-choic}

It is the author's contention that the notions of summarization,
coresets, sketching, and distillation are certainly analogous and
quite possibly synonymous, and they are all different from
compression.  The different names for summarization are simply
different nomenclatures for the same language game.  What matters is
not what you call it but the choices one makes when designing a
procedure for summarization.  And indeed, there are many choices.

\submodlongonly{
We
next summarize (or distill) the choices available to make when
considering or designing a summarization procedure and discuss their
uses, advantages, and disadvantages. These choices for summarization
are summarized in Figure~\ref{fig:summarization_taxonomy}.

\begin{itemize}

\item Parallelizable and/or distributed construction or centralized.
  This first choice has little to do with the type of summarization
  and instead what a given summarization algorithm offers in terms of
  its natural ability to be parallelized. Hence, we do not discuss
  this further other than to say that it is of course desirable to
  have algorithms that are easy to be efficiently parallelized.

\item Problem specific or a universal framework.  As mentioned
  above, many of the corset and sketching algorithms are developed for
  a specific geometric or algorithmic problem in mind and have
  guarantees associated only with that specific problem.  The
  algorithmic approaches can be very different from each other, and
  they also have very different approximations.  We might, however,
  conceive of a framework that is more universal, where the same
  algorithmic framework can be used to produce a coreset for a variety
  of different problems and where the algorithmic framework is
  parameterized in some way. Indeed, this is the strategy for
  submodular summarization as discussed further below, where the
  submodular function itself acts as a parameter to the summarization
  procedure which is done using the same algorithm.

\item Query specific or generic. Sometimes it is desirable for a
  summary to be generic, in an attempt to capture as much about a
  dataset as possible and be ready to answer any query that might be
  asked of the original. This can be useful when one does not know the
  purpose of a summary beforehand. On the other hand, a smaller, more
  direct, specific, and focused summary can be produced if a query is
  available at summarization time.  The classic example, as mentioned
  above, is web search, where we extractively summarize the web in
  response to a web search query; generic web summarization would not
  be very useful (e.g., a diverse top ten web page list is unlikely to
  solve many problems) but query-specific web summarization is the
  basis for billion-dollar businesses.

\item Streaming, Batch, or Mini-batch samples or indirect (no samples)
  input. If it is possible to load all of the data into memory
  simultaneously and be given random access to all samples, this
  affords an algorithm the greatest opportunity to produce a better
  summary. However, as mentioned above in the context of sketching,
  this is often infeasible for large datasets or when all the data is
  not available at a given time. A solution to this problem is to
  stream data sequentially. Often, however, the data need not be
  streamed one sample at a time, but rather one batch at a time, so an
  algorithm holds, at any point in time, both the summary and one (or
  perhaps two) batches of data.  While a batch is in memory, random
  access over that batch is granted, but once a batch is removed from
  memory is never again accessible, although samples within the batch
  might have been placed in the summary.

  A summary can also be produced not directly from the data, but
  instead via an intermediate representation that was computed from
  the data beforehand. That is, an indirect summarization means that
  the summary is produced by some structure other than the original
  samples.  For example, the summary might be produced from a model
  that has been trained on the data, but when the data is no longer
  available. One example of this achieves knowledge distillation via
  indirect data distillation using a pre-trained fixed glass-box
  teacher model~\cite{yin2020dreaming}, hence, this is a way of
  achieving knowledge distillation by first recovering samples
  directly from a model that had previously been trained on data, but
  the data is not available (most data distillation methods require the
  training data to still be available). This process is related to the
  privacy worry that exists about neural networks, i.e., that it is
  sometimes possible reconstruct the training data (or something that
  mimics and ``summarizes'' the training data) using access only to
  the model, or some property of the model such as its gradients
  (see~\cite{NEURIPS2019_60a6c400} and~\cite{huang2021evaluating} in a
  federated learning context). Another imaginary example would be to
  ask GPT-3 or any other modern foundation models to ``Please
  summarize the book War and Peace'' and it will produce some form of
  summary where the quality will depend on the extent the model has
  memorized the information from that book.

\item Weighted or unweighted samples. As mentioned above, a core
  feature of coresets is that the resulting summary consists of
  weighted samples indicating how important each sample is, but a
  summary might also take the form of unweighted samples. In the
  indirect summarization strategy, there are no samples, so this
  distinction does not apply.

\item Extractive subset or synthesized distilled abstractive.  This is
  one of the most interesting of choices for a summarization
  procedure and corresponds to the difference between abstractive
  vs.\ extractive document summarization and also the difference
  between data distillation vs.\ subset selection that we saw above.

  Extractive summarization and coresets have a number of important
  properties. For example, one is guaranteed that the data samples have
  the same form and style as the original data, and that all samples
  are natural, since the summary is a subset of the original. Hence,
  there can be no aberrational samples, in the same sense that a median
  is an original sample unlike a mean.  Since one is constrained to be
  a subset of the original, a summary of a given size has less
  opportunity to retain the information in the original as a
  distillation strategy. The reason is that it is possible that any
  pair of samples are partially redundant with respect to each other
  (this can be made precise using submodular functions where for any
  $v,v'$ we have $\max(f(v),f(v')) < f(v,v') < f(v) + f(v')$). On the
  other hand, extractive summarization has the greatest opportunity to
  be fast, since subset selection is computationally relatively easy
  (compared to abstraction) since the summary is always a subset of
  the original. We say relatively since, in the most general case,
  there are still an exponential, in $k$, number of size-$k$ subsets
  of a set of size $n$. This is again something submodularity helps with
  as seen below.

  In contrast, abstractive summarization (or data distillation, or
  sketching) have different properties than the above. Here, it is
  still guaranteed that the data has the same form as the original
  and, unlike compression, the summary can be used and viewed as if it
  was a subset of the original. There is some chance, however, that the
  synthesized samples are unlike any of the original, in the same
  sense that a mean might be unlike any of the original points and
  could live in a region of very low probability when the distribution
  is multimodal.  Hence, a synthetic sample has the chance of being
  unnatural and uninterpretable (this is discussed further
  below). Abstractive summarization offers the greatest chance to
  reduce the size of the summary while preserving information (since
  it is not constrained to have the summary be a subset). Thus,
  abstractive summarization is like a hybrid between
  extractive summarization (where the samples of the summary
  are all real) and compression (where the compressed form
  is not constrained to use a representation resembling
  samples at all). Like compression,
  abstract summarization is also
  computationally challenging. Even compression, however, can be computationally
  easier than abstractive summarization since compression is not
  constrained to have the compressed representation resemble samples
  in the original form. Therefore, abstract summarization (data
  distillation) is the most difficult, since we want (and have the
  opportunity) to make the smallest set possible while ensuring the
  summary is immediately useful in its summarized form.

  Lastly, we mention that subset selection and synthesized summaries
  are not required to be exclusive. It is conceivable to get the best
  of both worlds by producing a summary that consists of some samples
  from the original augmented by some additional samples that are
  synthesized. Indeed, in human-produced document summarizations,
  sometimes there is nothing better than to quote the original,
  other times it is best to paraphrase, and in many cases, it is best
  to do both.

\item {\bf Interpretable samples vs.\ anything goes.}
  Interpretability has become an important research topic in machine
  learning, where it is desirable to be able to interpret and
  understand the decisions made by a large complex AI system.  Another
  design choice for a summarization procedure is whether or not the
  summary should be interpretable, and this can take the form of the
  individual samples being interpretable and natural or the entire
  sample being collectively coherent. We first discuss the former and
  then the latter.

  As mentioned above, extractive summarization ensures that the
  summary consists of natural samples since the samples are all from
  the original. Abstractive summarization, however, might produce a
  summary where the synthetic pseudo-samples are unnatural,
  hallucinatory, or factually inconsistent. Regardless, the choice
  between interpretable vs.\ ``free'' is application dependent. For
  example, in the NLP domain, emphasis is placed on the synthesized
  samples being factual and natural since the typical end user of the
  summary is a human reader.  Effort must therefore be placed on
  ensuring that the distilled samples have this property, leading to
  the aforementioned increase in computation.

  For other applications, however, interpretability might or might not
  be less important. The data distillation examples shown
  in~\cite{wang2020dataset} look distorted and unnatural even though
  they train a model well. If a corresponding degree of distortion
  existed in an abstractive document summarization example, the
  results would be unacceptable to a human reader.  In computer
  vision, hallucinatory concepts in distilled data might mean a bundle
  of pixels are set to values that one would never find in a real
  image, but if it helps the training of a classifier, the distortion
  might not matter.  In abstractive document summarization, if a
  random grammatically correct but factually nonsensical phrase ``Dogs
  swim on the roof of Seattle'' appears, it would not be useful for a
  human-indented summary, but perhaps such hallucinatory phrases could
  be useful to train a text classification system or for a large language
  model. There have not, as of yet, been many studies on abstractive
  data distillation for the purposes of training a text classifier
  model save for~\cite{SucholutskyDistill2021} and a few others. The
  results of~\cite{zhao2021dataset,nguyen2020dataset} show fairly
  realistic looking synthetic samples, but even here we see some
  distortion that might not be acceptable in a human-consumed document
  summarization.  Therefore, the choice between interpretable vs.\
  ``free'' depends on the reason for doing summarization.

  Now, one might think that factual inconsistency and naturalness do
  not matter when producing a distilled dataset for the purpose of
  training another model. The distilled samples might not need to
  resemble any of the original samples since, when the goal is
  classification, it matters only that the dataset retains the key
  distinctive properties of each class relative to other classes. This
  is the same idea as ``structural discriminability'' as shown in
  Figures~5 and~6 of~\cite{bilmes2001discriminatively} which is
  analogous to Figure~3 in~\cite{wang2020dataset}, and Table T12
  in~\cite{zhao2021dataset}.  In~\cite{zhao2021dataset}, for example,
  a generated synthetic summary is compared between a GAN and their
  method and while it was found that their method did a better job in
  terms of training a new model, it was speculated that the reason was
  that the GAN's purpose was to generate realistic-looking images, but
  not to train an accurate model.

  One might reasonably say that, for classification purposes, only the
  structurally discriminative aspects of a dataset need to be captured
  in a distilled dataset if one wants the smallest possible
  summary. In other words, the smallest summary retains only minimum
  information necessary, and here that information means the distilled
  data retains only that which distinguishes different classes from
  each other. Anything irrelevant to achieving high classification
  accuracy is stripped away.

  On the other hand, as with document summarization, we still may wish
  to retain the factual consistency and naturalness that one
  automatically gets with extractive summarization. There are several
  reasons for this. Firstly, we may wish for the summary itself to be
  interpretable by humans, for the same reason we would wish any AI
  model to be interpretable. Having an interpretable summary can even
  improve interpretability of any subsequently trained model on that
  data. This may be particularly important when debugging the ML
  training process --- it becomes harder to debug depending on the
  degree to which the images in the synthesized dataset do not look
  natural. One of the properties of computer vision that makes it an
  attractive domain within which to perform machine learning research
  relative to other domains, such as biology, sensors (e.g., radar,
  lidar, or sonar), financial records, and so on, is that humans have
  cognitive capabilities that help them debug the results.  It is
  estimated that between 20\% and 30\% of the cortex in humans is
  devoted to visual processing~\cite{10.3389/fnint.2016.00037} and
  this can only help when debugging computer-vision based machine
  learning systems, especially with the synergy possible when the
  final training data looks realistic and natural.

  Secondly, and more importantly, there is a danger that the distilled
  summary contains only the non-robust features of the training data
  that are sufficient to obtain good accuracy on a test set not
  because the summary captures important information about the objects
  being classified but only because it captures aspects in the data
  set that are associated with those objects. An extreme example of
  this is shown in~\cite{Ilyas2019,Gilmer2019} where it is implied that the
  hallucinatory results in synthetic samples matter very little when
  wishing to train a model that performs well. This work shows that by
  producing a synthetic training dataset that uses only the
  ``non-robust'' features of the data, they can train other models
  that work well even though the samples look entirely wrong to a
  human. For example, synthesized images that appear as dogs may be
  labeled as cats, and synthesized images that look like cats might be
  labeled as dogs. These images have been modified in adversarial ways
  to look, to a model, like their real label. When those images become
  the training data, the resulting model works reasonably well. These
  images take advantage of the opportunistic tendency when training
  large neural networks to associate the easiest signal in the input
  as possible with the correct label. There is much research into the
  lack of robustness that some models have since, during training,
  they associate wrong cues in the input with output
  labels~\cite{Gilmer2019}.  While such work does not have abstractive
  summarization or distilled data as its goal, it is very
  relevant. The reason is that it shows it is possible to produce
  synthetic samples that look very different from natural samples but
  that still can be used to train a model.

  This poses a danger for data distillation as well since there is no
  reason, moreover, that distilled data is immune to these effects
  unless special provisions are made.  In some sense, adversarial
  samples that look identical to real samples but that fool a model
  into making a wrong decision are the dual of synthetic abstractive
  factually-inconsistent samples that fool a model into learning a
  correct decision.  Of course, such factually-inconsistent images do
  not represent the original samples since they are adversarially
  produced to take advantage of a model training’s tendency to focus
  on the wrong part of an image when learning to associate with a
  label.  Hence, the goal in NLP's abstractive summarization to avoid
  such hallucinatory artifacts is a good one as well for any machine
  learning task when one wishes to summarize, where the summary is
  supposed to represent the original.

  It was mentioned above that for NLP's abstractive summary, the set
  of sentences ideally can be ordered to make a coherent text with
  long-range interactions. Another form of interpretability of a
  summary is to consider the gestalt of the entire summary itself.
  One can imagine that either extractive or abstractive summarization
  has the chance to produce a coherent summary on the whole, but
  abstractive summarization being less constrained than extractive
  summarization has more opportunity to do so.

  To summarize the past two choices (i.e.,
  Figure~\ref{fig:summarization_taxonomy}-6.x and -7.x), if one wants
  to guarantee that the samples are natural, extractive summarization
  (subset selection) is easiest. If computation is at a premium or if
  the data set sizes are massive, then extractive summarization is
  best. If one wants the absolute smallest possible summary for a
  given amount of information, and one is not worried about
  interpretability of the samples and allows for structurally
  discriminative samples, then data distillation is best since there
  is more opportunity for compression.  If one wants to ensure the
  samples produce a summary that constitutes a coherent whole, and if
  computation is not the primary concern, then abstractive
  summarization is best.

\item Constraint type (size or other structure). We almost always want
  a summary to be small, and this can be seen as a constraint on the
  cardinality of the summary. In other cases, we may wish the summary
  to satisfy other constraints as well.  For example, the summary
  might comprise a tree or a sequence in some space, or the sum of the
  costs of each of the summary items might need to be below some
  budget (e.g., as in a knapsack constraint), or we might wish the
  summary to collectively be coherent so that summary elements can be
  organized into a good presentation (as in abstractive document
  summarization, where the summary sentences together must be able to
  produce a good text). In all of these cases, we can view this as an
  additional constraint on or quality required of the resulting
  summary.

\end{itemize}

A distinction not included in Figure~\ref{fig:summarization_taxonomy}
is that of supervised vs.\ unsupervised. This means, when each element
in a set to be summarized is a data sample, if the summarization
procedure uses a feature-label $(x,y)$ pair, or if it uses just the
features $(x)$.  This distinction is specific to the problem of
summarizing a training set for the purposes of classification or
regression, and not a more general summarization goal. Another
distinction not given in the figure is that of update summarization,
but if it were added it could be located at a hypothetical
Fig.~\ref{fig:summarization_taxonomy}-3.3, but this is not ideal. The
reason is that update summarization can be seen as a form of
``anti-query'' focused summarization since the summarization in this
case should, if anything, neglect information relevant to what is
already known, and the summary should be focused only on what is
new. It is also the case that one might want both to focus on a
particular query and neglect a particular anti-query, so therefore it
would be better to consider update-summarization as part of
Fig.~\ref{fig:summarization_taxonomy}-3.3.

How is submodularity situated within this sea of summarization
strategies shown in Figure~\ref{fig:summarization_taxonomy}? In fact,
submodular summarization procedures touch all options except for
Fig.~\ref{fig:summarization_taxonomy}-4.4 (indirect summarization),
Fig.~\ref{fig:summarization_taxonomy}-6.2 (abstractive summarization),
and Fig.~\ref{fig:summarization_taxonomy}-7.2 (obscure samples). The
reason is that a submodular function is a set function, and producing
a summary (or coreset, or sketch) of a ground set $\submodgroundset$
corresponds to producing a subset of that ground set. Hence, there is
no immediate way for a submodular function to bypass the data (4.4),
produce an abstractive summary (6.2), or to suffer from obscure and/or
hallucinatory samples (7.2) (although there is a chance to select
outliers).

It is however odd that in much of the sketching and coreset
literature, the connections to a submodular summary does not more
frequently arise. In some cases, when it comes up, it is not for the
purposes of using submodularity to create a sketch, but rather to
sketch a particular submodular function
itself~\cite{goemans2009approximating,balcan2011learning,balcan2012learning,badanidiyuru2012sketching,10.1145/3039871,yaroslavtsev_et_al:LIPIcs:2019:11284}.
This is essentially the problem of how to approximate a given 
submodular function using a relatively small number of oracle function
queries.
}

\submodlongonly{On the other hand, s}\submodshortonly{S}ubmodularity offers essentially
an infinite number
of ways to perform data sketching and coresets.  When we view the
submodular function as an information function (as we discussed in
Section~\ref{sec:comb-inform-funct}), where $\submodfun(\submodsetX)$
is the information contained in set $\submodsetX$ and
$\submodfun(\submodgroundset)$ is the maximum available information,
finding the small $\submodsetX$ that maximizes
$\submodfun(\submodsetX)$ (i.e.,
$\submodsetX^* \in \argmax \{ \submodfun(\submodsetX) : |\submodsetX|
\leq \submodcardcnstr \}$), is a form of coreset computation that is
parameterized by the function $\submodfun$ which has
$2^\submodgroundsetsize$ parameters since $\submodfun$ lives in a
$2^\submodgroundsetsize$-dimensional cone.  Performing this
maximization will then minimize the residual information
$\submodfun(\submodgroundset \setminus \submodsetX | \submodsetX)$
about anything not present the summary
$\submodgroundset \setminus \submodsetX$ since
$\submodfun(\submodgroundset) = \submodfun(\submodsetX \cup
\submodgroundset \setminus \submodsetX) = \submodfun(\submodgroundset
\setminus \submodsetX | \submodsetX) + \submodfun(\submodsetX)$ so
maximizing $\submodfun(\submodsetX)$ will minimize
$\submodfun(\submodgroundset \setminus \submodsetX | \submodsetX)$.
For every $\submodfun$, moreover, the same algorithm (e.g., the greedy
algorithm) can be used to produce the summarization, and in every case,
there is an approximation guarantee relative to the current
$\submodfun$, as mentioned in earlier sections, as long as
$\submodfun$ stays submodular. Hence, submodularity provides a
universal framework for summarization, coresets, and sketches
\submodlongonly{(Fig.~\ref{fig:summarization_taxonomy}-2.2)} to the extent that the
space of submodular functions itself is sufficiently diverse and spans
over different coreset problems. \submodlongonly{This is promising since even within
one class of submodular function, e.g., with just the facility
location function, there is enormous flexibility and expressivity in
terms of what similarity measure to use and how to parameterize that
similarity measure.
But the facility location, as mentioned in
Section~\ref{sec:example-subm-funct} is only one of many submodular
functions.}

\submodlongonly{
Ordinarily, one would expect a summary to necessarily lose fidelity
relative to an original larger dataset. On the other hand, with a
submodular perspective it is possible for the summary to lose no
information relative to the original.  For example, assuming the
submodular objective is the correct summary objective, then given two
sets $A \subset B$ with $f(A) = f(B)$, there is no reason to prefer
$B$ over $A$ since $B$ is partially redundant.  That is,
$f(B|A) = f(A+B) - f(A) = f(B) - f(A) = 0$, the additional samples in
$B$ not in $A$ are redundant relative to those already in $A$. Also, a
summary can be preferable to the whole due to bias removal. If the
whole $V$ is biased or imbalanced (i.e., if there are
over-represented majorities and under-represented minorities
within the whole), a summary can re-balance the data so that concepts
have a more equal and balanced representation.  For example, one might
find the value per item $f(V)/|V|$ of the whole to be smaller than the
value per item in the summary $f(A)/|A|$. This suggests algorithms to
optimize the ratio of submodular functions, as has been addressed
in~\cite{bai-ratio-submod-icml-2016}.  For these reasons, we sometimes
see machine learning performance increase when trained on a subset
than when trained on the whole
albeit it is usually the case that the subset in such cases is about
80\% to 90\% of, i.e., not that much smaller than, the whole.
}

Overall, the corset or sketching problem, when using submodular functions,
therefore becomes a problem of ``submodular design.'' That is, how do
we construct a submodular function that, for a particular problem, acts
as a good coreset producer when the function is maximized. There are
three general approaches to produce an $\submodfun$ that works well as
a summarization objective: (1) a pragmatic approach where the function
is constructed by hand and heuristics, (2) a learning approach where
all or part of the submodular function is inferred from an optimization
procedure, and (3) a mathematical approach where a given submodular
function when optimized offers of a coreset property.

\submodlongonly{
\subsubsection{Crafting a Submodular Function By Hand}
\label{sec:craft-subm-funct}
}

When the primary goal is a practical and scalable algorithm that can
produce an extractive summary that works well on a variety of different
data types, and if one is comfortable with heuristics that work well
in practice, a good option is to specify a submodular function by
hand. For example, given a similarity matrix, it is easy to
instantiate a facility location function and maximize it to produce a
summary. If there are multiple similarity matrices, one can construct
multiple facility location functions and maximize their convex
combination. \submodlongonly{One can utilize feature-based functions to ensure that
the summary is diverse with respect to a set of qualities represented
by the features or combine such a feature-based function with the
above mixture of facility location functions to produce a richer
summarization objective. It is also easy to mix in a modular function
when one wishes to ensure that the summary is relevant to a particular
query or that has a particular quality.  That is, one might take a
combination of a diversity component (which is a mixture of submodular
functions to appropriately capture diversity) along with a relevance
modular component (to ensure the summary is relevant to a query).  As
an example, we might be interested in image summarization, but
preference should be placed on those images with human faces.  A
modular function $\submodmodfun$ can be used to express the
probability that an image contains a face, and
$\submodmodfun(\submodsetX)$ corresponds to the overall face quality
in set $\submodsetX$. Maximizing this alone, however, would lead to a
redundancy since $\submodmodfun$ is only a modular function --- e.g.,
consider an image with a high face score that is duplicated and thus
selected multiple times.  But maximizing
$\lambda\submodmodfun(\submodsetX) +
(1-\lambda)\submodfun(\submodsetX)$, where $\submodfun(\submodsetX)$
is a mixture of facility location functions and thus indicates the
diversity in set $\submodsetX$ will produce a summary consisting of a
diverse set of faces, assuming $\lambda$ is set appropriately.

Such a strategy for producing a summary might involve a number of
hyperparameters that need to be tuned, $\lambda$ being only one of
them. Hyperparameter tuning would need to occur in the same way that
hyperparameter are always tuned, i.e., via a search strategy that
evaluates performance on a validation set, and there are many search
strategies available (e.g., genetic algorithms, Bayesian methods, grid
search with early pruning, and so on). It should be clear that even
after tuning, and while the resulting function is submodular,
performing submodular maximization on the summarization objective
offers a mathematical guarantee regarding the quality of the resulting
set relative to the objective, and not necessarily relative to the
original problem (i.e., the true quality of a summary as judged, say,
by a human or a machine learning model). Hence, the hyperparameter
tuning must be done appropriately.

Still, this}\submodshortonly{Such an approach}
is viable and practical and has been used
successfully many times in the past for producing good summaries.  One
of the earliest examples of this is the algorithm presented
in~\cite{kempe2003maximizing} that shows how a submodular model can be used to
select the most influential nodes in a social network.  Perhaps the
earliest example of this approach used for data subset selection for
machine learning is~\cite{lin2009-submod-active-seq} which utilizes a
submodular facility location function based on Fisher kernels
(gradients w.r.t.\ parameters of log probabilities) and applies it to
unsupervised speech selection to reduce transcription costs.  Other
examples of this approach includes: \cite{lin2010-submod-sum-nlp,
  lin2011-class-submod-sum} which developed submodular functions for
query-focused document summarization; \cite{kirchhoff2014-submodmt}
which computes a subset of training data in the context of
transductive learning in a statistical machine translation system;
\cite{lin2010-pp-corpus-creation,wei-duc-sum-for-speech-2013,wei2014-unsupervised-icassp}
which develops submodular functions for speech data subset selection
(the former, incidentally, is the first use of a deep submodular
function and the latter does this in an unsupervised label-free
fashion); \cite{sener2018active} which is a form of robust
submodularity for producing coresets for training CNNs;
\cite{kaushal2019learning} which uses a facility location to facilitate
diversity selection in active learning;
\cite{bairi2015summarization,chali2017towards} which develops a
mixture of submodular functions for document summarization where the
mixture coefficients are also included in the hyperparameter set;
\cite{xu2015gaze} uses a symmetrized submodular function for the
purposes of video summarization\submodlongonly{ (see below)}.

\submodlongonly{

We note that some of the above methods are generic
(Fig.~\ref{fig:summarization_taxonomy}-3.2) while others are query
focused (Fig.~\ref{fig:summarization_taxonomy}-3.1) and some are
supervised while others are unsupervised.

One last thought on the by-hand submodular crafting approach.  Like
any empirical endeavor, such as most deep learning research, when
developing a submodular function for the purpose of summarization, it
is important to establish careful, competitive, and state-of-the-art
baselines to ensure that the submodular function being utilized is
working as well as possible. Metrics to compare include both the
summarization quality but also the underlying computational effort
needed to perform the summarization.  }

\submodlongonly{
\subsubsection{Learning a Submodular Function}
\label{sec:learn-subm-funct}
}

The learnability and identifiability of submodular functions has
received a good amount of study from a theoretical perspective. \submodlongonly{The
fundamental question in this setting is how the quality of the
resulting learnt function is judged. Moving from most challenging to most
accommodating, this includes how should the function being learnt
approximate the true submodular function, should the learning take
place in certain subfamilies of submodular functions, and if all
aspects of a target submodular function itself should be learnt or if
it is acceptable to learn only some aspects (e.g., sets with large
value).

}Starting with the strictest learning settings, the problem looks
pretty dire. For
example,~\cite{svitkina2008submodular,goemans2009approximating} shows
that if one is restricted to making a polynomial number of queries
(i.e., training pairs of the form $(S,f(S))$) of a monotone submodular
function, then it is not possible to approximate $f$ with a multiplicative
approximation factor better than $\tilde \Omega(\sqrt{n})$.
In~\cite{balcan2011learning}, goodness is judged multiplicatively,
meaning for a set $A \subseteq V$ we wish that
$\tilde f (A) \leq f(A) \leq g(n)f(A)$ for some function $g(n)$, and
this is typically a probabilistic condition (i.e., measured by
distribution, or
$\tilde f (A) \leq f(A) \leq g(n)f(A)$, should happen on a fraction at
least $1-\beta$ of the points). Alternatively, goodness may also be
measured by an additive approximation error, say by a norm. I.e.,
defining
$\text{err}_p(f,\tilde f) =\| f - \tilde f \|_p = ( E_{A \sim
  \textbf{Pr}} [ {| f(A) - \tilde f(A) |}^p ] )^{1/p}$, we may wish
$\text{err}_p(f,\tilde f) < \epsilon$ for $p =1$ or $p=2$.  In the PAC
(probably approximately correct) model, we probably ($\delta > 0$)
approximately ($\epsilon > 0$ or $g(n) > 1$) learn ($\beta = 0$) with
a sample or algorithmic complexity that depends on $\delta$ and
$g(n)$.  In the PMAC (probably mostly approximately correct)
model~\cite{balcan2011learning}, we also ``mostly'' $\beta > 0$ learn.
In some cases, we wish to learn the best submodular approximation to a
non-submodular function. In other cases, we are allowed to deviate
from submodularity as long as the error is small.  Learning special
cases includes coverage
functions~\cite{pmlr-v35-feldman14a,DBLP:journals/corr/abs-1304-2079},
and low-degree polynomials~\cite{7354435}, curvature limited
functions~\cite{iyer2013-curvature}, functions with a limited
``goal''~\cite{deshpande2014approximation,BACH20181}, functions that
are Fourier sparse~\cite{wendler2020learning}, or that are of a family
called ``juntas''~\cite{feldman2016optimal}, or that come from
families other than submodular~\cite{dozinski2021gross}, and still
others~\cite{10.1145/3055399.3055406,feldman2014nearly,feldman2017tight,feldman2020tight,pmlr-v30-Feldman13,yaroslavtsev_et_al:LIPIcs:2019:11284}.
Other results include that one cannot minimize a submodular function
by learning it first from samples~\cite{balkanski2017minimizing}.  The
essential strategy of learning is to attempt to construct a submodular
function approximation $\hat f$ from an underlying submodular function
$f$ querying the latter only a small number of times. The overall gist
of these results is that it is hard to learn everywhere and
accurately.

In the machine learning community, learning can be performed extremely
efficiently in practice, although there are not the types of
guarantees as one finds above.  For example, given a mixture of
submodular components of the form $f(A) = \sum_i \alpha_i f_i(A)$, if
each $f_i$ is considered fixed, then the learning occurs only over the
mixture coefficients $\alpha_i$. This can be solved as a linear
regression problem where the optimal coefficients can be computed in a
linear regression setting. Alternatively, such functions can be learnt
in a max-margin setting where the goal is primarily to adjust
$\alpha_i$ to ensure that $f(A)$ is large on certain
subsets~\cite{sipos2012large,hui2012-submodular-shells-summarization,sebastian2014-submod-image-sum}.
Even here there are practical challenges, however, since it is in
general hard in practice to obtain a training set of pairs
$\{(S_i,F(S_i))\}_i$. Alternatively, one also ``learn'' a submodular
function in a reinforcement learning
setting~\cite{chen2017interactive} by optimizing the implicit function
directly from gain vectors queried from an environment. In general,
such practical learning algorithms have been used for image
summarization~\cite{sebastian2014-submod-image-sum}, document
summarization~\cite{hui2012-submodular-shells-summarization}, and
video
summarization~\cite{Gygli2015VideoSB,vasudevan2017query,gong2014diverse,sharghi2016query,sharghi2017query}.
While none of these learning approaches claim to approximate some true
underlying submodular function, in practice, they do perform better
than the by-hand crafting of a submodular function mentioned above.

\submodlongonly{
\subsubsection{Submodularity Based CoreSets}
\label{sec:subm-based-cores}
}

By a submodularity based coreset, we mean one where the direct
optimization of a submodular function offers a theoretical guarantee
for some specific problem. This is distinct from
\submodlongonly{those cases in
  Sections~\ref{sec:craft-subm-funct} and~\ref{sec:learn-subm-funct}}
\submodshortonly{above}
where the submodular function is used as a surrogate heuristic
objective function and for which, even if the submodular function is
learnt, optimizing it is only a heuristic for the original problem.
\submodlongonly{Here, the goal is to find a submodular function so that, under its
maximization using, say, the greedy algorithm, we can show an
approximation for some underlying problem.

}In some limited cases, it can be shown that the function we wish to
approximate is already submodular, e.g., in the case of certain naive
Bayes and k-NN classifiers~\cite{wei2015-submodular-data-active} where
the training accuracy, as a function of the training data subset, can
be shown to be submodular. Hence, maximizing this function offers the
same guarantee on the training accuracy as it does on the submodular
function. Unfortunately, the accuracy function for many models is not
submodular, although they do have a difference of
submodular~\cite{narasimhan2005-subsup,iyer2012-min-diff-sub-funcs}
decomposition.

In other cases, it can be shown that certain desirable coreset
objectives are inherently submodular. For example,
in~\cite{mirzasoleiman2020coresets}, it is shown that the normed
difference between the overall gradient (from summing over all samples
in the training data) and an approximate gradient (from summing over
only samples in a summary) can be upper bounded with a supermodular
function that, when converted to a submodular facility location
function and maximized, will select a set that reduces this
difference, and will lead to similar convergence rates to an
approximate optimum solution in the convex case. A similar example of
this in a DPP context is shown in~\cite{tremblay2019determinantal}.
In other cases, subsets of the training data and training occur
simultaneously using a continuous-discrete optimization framework,
where the goal is to minimize the loss on diverse and challenging
samples measured by a submodular objective~\cite{zhou2018minimax}.
In still other cases, bi-level objectives related to but not
guaranteed to be submodular can be formed where a set is selected from
a training set with the deliberate purpose of doing well on a
validation
set~\cite{killamsetty2020glister,borsos2020coresets}. \submodlongonly{Overall, this
area of research is quite nascent as of this writing and it is
believed that there will be many more coreset-producing submodular
showcases in the future.}

\submodlongonly{
\subsubsection{Advantages to Submodularity as a CoreSet Paradigm}

Regardless of how a submodular is produced as an objective to produce
a coreset, there are a number of potential advantages and a few
disadvantages.

First, this approach allows us to focus on the parameter of the
coreset producer, namely the submodular objective $\submodfun$, rather
than the algorithm stays fixed throughout the search. Regardless of
the objective, therefore, one maintains essentially the same
computational cost for a given coreset problem (assuming that
evaluating the candidate objectives also stays the same). Moreover,
once we have found the right $\submodfun$, we immediately have a
guarantee associated with the guarantee provided by a submodular
optimization algorithm.

Secondly, as seen above, there are many ways to partially parameterize
a submodular function in fewer dimensions than the $2^n$ dimensions
associated with the submodular cone.  Submodularity is closed under
convex mixtures, so a coreset objective can be formed as
$f(A) = \sum_i \alpha_i f_i(A)$ a weighted sum of specific objectives.
Alternatively, robust versions can be constructed of the form
$f(A) = \min_i f_i(A)$ --- this later case is not submodular but there
are fast algorithms for maximizing the minimum of a set of submodular
functions available~\cite{krause2008robust,tzoumas2017resilient,orlin2018robust}.
In either case, it also is easy to add modular functions to the mix
to generate individual item costs as well, all while staying
within the same paradigm.

Since a primary algorithm for maximizing a submodular function is
greedy, then prefixes of summaries are still summaries.  This
contrasts with, for example, data distillation methods, when one
wishes a summary of size $k$, one must optimize a summary of size $k$
--- as mentioned in~\cite{nguyen2020dataset,nguyen2021dataset},
selecting a summary of size smaller than $k$ from a distilled data set
of size $k$ is suboptimal. In the submodular approach, however, the
greedy algorithm gives an ordering $s_1, s_2, \dots$ so that a summary
if size $k$ is $S_k$ and is a prefix of the ordering. Hence if you
already have a summary of size $k$ and kept track of the order while
running the algorithm, a summary of size $k’$ for all $k’ < k$ is
immediately available.

Another advantage of the submodular approach is that there are many
types of constraints besides a cardinality constraint that the
resulting summary might need to satisfy and there are simple
modifications to the objective that result in update summarization.
Regarding other constraints, this includes matroid (where the summary
must be an independent of a matroid), intersection of matroids (where
the summary must be independent in multiple matroids), knapsack (where
the summary must not have a cost that exceeds a given budget), and
combinatorial constraints (where the summary might need to be a tree,
a cycle, a cut, or some other combinatorial structure), or
combinations thereof. For all of these types of constraints there are
algorithms that usually have theoretical guarantees and often work
quite well in practice. Regarding objective modifications, defining a
function $g(A) = f(A|B)$ for a given fixed set $B$ preserves
submodularity. If $g(A)$ is then maximized, the result are those items
that are both diverse but not already explained by $B$, which is
precisely the goal of update summarization.  For example, suppose $B$
is a summary of newspaper articles that have already been read, and we
utilize $g(A)$ as the objective over a set of new articles. The result
will be a subset of articles that are both diverse and new relative
those already read articles $B$.  Such an objective modification is
not unrelated to the combinatorial mutual information described
further below.

Several other advantages include that it is relatively easy to produce
random diverse samples from any submodular function, very much like a
DPP. For example, there are methods such as submodular point
processes~\cite{iyer2015-spps} and log-submodular probability
distributions~\cite{djolonga2014map,djolonga2015scalable,djolonga2018provable},
which include DPPs~\cite{kulesza2012determinantal}. Sampling from such
distributions produces subsets that are more likely diverse than
otherwise.  This is often known as negative dependence where similar
items have a lower probability of being seen within the same sample
than dissimilar items.

Of course, another important use for a summary is to reduce the costs
and time associated with data labeling endeavors. For example, given a
large unlabeled and redundant dataset, it is inefficient to label all
of it. Labeling a representative summary and training on that subset,
or using it in a semi-supervised learning procedure, reduces the cost
and time of data labeling, not to mention that the act of labeling a
diverse summary can be less susceptible to the act of labeling
redundant data which can be tiresome. It is just as easy to
instantiate a submodular function over unlabeled data as it is to do
so over labeled data, as has been done
in~\cite{wei2014-unsupervised-icassp} and which can be seen as a form
of single seed-stage or batch active
learning~\cite{wei2015-submodular-data-active}. More formal
relationships between submodular optimization and batch active
learning can be found
in~\cite{guillory2011-active-semisupervised-submodular} where it is
shown that a form of submodular optimization selects a set that
directly reduces an upper bound on the training error in a
semi-supervised context.

}

\submodlongonly{
\subsubsection{Feature Selection}
\label{sec:feature-selection}
}

The methods above have focused on reducing the number of
samples in a training dataset. Considering the transpose of a design
matrix, however, all of the above methods can be used for reducing the
features of a machine learning procedure as well. Specifically, any of
the extractive summarization, subset selection, or coreset methods can
be seen as feature selection while any of the abstract summarization,
sketching, or distillation approaches can be seen as dimensionality
reduction. \submodlongonly{Conversely, any strategy used for feature selection could,
on principle, be used for data subset selection and any strategy used
for dimensionality reduction (such as PCA, LDA, factor analysis, or
even visualization methods such as tSNE or UMAP, or modified versions
thereof due to the different relative sizes of $m$ vs $n$ in the
design matrix) could be used for distillation.}

\subsection{Combinatorial Information Functions}
\label{sec:comb-inform-funct}

The entropy function over a set of random variables
$X_1, X_2, \dots, X_n$ is defined as
$H(X_1, X_2, \dots, X_n) = - \sum_{x_1, x_2, \dots, x_n} p(x_1, \dots,
x_n) \log p(x_1, \dots, x_n)$.  From this we can define three
set-argument conditional mutual information functions as
$I_H(A;B|C) = I(X_A; X_B | X_C)$ where the latter is the mutual
information between variables indexed by $A$ and $B$ given variables
indexed by $C$. This mutual information expresses the residual
information between $X_A$ and $X_B$ that is not explained by their
common information with $X_C$.

As mentioned above, we may view any polymatroid function as a type of
information function over subsets of $V$.  That is, $f(A)$ is the
information in set $A$ --- to the extent that this is true, this
property justifies $f$'s use as a summarization objective as mentioned
above. The reason $f$ may be viewed as an information function stems
from $f$ being normalized, $f$'s non-negativity, $f$'s monotonicity,
and the property that further conditioning reduces valuation (i.e.,
$f(A|B) \geq f(A|B,C)$ which is identical to the submodularity
property). These properties were deemed as essential to the
entropy function in Shannon's original work~\cite{shannon1948} but are true of
any polymatroid function as well.  Hence, given any polymatroid
function $f$, is it possible to define a combinatorial mutual information
function~\cite{iyer2021generalized} in a similar way. Specifically, we
can define the combinatorial (submodular) conditional mutual
information (CCMI) as
$I_f(A;B|C) = f(A + C) + f(B + C) - f(C) - f(A + B + C)$, which has
been known as the connectivity
function~\cite{cunningham1983decomposition} amongst other names.  If
$f$ is the entropy function, then this yields the standard entropic
mutual information but here the mutual information can be defined for
any submodular information measure $f$. For an arbitrary polymatroid
$f$, therefore, $I_f(A;B|C)$ can be seen as an $A,B$ set-pair
similarity score that ignores, neglects, or discounts any common
similarity between the $A,B$ pair that is due to $C$.

Historical use of a special case of CCMI, i.e., $I_f(A;B)$ where
$C = \emptyset$, occurred in a number of circumstances. For example,
in~\cite{guestrin2005near} the function $g(A) = I_f(A;V\setminus A)$ (which,
incidentally, is both symmetric ($g(A) = g(V\setminus A)$ for all $A$) and submodular
was optimized using the
greedy procedure which has a guarantee as long as $g(A)$ is monotone
up $2k$ elements whenever one wishes for a summary of size $k$. This
was done for $f$ being the entropy function, but it can be used for
any polymatroid function. In similar work where $f$ is Shannon
entropy,~\cite{krause2005near} demonstrated that $g_C(A) = I_f(A;C)$
for a fixed set $C$ is not submodular in $A$ but if it is the case
that the elements of $V$ are independent given $C$ then submodularity
is preserved. This can be seen quickly easily by the consequence of
the assumption which states that
$I_f(A;C) = f(A) - f(A|C) = f(A) - \sum_{a \in A} f(a|C)$ where the
second equality is due to the conditional independence property. In
this case, $I_f$ is the difference between a submodular and a modular
function which preserves submodularity for any polymatroid $f$.

On the other hand, it would be useful for $g_{B,C}(A) = I_f(A;B|C)$,
where $B$ and $C$ are fixed, to be possible to optimize in terms of
$A$. One can view this function as one that, when it is maximized,
chooses $A$ to be similar to $B$ in a way that neglects or discounts
any common similarity that $A$ and $B$ have with $C$.  One option to
optimize this function to utilize difference of
submodular~\cite{narasimhan2005-subsup,iyer2012-min-diff-sub-funcs}
optimization as mentioned earlier. A more recent result shows that in
some cases $g_{B,C}(A)$ is still submodular in $A$. Define the second
order partial derivative of a submodular function $f$ as follows
$f(i,j|S) \triangleq f(j|S+i) - f(j|S)$. Then if it is the case that
$f(i,j|S)$ is monotone non-decreasing in $S$ for
$S \subseteq V \setminus \{ i,j \}$ then $I_f(A;B|C)$ is submodular in
$A$ for fixed $B$ and $C$. It may be thought that only esoteric
functions have this property but in fact~\cite{iyer2021generalized}
shows that this is true for a number of widely used submodular
functions in practice, including the facility location function which
results in the form
$I_\submodfun(A;B|C) = \sum_{v \in V} \max\Bigl( \min\bigl( \sum_{a
  \in A} \text{sim}(v,a), \max_{b \in B} \text{sim}(v,b) \bigr) -
\max_{c \in C} \text{sim}(v,c), 0\Bigr)$. This function was
used~\cite{kothawade2021prismaaai} to produce summaries $A$ that were
particularly relevant to a query given by $B$ but that should neglect
information in $C$ that can be considered ``private'' information
to avoid.

\submodlongonly{
Entropic quantities from the field of information
theory such as entropy mutual information, conditional mutual
information, and the Kullback-Leibler divergence have had an enormous
impact on the field of machine learning.  Each one of these measures
involves at least one joint probability distribution over the set of
random variables that are being measured. The submodular combinatorial
measures, however, are not based on an underlying probability
distribution and hence, unlike the entropic queries, are
computationally more tractable. That is, even a single entropic query
requires an exponential cost to exactly compute since the entropy is
sum of an exponential, in the number of random variables, number of
terms. While Gaussian entropy is a special case and can be computed
relatively easily since it is only a log-determinant, there are many
other submodular functions as we've seen that are easier to evaluate
than even log-determinants.
}

\subsection{Clustering, Data Partitioning, and Parallel Machine Learning}
\label{sec:clust-data-part}

There are an almost limited number of clustering algorithms and a
plethora of reviews on their variants. Any given submodular function
can also instantiate a clustering procedure as well, and there are
several ways to do this. Here we offer only a brief outline of the
approach. In the last section, we defined $I_f(A;V\setminus A)$ as the
CCMI between $A$ and everything but $A$. When we view this as a
function of $A$, then $g(A) = I_f(A;V\setminus A)$ and $g(A)$ is a
symmetric submodular function that can be minimized using Queyranne's
algorithm~\cite{queyranne98,nagamochi1992computing}. Once this is
done, the resulting $A$ is such that it is least similar to
$V\setminus A$, according to $I_f(A;V\setminus A)$ and hence forms a
2-clustering. This process can then be recursively applied where we
form two new functions $g_A(B) = I_f(B;A\setminus B)$ for
$B \subseteq A$ and
$g_{V \setminus A}(B) = I_f(B; (V \setminus A) \setminus B)$ for
$B \subseteq V \setminus A$. These are two symmetric submodular
functions on different ground sets that also can be minimized using
Queyranne's algorithm. This recursive bisection algorithm then repeats
until the desired number of clusters is formed. Hence, the CCMI
function can be used as a top-down recursive bisection clustering
procedure and has been called
Q-clustering~\cite{narasimhan2005q,bilmes2006-dbntri-tr}. It should be
noted that such forms of clustering often generalize forming a
multiway cut in an undirected graph in which case the objective
becomes the graph-cut function that, as we saw above, is also
submodular. In some cases, the number of clusters need not be
specified in advance~\cite{nagano2010minimum}.
Another submodular approach to clustering \submodshortonly{can be
  found in~\cite{kaiwei2015nips_submod_partitioning}}
\submodlongonly{has already been seen in
  Equation~\ref{prob:minmax-genmix}} where the goal is to minimize the
maximum valued block in a partitioning which can lead to submodular
load balancing or minimum makespan
scheduling~\cite{hochbaum1988polynomial,lenstra1990approximation}\submodlongonly{ as
we have seen}.  \submodlongonly{Also, in this light submodular fair allocation or
submodular welfare can themselves be seen as a form of
anti-clustering, where each block of the partition should be diverse
rather than homogeneous.}

Yet another form of clustering can be seen via the simple cardinality
constrained submodular maximization process itself which can be
compared to a $k$-medoids process whenever the objective $f$ is the
facility location function. Hence, any such submodular function can be
seen as a submodular-function-parameterized form of finding the $k$
``centers'' among a set of data items. There have been numerous
applications of submodular clustering. For example, using these
techniques it is possible to identify parcellations of the human
brain~\cite{10.1007/978-3-319-66182-7_55}.  Other applications include
partitioning data for more effective and accurate and lower variance
distributed machine learning
training~\cite{kaiwei2015nipsparallelworkshop} and also for more ideal
mini-batch construction for training deep neural
networks~\cite{pmlr-v89-wang19e}.

\subsection{Active and Semi-Supervised Learning}
\label{sec:curr-active-semi}

\submodlongonly{
Active
learning~\cite{atlas1990training,cohn1994improving,Cohn96,settles2009active}
is a learning setting where it is acknowledged that acquiring the
labels for samples is a more challenging prospect than acquiring just
the unlabeled samples which can be found in abundance. Rather than
impulsively and passively labeling all of your data, active learning
algorithms involve iteratively and actively asking the model, as it is
being learnt, which samples are most likely to be usefully
labeled. This can be done in a theoretical context, where the query is
done to maximally reduce a version space of possible model
hypotheses~\cite{cohn1994improving,dasgupta2005analysis} that agree
with the labels queried so far. Active learning also has many
empirical and practical variants where the still unlabeled samples for
which the model possesses the most uncertainty are queried first in
order to reduce residual uncertainty about the remaining unlabeled
samples. The critical difference between active learning is that,
unlike passive learning where labels are acquired for samples
regardless of how useful they might be, with active learning labels
are acquired adaptively, where the next label acquired is based on the
labels so far acquired, and therefore active learning has the
potential to be more label efficient (i.e., more information per
label) when less than 100\% of the samples are labeled.
}

Suppose we are given data set $\{ x_i, y_i\}_{i \in V}$ consisting of $|V|=n$
samples of $x,y$ pairs but where the labels are unknown.  Samples are
labeled one at a time or one mini-batch at a time, and after each
labeling step $t$ each remaining unlabeled sample is given a score
$s_t(x_i)$ that indicates the potential benefit of acquiring a label
for that sample. Examples include the entropy of the model's output
distribution on $x_i$, or a margin-based score consisting of the
difference between the top and the second-from-the-top posterior
probability.
This produces a modular function
on the unlabeled samples, $m_t(A) = \sum_{a \in A} s(x_a)$ where
$A \subseteq V$. It is simple to use this modular function to produce
a mini-batch active learning procedure where at each stage we form
$A_t \in \argmax_{A \subseteq U_t: |A| = k}m_t(A)$ where $U_t$ is the
set of labeled samples at stage $t$. Then $A_t$ is a set of size $k$
that gets labeled, we form $U_t = U_t \setminus A_t$, update $s_t(a)$
for $a \in U_t$ and repeat.
This is called \keywordDef{active learning}.

The reason for using active learning with
mini-batches of size greater than one is that it is often inefficient
to ask for a single label at a time. The problem with such a minibatch
strategy, however, is that the set $A_t$ can be redundant. The reason
is that the uncertainty about every sample in $A_t$ could be owing to
the same underlying cause --- even though the model is most uncertain
about samples in $A_t$, once one sample in $A_t$ is labeled, it may
not be optimal to label the remaining samples in $A_t$ due to this
redundancy.  Utilizing submodularity, therefore, can help reduce this
redundancy.  Suppose $f_t(A)$ is a submodular diversity model over
samples at step $t$. At each stage, choosing the set of samples to
label becomes
$A_t \in \argmax_{A \subseteq U_t: |A| = k}m_t(A) + f_t(A)$ --- $A_t$
is selected based on a combination of both uncertainty (via $m_t(A)$)
and diversity (via $f_t(A)$). This is precisely the submodular active
learning approach taken
in~\cite{wei2015-submodular-data-active,kaushal2019learning}.  \submodlongonly{There
is other work that models uncertainty and diversity in different
ways~\cite{Sener2018,Kirsch2019,ash2020deep}.}

Another quite different approach to a form of submodular ``batch''
active learning setting where a batch $L$ of labeled samples are
selected all at once and then used to label the rest of the unlabeled
samples. This also allows the remaining unlabeled samples to be
utilized in a semi-supervised
framework~\cite{guillory2009-label-selection,guillory2011-active-semisupervised-submodular}.
In this setting, we start with a graph $G=(V,E)$ where the nodes $V$
need to be given a binary $\{0,1\}$-valued label, $y \in \{0,1\}^V$.
For any $A \subseteq V$ let $y_A \in \{0,1\}^A$ be the labels just for
node set $A$. We also define $V(y) \subseteq V$ as
$V(y) = \{ v \in V : y_v = 1 \}$. Hence $V(y)$ are the graph nodes
labeled 1 by $y$ and $V \setminus V(y)$ are the nodes labeled 0.
Given submodular objective $f$, we form its symmetric CCMI
variant $I_f(A) \triangleq I_f(A ; V \setminus A)$ --- note that
$I_f(A)$ is always submodular in $A$. This allows $I_f(V(y))$ to
determine the ``smoothness'' of a given candidate labeling $y$. For
example, if $I_f$ is the weighted graph cut function where each weight
corresponds to an affinity between the corresponding two nodes, then
$I_f(V(y))$ would be small if $V(y)$ (the 1-labeled nodes) do not have
strong affinity with $V \setminus V(y)$ (the 0-labeled nodes). In
general, however, $I_f$ can be any symmetric submodular function.  Let
$L \subseteq V$ be any candidate set of nodes to be labeled, and
define
$\Psi(L) \triangleq \min_{T \subseteq (V \setminus L): T \neq
  \emptyset} I_f(T)/|T|$.  Then $\Psi(L)$ measures the ``strength'' of
$L$ in that if $\Psi(L)$ is small, an adversary can label nodes other
than $L$ without being too unsmooth according to $I_f$, while if
$\Psi(L)$ is large, an adversary can do no such
thing. Then~\cite{guillory2011-active-semisupervised-submodular}
showed that given a node set $L$ to be queried, and the corresponding
correct labels $y_L$ that are completed (in a semi-supervised fashion)
according to the following
$y' = \argmin_{ \hat y \in \{0,1\}^V : \hat y_L= y_L} I_f(V(\hat y))$,
then this results in the following bound on the true labeling
$\| y - y'\|^2 \leq 2 I_f(V(y))/\Psi(L)$ suggesting that we can find a
good set to query by maximizing $L$ in $\Psi(L)$, and this holds for
any submodular function. Of course, it is necessary to find
an underlying submodular function $f$ that fits a given problem,
and this is discussed in Section~\ref{sec:core-sets-summ}.

\subsection{Probabilistic Modeling}
\label{sec:prob-model}

Graphical models are often used to describe factorization requirements on families
of probability distributions. \submodlongonly{For example, given a graph $G=(V,E)$
where the graph can be described by a set of cliques $\mathcal C(G)$
where for each $C \in \mathcal C(G)$ the nodes are mutually connected,
it is possible to write
$p(x) = \prod_{C \in \mathcal C(G)} \phi_C( x_C) = 1/Z \exp( \sum_{C
  \in \mathcal C(G)} -E_C(x_C))$ where
$E(x) = \sum_{C \in \mathcal C(G)} E_C(x_C)$ is a non-negative energy function
and $Z$ the partition function and where, for simplicity in this
section, we assume $x \in \{0,1\}^V$ is a binary vector.  Being able
to discuss families of distributions all of which have this
factorization property makes it possible to derive inference
algorithms that work well (or approximately well) for any distribution
that factors in this way according to the graph. A well-known
complexity parameter associated with a graphical model is the
tree-width. Performing exact probabilistic inference in a distribution
that factors with respect to a graphical model has a cost that is, in
the worst case, exponential in the treewidth of the corresponding
graph. This is even true for MAP inference, i.e., computing
$\argmax_{x} p(x)$ can be exponentially costly in the tree width of
the graph within whose family $p$ resides. There is a plethora of
algorithms for approximate inference, many of which make
additional factorization assumptions in order to achieve a given
approximation~\cite{Monster}.

}Factorization is not the only way, however, to describe restrictions
on such families. In a graphical model, graphs describe only which
random variable may directly interact with other random variables.  An
entirely different strategy for producing families of often-tractable
probabilistic models can be produced without requiring any
factorization property at all.  Considering an energy function $E(x)$
where $p(x) \propto \exp(E(x))$, factorizations correspond to there
being cliques in the graph such that the graph's tree-width often is
limited. On the other hand, finding $\max_x p(x)$ is the same as
finding $\min_x E(x)$, something that can be done if $E(x) = f(V(x))$
is a submodular function (using the earlier used notation $V(x)$ to
map from binary vectors to subsets of $V$). Even a submodular function
as simple as $f(A) = \sqrt{|A|} - m(A)$ where $m$ is modular has
tree-width of $n-1$, and this leads to an energy function $E(x)$ that
allows $\max_x p(x)$ to be solved in polynomial time using submodular
function minimization (see Section~\ref{sec:subm-minim}). Such
restrictions to $E(x)$ therefore are not of the form {\em amongst the
  random variables, who is allowed to directly interact with whom},
but rather {\em amongst the random variables, what is the manner that
  they interact}. Such potential function restrictions can also
combine with direct interaction restrictions as well and this has been
widely used in computer vision, leading to cases where graph-cut and
graph-cut like ``move making'' algorithms (such as $alpha$-$beta$ swap
and $alpha$-expansion algorithms) used in attractive
models~\cite{Boykov99,Boykov01a,Boykov01,sudderth2008loop}.  In fact,
the culmination of these efforts~\cite{kolmogorov2002energy} lead to a
rediscovery of the submodularity (or the ``regular'' property) as
being the essential ingredient for when Markov random fields can be
solved using graph cut minimization, which is a special case of
submodular function minimization.

The above model can be seen as log-supermodular since
$\log p(x) = -E(x) + \log 1/Z$ is a supermodular function. These are
all distributions that put high probability on configurations that
yield small valuation by a submodular function. Therefore, these
distributions have high probability when $x$ consists of a homogeneous
and for this reason they are useful for computer vision segmentation
problems (e.g., in a segment of an image, the nearby pixels should
roughly be homogeneous as that is often what defines an object).  The
DPPs we saw above, however, are an example of a log-submodular
probability distribution since
$\submodfun(\submodsetX) = \log \det ( \submodDPPmatrix_\submodsetX )$
is submodular. These models have high probability for diverse sets.

More generally, $E(x)$ being either a submodular or supermodular
function can produce log-submodular or log-supermodular distributions,
covering both cases above where the partition function takes the form
$Z = \sum_{A \subseteq V} \exp(f(A))$ for objective $f$.  Moreover, we
often wish to perform tasks much more than just finding the most
probable random variable assignments. This includes marginalization,
computing the partition function, constrained maximization, and so
on. Unfortunately, many of these more general probabilistic inference
problems do not have polynomial time solutions even though the
objectives are submodular or supermodular. On the other hand, such
structure has opened the doors to an assortment of new probabilistic
inference procedures that exploit this
structure~\cite{djolonga2014map,djolonga2015scalable,djolonga2016variational,zhang2015higher,djolonga2018provable}.
Most of these methods were of the variational sort and offered bounds
on the partition function $Z$, sometimes making use of the fact that
submodular functions have easily computable
semi-gradients~\cite{rishabhbilmes_semidifferentials_arxiv2015,fujishige2005submodular}
which are modular upper and lower bounds on a submodular or
supermodular function that are tight at one or more subsets. Given a
submodular (or supermodular) function $f$ and a set $A$, it is
possible to easily construct (in linear time) a modular function
upper bound $m^A : 2^V \to \real$ and a modular function lower bound
$m_A : 2^V \to \real$ having the properties that
$m_A(X) \leq f(X) \leq m^A(X)$ for all $X \subseteq V$ and that is
tight at $X=A$ meaning
$m_A(A) = f(A) =
m^A(A)$~\cite{rishabhbilmes_semidifferentials_arxiv2015}.  For any
modular function $m$, the probability function for a characteristic
vector $x = \mathbf 1_A$ becomes
$p(\submodcharv_A) = 1/Z \exp(E(\submodcharv_A)) = \prod_{a \in A}
\sigma(m(a)) \prod_{a \notin A} \sigma(-m(a))$ where $\sigma$ is the
logistic function. Thus, a modular approximation of a submodular
function is like a mean-field approximation of the distribution and
makes the assumption that all random variables are independent. Such
an approximation can then be used to compute quantities such as upper
and lower bounds on the partition function, and much else.

\subsection{Structured Norms and Loss Functions}
\label{sec:struct-norms-loss}

Convex norms are used ubiquitously in machine learning, often as
complexity penalizing regularizers (e.g., the ubiquitous $p$-norms for
$p \geq 1$) and also sometimes as losses (e.g., squared error).
Identifying new useful structured and possibly learnable sparse norms
is an interesting and useful endeavor, and submodularity can help here
as well. Firstly, recall the $\ell_0$ or counting norm $\| x \|_0$
simply counts the number of nonzero entries in $x$. When we wish for a
sparse solution, we may wish to regularize using $\| x\|_0$ but it
both leads to an intractable combinatorial optimization problem, and it
leads to an object that is not differentiable. The usual approach is
to find the closest convex relaxation of this norm and that is the one
norm or $\| x \|_1$. This is convex in $x$ and has a sub-gradient
structure and hence can be combined with a loss function to produce an
optimizable machine learning objective, for example the lasso. On the
other hand, $\| x \|_1$ has no structure, as each element of $x$ is
penalized based on its absolute value irrespective of the state of any
of the other elements. There have thus been efforts to develop group
norms that penalize groups or subsets of elements of $x$ together,
such as group lasso~\cite{HastieSparse}. 

It turns out that there is a way to utilize a submodular function as
the regularizer. Penalizing $x$ via $\| x\|_0$ is identical to
penalizing it via $|V(x)|$ and note that $m(A)=|A|$ is a modular
function. Instead, we could penalize $x$ via $f(V(x))$ for a
submodular function $f$. Here, any element of $x$ being non-zero would
allow for a diminishing penalty of other elements of $x$ being zero
all according to the submodular function, and such cooperative
penalties can be obtained via a submodular parameterization.  Like
when using the zero-norm $\| x \|_0$, this leads to the same
combinatorial problem due to continuous optimization of $x$ with a
penalty term of the form $f(V(x))$.  To address this, we can use the
\lovasz{} extension \submodlongonly{mentioned above }
$\submodlex \submodfun(\submodvecx)$ on a vector $\submodvecx$. This
function is convex, but it is not a norm, but if we consider the
construct defined as
$\|\submodvecx\|_\submodfun = \submodlex \submodfun(|\submodvecx|)$,
it can be shown that this satisfies all the properties of a norm for
all non-trivial submodular
functions~\cite{popescuimage1999,bach2013learning} (i.e., those
normalized submodular functions for which $f(v) > 0$ for all $v$).  In
fact, the group lasso mentioned above is a special case for a
particularly simple feature-based submodular function (a sum of
min-truncated cardinality functions). But in principle, the same
submodular design strategies mentioned in
Section~\ref{sec:core-sets-summ} can be used to produce a submodular
function to instantiate an appropriate convex structured norm for a
given machine learning problem.

\submodlongshortalt{
  \section{Conclusions}
}{
  \subsection{Conclusions}
}

\label{sec:conclusions}

\submodlongonly{

  We have only just touched the surface of submodularity and how it
  can benefit machine learning.  From data summarization to clustering
  and active learning to parameterized sparse norms to data
  partitioning, submodularity and the discrete optimization it entails
  opens up the machine learning scientist's toolbox of ideas and
  techniques in a broad way. While the above offers only a glimpse of
  its expressiveness, it is interesting to once again revisit the
  innocuous looking submodular inequality. Very much like the
  definition of convexity, therefore, the submodular inequality belies
  much of its complexity while opening the gates to wide and fruitful
  avenues for machine learning exploration.

  Acknowledgements: This work was supported in part by the CONIX
  Research Center, one of six centers in JUMP, a Semiconductor
  Research Corporation (SRC) program sponsored by DARPA.  This
  material is based upon work supported by the National Science
  Foundation under Grant No. 2106389.

}
\submodshortonly{

  We have only barely touched the surface of submodularity and how it
  applies to and can benefit machine learning. For more details,
  see~\cite{bilmes-submod-and-ml-2022} and the many references
  contained therein.  Considering once again the innocuous looking
  submodular inequality, then very much like the definition of convexity,
  we observe something that belies much of its complexity
  while opening the gates to wide and worthwhile avenues for machine
  learning exploration.

}

\newpage
\printbibliography

\end{document}